\newtheorem{lemma}{Lemma}[section]
\newtheorem{assumption}{Assumption}[section]
\newtheorem{definition}{Definition}[section]
\newtheorem{theorem}{Theorem}
\newtheorem{remark}{Remark}[section]
\newtheorem{assumption}{Assumption}
\newcommand{\cD}{{\mathcal{D}}}
\newcommand{\cF}{{\mathcal{F}}}
\newcommand{\cX}{{\mathcal{X}}}
\newcommand{\cZ}{{\mathcal{Z}}}
\newcommand{\cA}{{\mathcal{A}}}
\newcommand{\cB}{{\mathcal{B}}}
\newcommand{\cE}{{\mathcal{E}}}
\newcommand{\cG}{{\mathcal{G}}}
\newcommand{\cS}{{\mathcal{S}}}
\newcommand{\cT}{{\mathcal{T}}}
\newcommand{\cO}{\mathcal{O}}
\newcommand{\hcF}{\widehat{\mathcal{F}}}
\newcommand{\rR}{{\mathbb{R}}}
\newcommand{\E}{{\mathbb{E}}}
\newcommand{\Pb}{{\mathbb{P}}}
\newcommand{\reg}{\mathrm{Regret}}
\newcommand{\hf}{\hat{f}}
\newcommand{\hw}{\hat{w}}
\newcommand{\bff}{\bar{f}}
\newcommand{\tcO}{\tilde{\cO}}
\newcommand{\bcF}{\bar{\cF}}
\newcommand{\beps}{\Bar{\epsilon}}
\def\##1\#{\begin{align}#1\end{align}}
\def\$#1\${\begin{align*}#1\end{align*}}
\newcommand{\op}{\mathrm{op}}
\newcommand{\argmin}{\mathop{\mathrm{argmin}}}
\newcommand{\argmax}{\mathop{\mathrm{argmax}}}
\newcommand{\subopt}{\mathrm{SubOpt}}
\def\CC{\mathrm{CC}}
\def\Var{\text{Var}}
\title{Corruption-Robust Offline Reinforcement Learning with General Function Approximation}
\author{
    Chenlu Ye$^*$\\
    The Hong Kong University of Science and Technology\\
    cyeab@connect.ust.hk\\
    \And
    Rui Yang$^*$\\
    The Hong Kong University of Science and Technology\\
    ryangam@connect.ust.hk\\
    \And
    Quanquan Gu\\
    University of California, Los Angeles\\
    qgu@cs.ucla.edu\\
    \And
    Tong Zhang\\
    The Hong Kong University of Science and Technology\\
    tongzhang@ust.hk
}
\begin{document}

\maketitle
\def\thefootnote{*}\footnotetext{These authors contributed equally to this work.}\def\thefootnote{\arabic{footnote}}

\begin{abstract}
We investigate the problem of corruption robustness in offline reinforcement learning (RL) with general function approximation, where an adversary can corrupt each sample in the offline dataset, and the corruption level $\zeta\geq0$ quantifies the cumulative corruption amount over $n$ episodes and $H$ steps. Our goal is to find a policy that is robust to such corruption and minimizes the suboptimality gap with respect to the optimal policy for the uncorrupted Markov decision processes (MDPs). Drawing inspiration from the uncertainty-weighting technique from the robust online RL setting \citep{he2022nearly,ye2022corruptionrobust}, we design a new uncertainty weight iteration procedure to efficiently compute on batched samples and propose a corruption-robust algorithm for offline RL. Notably, under the assumption of single policy coverage and the knowledge of $\zeta$, our proposed algorithm achieves a suboptimality bound that is worsened by an additive factor of $\mathcal{O}(\zeta (C(\widehat{\mathcal{F}},\mu)n)^{-1})$ due to the corruption. Here $\widehat{\mathcal{F}}$ is the confidence set, and the dataset $\mathcal{Z}_n^H$, and $C(\widehat{\mathcal{F}},\mu)$ is a coefficient that depends on $\widehat{\mathcal{F}}$ and the underlying data distribution $\mu$. When specialized to linear MDPs, the corruption-dependent error term reduces to $\mathcal{O}(\zeta d n^{-1})$ with $d$ being the dimension of the feature map, which matches the existing lower bound for corrupted linear MDPs. This suggests that our analysis is tight in terms of the corruption-dependent term.
\end{abstract}

\section{Introduction}
Offline reinforcement learning (RL) has received tremendous attention recently because it can tackle the limitations of online RL in real-world applications, e.g., healthcare \citep{wang2018supervised} and autonomous driving \citep{pan2017agile}, where collecting online data is risky, expensive and even infeasible. By leveraging a batch of pre-collected datasets, offline RL aims to find the optimal policy that is covered by the dataset without further interaction with the environment. Due to the restriction of the offline dataset, the utilization of pessimism in the face of uncertainty is widespread 
\citep{an2021uncertainty,bai2022pessimistic,yangrorl,sun2022exploit,ghasemipour2022so} and plays a central role in providing theoretical guarantees for efficient learning \citep{jin2021pessimism,rashidinejad2021bridging,uehara2021pessimistic,wei2022model,xie2021bellman,yin2021towards,zanette2021provable,zhong2022pessimistic,xiong2022nearly}. Notably, these theoretical works demonstrate that a single-policy coverage is sufficient to guarantee sample efficiency.

 In this paper, we study offline RL under adversarial corruption and with general function approximation. Adversarial corruption refers to adversarial attacks on the reward functions and transition dynamics on the data at each step before the learner can access the dataset.
The learner only knows the cumulative corruption level and cannot tell whether the corruption occurs at each data point. Our corruption formulation subsumes the model misspecification \citep{jin2020provably} and the fixed fraction of data contamination \citep{zhang2022corruption} as special cases. Various real-world problems are under the threat of adversarial corruption, such as chatbots misled by discriminative or unethical conversations \citep{neff2016talking,zhang2020adaptive}, and autonomous vehicles tricked by hacked navigation instructions or deliberately contaminated traffic signs \citep{eykholt2018robust}. On the other hand, general function approximation (approximating the value function with a nonlinear function class, such as deep neural networks) plays a pivotal role in modern large-scale RL problems, such as large language model \citep{du2023guiding}, robotics \citep{kober2013reinforcement} and medical treatment \citep{liu2017deep}. Recent works have established different frameworks to explore the minimal structure condition for the function class that enables sample efficiency \citep{jiang2017contextual,wang2020reinforcement,du2021bilinear,jin2021bellman,foster2021statistical,chen2022general,zhong2022posterior}. In particular, \citet{wang2020reinforcement} leverage the concept of eluder dimension \citep{russo2013eluder} and construct the least squares value iteration (LSVI) framework, which establishes optimism at each step for online RL.

For adversarial corruption and general function approximation, a significant amount of research has focused on the online setting. However, offline RL in this setting is still understudied due to restricted coverage conditions and lack of adaptivity. One notable exception is \citet{zhang2022corruption}, which assumes $\epsilon$-fraction of the offline dataset is corrupted, and their algorithm suffers from a suboptimal bound on the corruption term. Our work moves a step further and achieves corruption robustness under the LSVI framework \citep{wang2020reinforcement} in offline RL with general function approximation by generalizing the uncertainty weighting technique \citep{he2022nearly,ye2022corruptionrobust}. We also propose an algorithm robust to an additional known distribution shift. Due to space limit, we defer it to Appendix \ref{s:ds}. We summarize our contributions as follows:
\begin{itemize}[leftmargin=*]
    \item We formally define the corruption level in offline RL. With knowledge of the corruption level, we design an algorithm that draws inspiration from the PEVI algorithm \citep{jin2021pessimism} and the uncertainty-weighting technique. The uncertainty for each data point, serving as the bonus function, is quantified by its informativeness with respect to the whole dataset. We propose the uncertainty weight iteration algorithm to calculate the weights efficiently and prove that the output, an approximation of the uncertainty, is sufficient to control the corruption term.
    \item Theoretically, our proposed algorithm enjoys a suboptimality bound of $\tcO(H(\ln N)^{1/2}(C(\hcF,\mu)n)^{-1/2} + \zeta(C(\hcF,\mu)n)^{-1})$, where $H$ is the episode length, $n$ is the number of episodes, $N$ is the covering number and $C(\hcF,\mu)$ is the coefficient depicting how well the underlying data distribution $\mu$ explores the feature space, and $\hcF$ is the confidence set. The corruption-dependent term reduces to $\cO(\zeta dn^{-1})$ in the linear model of dimension $d$, thus matching the lower bound for corrupted linear MDPs. It is worth highlighting that our novel analysis enables us to eliminate the uncertainty-related weights from the coverage condition.
    \item Motivated by our theoretical findings, we present a practical offline RL algorithm with uncertainty weighting and demonstrate its efficacy under diverse data corruption scenarios. Our practical implementation achieves a $104\%$ improvement over the previous state-of-the-art uncertainty-based offline RL algorithm under data corruption, demonstrating its potential for effective deployment in real-world applications.
\end{itemize}

\subsection{Related Work}

\noindent \textbf{Corruption-Robust Bandits and RL.}
There is an emerging body of theoretical literature on bandits and online RL with corruption. The adversarial corruption is first formulated in the multi-armed bandit problem by \citet{lykouris2018stochastic}, where an adversary corrupts the reward in each round $t$ by $\zeta_t$ and the corruption level is measured by $\zeta=\sum_{t=1}^T|\zeta_t|$. Then, a lower bound with a linear dependence on $\zeta$ is constructed by \citet{gupta2019better}, indicating that the ideal regret bound should achieve a ``parallel'' relationship: $\reg(T) = o(T) + \cO(\zeta)$, and the corruption-independent term approximates the non-corrupted bound. When extending to linear contextual bandits, a line of work \citep{bogunovic2021stochastic,ding2022robust,foster2020adapting,lee2021achieving,zhao2021linear,kang2023robust} propose various methods but either derive sub-optimal regrets or require particular assumptions. The gap is later closed by \citet{he2022nearly}, which achieves the minimax lower bound using a novel sample-dependent weighting technique. Specifically, the weight for each sample is adaptive to its confidence, which is also called uncertainty. Beyond bandits, earlier works on MDPs \citep{chen2021finding, jin2020learning, jin2020simultaneously,luo2021policy,neu2010online, rosenberg2019online, rosenberg2020stochastic} consider the setting where only the rewards are corrupted, and the transitions remain intact. \citet{wu2021reinforcement} begin to handle corruption on both rewards and transitions for tabular MDPs. \citet{wei2022model} establish a unified framework for RL with unknown corruption under a weak adversary, where the corruption happens before the decision is made in each round. Later, \citet{ye2022corruptionrobust} extend the weighting technique \citep{he2022nearly} to corrupted RL with general function approximation and achieve a linear dependence on the cumulative corruption level $\zeta$. Particularly, \citet{wei2022model,ye2022corruptionrobust} both impose corruption on the Bellman operator, which is the same as the corruption model considered in this paper.

\noindent \textbf{Offline RL Against Attacks.} 
The emergence of poisoning attacks in real-world scenarios poses new challenges for offline RL and necessitates improved defenses \cite{wucopa}. There are generally two types of attacks \cite{behzadan2018mitigation}, namely test-time attacks and training-time attacks. In test-time attacks, the training data is clean, and the learned policy must contend with an attacker during test time. For example, \citet{yangrorl} propose learning conservative and smooth policies robust to different test-time attacks. In contrast, our paper focuses on the training-time attack as another line of work \cite{ma2019policy,wucopa,zhang2022corruption}, where part of the training data is corrupted maliciously. \citet{wucopa} propose two certification criteria and a new aggregation-based method to improve the learned policy from corrupted data. To the best of our knowledge, \cite{zhang2022corruption} is the only theoretical work on corrupted offline RL, which considers that an $\epsilon$-fraction ($\epsilon=\zeta/nH$) of samples are corrupted on both rewards and transitions for linear MDPs and achieves an $\cO(\zeta^{1/2}dn^{-1/2})$ suboptimality bound. Notably, distinct from the setting in \citet{zhang2022corruption} that clean data is first collected and then corrupted by an adversary, we consider the setting that data collection and corruption occur at the same time (thus corruption at one step affects the subsequent trajectory). Therefore, our setting is different from that of \citet{zhang2022corruption}. 
%Their proposed algorithm achieves an $\cO(\zeta^{1/2}dn^{-1/2})$ bound on the corruption-related suboptimality, in contrast to our $\cO(\zeta d n^{-1})$ bound when specialized to linear models. In this way, the model of \citet{zhang2022corruption} is encompassed by our framework, which studies offline RL under general function approximation and with cumulative corruption level.

% \citet{zhang2022corruption} theoretically prove that offline RL with corruption is more challenging than the online setting and propose a Robust LSVI algorithm that achieves an optimality gap in the order of $\mathcal{O}(d^{3/2}/\sqrt{N})$ for linear MDPs. Differently, we provide theoretical results for corruption-robust offline RL with general function approximation and validate our theory with empirical results. 

% Despite the comprehensive investigation of online RL settings, the corrupted offline RL is largely under-explored. To the best of our knowledge, \citet{zhang2022corruption} is the only theoretical work that considers offline linear MDP with a $\epsilon$-fraction of samples suffering from corruption (on rewards and transitions). Distinct from \citet{zhang2022corruption}, we study offline RL under general function approximation and with cumulative corruption level $\zeta=\sum_{i,h=1}^{n,H}|\zeta_i^h|$, where $\zeta_i^h$ is the corruption on the Bellman operator at each round $i$ and step $h$.

\section{Preliminaries}
In this section, we formulate the episodic Markov decision process (MDP) with adversarial corruption and under general (nonlinear) function approximation. Before the formal introduction, we introduce some notations to facilitate our presentation.

\noindent \textbf{Notations.} Let $[n]$ denote the set $\{1,\ldots,n\}$. For spaces $\cX$ and $\cA$ and a function $f:\cX\times\cA\rightarrow \mathbb{R}$, let $f(x)=\max_{a\in\cA}f(x,a)$. Given a semi-definite matrix $M$ and a vector $v$, we define $\|v\|_M=\sqrt{v^\top M v}$. For two positive sequences $\{f(n)\}_{n=1}^\infty$, $\{g(n)\}_{n=1}^\infty$,
let $f(n)=\cO(g(n))$ if there exists a constant $C > 0$ such that $f(n)\le Cg(n)$ for all $n \ge 1$, and $f(n)=\Omega(g(n))$ if there exists a constant $C > 0$ such that $f(n)\ge Cg(n)$ for all $n \ge 1$. We use $\tcO(\cdot)$ to omit polylogarithmic factors. Sometimes we use the shorthand notation $z = (x,a)$.

\subsection{Episodic MDPs}
We consider an episodic MDP $(\cX,\cA,H,\Pb,r)$ with the state space $\cX$, action space $\cA$, episode length $H$, transition
kernel $\Pb = \{\Pb^h\}_{h \in [H]}$, and reward function $r = \{r^h\}_{h \in [H]}$. Suppose that the rewards are bounded: $r^h\ge0$ for any $h\in[H]$, and $\sum_{h=1}^H r^h(x^h,a^h) \leq 1$ almost surely. Given any policy $\pi=\{\pi^h:\cX\rightarrow\cA\}_{h \in [H]}$, we define the Q-value and V-value functions starting from step $h$ as
\begin{equation}
\begin{aligned}
Q^h_{\pi}(x^h,a^h)&=\sum_{h'=h}^H \E_{\pi}\big[r^{h'}(x^{h'},a^{h'})\,|\, x^h,a^h\big],\quad V^h_{\pi}(x^h)&=\sum_{h'=h}^H \E_{\pi}\big[r^{h'}(x^{h'},a^{h'})\,|\,x^h\big].
\end{aligned}
\end{equation}
where the expectation $\E_{\pi}$ is taken with respect to the trajectory under the policy $\pi$. There exists an optimal policy $\pi_*$ and optimal value functions $V_*^h(x):=V^h_{\pi^*}(x)=\sup_{\pi} V_\pi^h(x)$ and $Q_*^h(x,a) := Q_{\pi^*}^h(x,a)= \sup_{\pi} Q_\pi^h(x,a)$ that satisfy the Bellman optimality equation:
\#\label{eqn:bellman_opt}
Q_*^h(x,a) = \E_{r^h,x^{h+1}}\big[r^h(s,a) + \max_{a'\in\cA}Q_*^{h+1}(x^{h+1},a') \,|\, x,a\big] := (\cT^h Q_*^{h+1})(x,a),
\#
where $\cT^h$ is called the Bellman operator. Then we define the Bellman residual as
\#
\cE^h(f,x^h,a^h)=f^h(x^h,a^h)-(\cT^h f^{h+1})(x^h,a^h).
\#

\subsection{General Function Approximation}
We approximate the Q-value functions by a function class $\cF = \cF_1 \times \cdots \times\cF_H$ where $\cF_h:\cX\times\cA\rightarrow[0,1]$ for $h \in [H]$, and $f_{H+1} \equiv 0$ since no reward is generated at step $H+1$. Generally, the following assumption is common for the approximation function class.
\begin{assumption}[Realizability and Completeness]\label{as:realizability and completeness}
For all $h\in[H]$, $Q_*^h\in\cF^h$. Additionally, for all $g^{h+1}(x^{h+1})\in[0,1]$, $(\cT^h g^{h+1})(x^h,a^h)\in\cF^h$. 
\end{assumption}
The realizability assumption \citep{jin2021bellman} ensures the possibility of learning the true Q-value function by considering the function class $\cF$. The Bellman completeness (adopted from \citet{wang2020reinforcement} and Assumption 18.22 of \citet{TZ23-lt}) is stronger than that in \citet{jin2021bellman}. The former applies the least squares value iteration (LSVI) algorithm that establishes optimism at each step, while the latter proposes the GOLF algorithm that only establishes optimism at the first step. We use the standard covering number to depict the scale of the function class $\cF$.
\begin{definition}[$\epsilon$-Covering Number]\label{df:Covering Number}
The $\epsilon$-covering number $N(\epsilon,\cF,\rho)$ of a set $\cF$ under metric $\rho$ is the smallest cardinality of a subset $\cF_0\subseteq\cF$ such that for any $f\in\cF$, there exists a $g\in\cF_0$ satisfying that $\rho(f,g)\le\epsilon$. We say $\cF_0$ is an $(\epsilon,\rho)$ cover of $\cF$.
\end{definition}

\subsection{Offline Data Collection Process}
\noindent \textbf{Offline Clean Data.}
Consider an offline clean dataset with $n$ trajectories $\cD=\{(x_i^h,a_i^h,r_i^h)\}_{i,h=1}^{n,H}$. We assume the dataset $\cD$ is compliant with an MDP $(\cX,\cA,H,\Pb,r)$ with the value functions $Q,V$ and the Bellman operator $\cT$: for any policy $\pi$,
\#\label{eq:compliant}
\Pb\big((\cT^hQ_\pi^{h+1})(x_i^h,a_i^h) &= r'+V_\pi^{h+1}(x') \big| \{(x_j^h,a_j^h)\}_{j\in[i]},\{r_j^h,x_j^{h+1})\}_{j\in[i-1]}, Q_\pi^{h+1}\big)\notag\\
& = \Pb\big(r^h(x^h,a^h)=r',x^{h+1}=x' \big| x^h=x_i^h,a^h=a_i^h\big),
\#
where the realizability and completeness in Assumption \ref{as:realizability and completeness} hold. The compliance assumption \eqref{eq:compliant} is also made in \citet{jin2020learning,zhong2022pessimistic}, which means that $\cD$ remains the Markov property and allows $\cD$ to be collected by an adaptive behavior policy.  The induced distribution of the state-action pair is denoted by $\mu=\{\mu^h\}_{h\in[H]}$. %We consider the case where an adversary corrupts the dataset $\cD$ before it is revealed to the learner such that $\cD$ no longer complies with the Bellman operator $\cT$.

\noindent \textbf{Adversarial Corruption.} 
During the offline dataset collection process, after observing the state-action pair $(x^h,a^h)$ chosen by the data collector, an adversary corrupts $r^h$ and $x^{h+1}$ at each step $h$ before they are revealed to the collector. For each corrupted trajectory $i\in[n]$, we define the corrupted value function $Q_i,~V_i$, and the Bellman operator $\cT_i$ satisfying \eqref{eq:compliant}. To measure the corruption level, we notice that characterizing the specific modification on each tuple $(s,a,s',r)$ is hard and unnecessary since once one modifies a tuple, the subsequent trajectory changes. Therefore, it is difficult to tell whether the change is caused by the corruption at the current step or a previous step. In fact, we only care about the part of the change that violates the Bellman completeness. Therefore, following the online setting \citep{ye2022corruptionrobust,wei2022model}, we measure the corruption level by the gap between $\{\cT_i\}_{i=1}^n$ and $\cT$ as follows.
%We consider the corruption setting, where the dataset $\cD$ observed by the learner complies with the corrupted value function $Q_\cD,~V_\cD$, and the Bellman operator $\cT_\cD$ that satisfies \eqref{eq:compliant}. Due to the corruption to the underlying operator $\cT_\cD$, we cannot assume that the considered approximation function class satisfies the completeness in Assumption \ref{as:realizability and completeness}. Instead, we can measure the gap between $\cT_\cD$ and $\cT$ by defining the corruption level.
\begin{definition}[Cumulative Corruption]\label{def:cor_mdp}
The cumulative corruption is $\zeta$ if at any step $h\in[H]$, for a sequence $\{(x_i^h,a_i^h)\}_{i,h=1}^{n,H}\subset\cX\times\cA$ chosen by the data collector and a sequence of functions $\{g^h:\cX\rightarrow[0,1]\}_{h=1}^H$, we have for all $h\in[H]$,
$$ 
\sum_{i=1}^n|\zeta_i^h|\le \zeta^h,\quad \sum_{h=1}^H\zeta^h :=\zeta,
$$
where $\zeta_i^h = (\cT^hg^{h+1}-\cT_i^hg^{h+1})(x_i^h,a_i^h)$.
\end{definition}
%This corruption level $\zeta$ depicts a cumulative upper bound over $H$ steps and $n$ trajectories for the gap between the uncorrupted and corrupted operators $\cT$ and $\cT_{\cD}$. 
The \textbf{learning objective} is to find a policy $\pi$ that minimizes the suboptimality of $\pi$ given any initial state $x^1=x$: $\subopt(\pi, x) = V_*^1(x) - V_{\pi}^1(x),$
where $V(\cdot)$ is the value function induced by the uncorrupted MDP.

\section{Algorithm}
In this section, we first highlight the pivotal role that uncertainty weighting plays in controlling the corruption-related bound. To extend the uncertainty weighting technique to the offline setting, we propose an iteration algorithm. With the proposed algorithm, the theoretical result for the suboptimality is presented.

\subsection{Uncertainty-Related Weights}
In this subsection, we discuss the choice of weight for a simplified model without state transition ($H=1$) and use the notation $z_i=(x_i,a_i)$. Given a dataset $\{(z_i,y_i)\}_{i\in[n]}$, we have $y_i=\bff(z_i)+\zeta_i+\epsilon_i$ for $i\in[n]$, where $\bff\in\cF$ is the uncorrupted true value, the noise $\epsilon_i$ is zero-mean and conditional $\eta$-subGaussian, and the corruption level is $\zeta=\sum_{i=1}^n |\zeta_i|$.

We begin with delineating the consequence caused by the adversarial corruption for the traditional least-square regression:
$
\hf = \min_{f\in\cF} \sum_{i=1}^n\big(f(z_i) - y_i\big)^2.
$
Some calculations lead to the following decomposition:
{\footnotesize
\$
\sum_{i=1}^n (\hf(z_i)-\bff(z_i))^2 = \underbrace{\sum_{i=1}^n\big[(\hf(z
_i)-y_i)^2 - (\bff(z_i)-y_i)^2\big]}_{ I_1\le0} + 2\underbrace{\sum_{i=1}^n(\hf(z_i)-\bff(z_i))\epsilon_i}_{ I_2:\text{Noise~term}} + 2\underbrace{\sum_{i=1}^n(\hf(z_i)-\bff(z_i))\zeta_i}_{ I_3:\text{Corruption~term}}.
\$}
The term $I_1\le 0$ since $\hf$ is the solution to the least-square regression. The term $I_2$ is bounded by $\tcO(\ln N)$ because of the $\eta$-subGaussainity of $\epsilon_i$, where $N$ is the covering number of $\cF$. The term $I_3$ is ruined by corruption: $I_3\le 2\sum_{i=1}^n|\zeta_i|=\cO(\zeta)$. Hence, the confidence radius $(\sum_{i=1}^n (\hf(z_i)-\bff(z_i))^2)^{1/2}=\tcO(\sqrt{\zeta+\ln N})$ will explode whenever the corruption level $\zeta$ grows with $n$.

To control the corruption term, motivated by the uncertainty-weighting technique from online settings~\citep{ye2022corruptionrobust,he2022nearly,zhou2022computationally}, we apply the weighted regression:
$
\hf = \min_{f\in\cF} \sum_{i=1}^n \big(f(z_i) - y_i\big)^2/\sigma_i^2,
$
where ideally, we desire the following uncertainty-related weights:
\#\label{eq:ideal_weight}
\sigma_i^2 = \max\bigg(1,\frac{1}{\alpha}\underbrace{\sup_{f,f'\in\cF}\frac{|f(z_i) - f'(z_i)|}{\sqrt{\lambda + \sum_{j=1}^n(f(z_j) - f'(z_j))^2/\sigma_j^2}}}_{\text{Uncertainty}}\bigg),\quad i=1,\ldots,n,
\#
where $\alpha,\lambda>0$ are pre-determined parameters. The uncertainty quantity in the above equation is the supremum of the ratio between the prediction error $|f(z_i) - f'(z_i)|$ and the training error $\sqrt{\sum_{j=1}^n(f(z_j) - f'(z_j))^2/\sigma_j^2}$ over $f,f'\in\cF$. Intuitively, the quantity depicts the relative information of a sample $z_i$ against the whole training set $\{z_1,\ldots,z_n\}$. We can use the linear function class as a special example to explain it. When the function space $\mathcal F^h$ is embedded into a $d$-dimensional vector space: $\mathcal F^h=\{\langle w(f), \phi(\cdot) \rangle : z\rightarrow\mathcal R\}$, the uncertainty quantity becomes
{\small
\$
\sup_{f,f'\in\mathcal F} \frac{|\langle w(f)-w(f'), \phi(z_i) \rangle|}{\sqrt{\lambda+\sum_{j=1}^n\big(\langle w(f)-w(f'), \phi(z_j) \rangle\big)^2/\sigma_j^2}} &\le \sup_{f,f'\in\mathcal F} \frac{|\langle w(f)-w(f'), \phi(z_i) \rangle|}{\sqrt{\big(w(f)-w(f')\big)^{\top} \Lambda \big(w(f)-w(f')\big)}}\\
&\le \sqrt{\phi^{\top}(z_i)\Lambda^{-1}\phi(z_i)},
\$
}
where $\Lambda=\sum_{j=1}^n\phi(z_j)\phi^{\top}(z_j)/\sigma_j^2$. Moreover, $\big(\phi^{\top}(z_i)\Lambda^{-1}\phi(z_i)\big)^{-1}$ represents the effective number of samples in the $\{z_i\}_{i=1}^n$ along the $\phi(z_i)$'s direction. We discuss in Lemma \ref{lm:equivalence bonus linear and general} that under mild conditions the linear and nonlinear uncertainty quantities are almost equivalent.

However, since the uncertainty
also depends on weights, it is impossible to determine all the weights $\{\sigma_i\}_{i\in[n]}$ simultaneously. Compared with the online setting where the weight in each round can be determined sequentially (iteratively in rounds), we face two challenges in the offline setting: (a) how to compute uncertainty-related weights iteratively? (b) will an approximate solution to the uncertainty play an equivalent role in controlling the corruption term?

\begin{algorithm}[th]
\caption{Uncertainty Weight Iteration}
\label{alg:wi}
\begin{algorithmic}[1]
\STATE {\bf Input:} $\{(x_i, a_i)\}_{i=1}^n,\cF,\alpha>0$
\STATE {\bf Initialization:} $t=0,~\sigma_i^0=1$, $i=1,\ldots,n$
\REPEAT
\STATE $t\leftarrow t+1$
\STATE $(\sigma_i^t)^2 \leftarrow \max\Big(1,\sup_{f,f'\in\cF}\frac{|f(x_i,a_i) - f'(x_i,a_i)|/\alpha}{\sqrt{\lambda + \sum_{j=1}^n(f(x_j,a_j) - f'(x_j,a_j))^2/(\sigma_j^{t-1})^2}}\Big)$, $i=1,\ldots,n$
\UNTIL $\max_{i\in[n]} \big(\sigma_i^{t}/\sigma_i^{t-1}\big)^2 \le 2$
\STATE {\bf Output:} $\{\sigma_i^t\}_{i=1}^n$
\end{algorithmic}
\end{algorithm}

To solve the first challenge, we propose the weight iteration algorithm in Algorithm \ref{alg:wi}. Moreover, we demonstrate the convergence of this algorithm by the monotone convergence theorem in the following lemma, which ensures that the output weights are sufficiently close to desired ones \eqref{eq:ideal_weight}. The proof is provided in Appendix \ref{ss:Proof of Lemma converge_weight}.
\begin{lemma}\label{lm:converge_weight}
There exists a $T$ such that the output of Algorithm \ref{alg:wi} $\{\sigma_i:=\sigma_i^{T+1}\}_{i=1}^n$ satisfy:
\#\label{eq:approximate_weight}
\sigma_i^2 \ge \max\big(1,\psi(z_i)/2\big), \quad \sigma_i^2 \le \max\big(1,\psi(z_i)\big),
\#
where $\psi(z_i)=\sup_{f,f'\in\cF}\frac{|f(z_i) - f'(z_i)|/\alpha}{\sqrt{\lambda + \sum_{j=1}^n(f(z_j) - f'(z_j))^2/\sigma_j^2}}$.
\end{lemma}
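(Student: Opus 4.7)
The plan is to read Algorithm~\ref{alg:wi} as the fixed-point iteration of a single monotone operator $\Phi:\rR_{>0}^n \to \rR_{>0}^n$ defined by
\[
\Phi(\sigma)_i^2 \;=\; \max\!\Bigl(1,\;\sup_{f,f'\in\cF}\frac{|f(z_i)-f'(z_i)|/\alpha}{\sqrt{\lambda+\sum_{j=1}^n (f(z_j)-f'(z_j))^2/\sigma_j^2}}\Bigr),
\]
so that $\sigma^t = \Phi(\sigma^{t-1})$. The proof then splits into (i) showing the iterates converge in finitely many steps so that the stopping test triggers, and (ii) translating that stopping test into the two inequalities in \eqref{eq:approximate_weight}.

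For step (i), I would first observe that $\Phi$ is coordinatewise monotone in a specific sense: if $\sigma \le \sigma'$ componentwise, then $1/\sigma_j^2 \ge 1/(\sigma_j')^2$ makes the denominator inside the supremum larger for $\sigma$, so $\Phi(\sigma)_i \le \Phi(\sigma')_i$. Starting from $\sigma^0 \equiv 1$, the first iterate satisfies $\sigma^1 = \Phi(\sigma^0) \ge 1 = \sigma^0$ since $\Phi$ is bounded below by $1$, and an induction using monotonicity gives $\sigma^{t+1} \ge \sigma^t$ for all $t$. For boundedness, since $f,f'\in\cF$ take values in $[0,1]$, the numerator is at most $1/\alpha$ and the denominator at least $\sqrt{\lambda}$, so $(\sigma_i^t)^2 \le \max(1, 1/(\alpha\sqrt{\lambda}))$ uniformly. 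By the monotone convergence theorem each coordinate converges to a finite limit $\sigma_i^\star \ge 1$, and therefore $(\sigma_i^t/\sigma_i^{t-1})^2 \to 1$, which forces the stopping criterion $\max_i (\sigma_i^t/\sigma_i^{t-1})^2 \le 2$ to be met at some finite $T+1$.

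For step (ii), write $\psi(z_i)$ for the quantity defined with the output weights $\sigma := \sigma^{T+1}$, and $\psi_{T+1}(z_i)$ for the same quantity defined with $\sigma^T$; by construction $\sigma_i^2 = \max(1,\psi_{T+1}(z_i))$. The upper bound follows directly from monotonicity: since $\sigma^T \le \sigma^{T+1}$, the argument above gives $\psi_{T+1}(z_i) \le \psi(z_i)$, hence $\sigma_i^2 \le \max(1,\psi(z_i))$. For the lower bound I use the stopping condition: $(\sigma_j^{T+1})^2 \le 2(\sigma_j^T)^2$ yields $1/(\sigma_j^T)^2 \le 2/(\sigma_j^{T+1})^2$, so the denominator appearing in $\psi_{T+1}(z_i)$ is at most $\sqrt{\lambda + 2\sum_j (f(z_j)-f'(z_j))^2/\sigma_j^2} \le \sqrt{2}\sqrt{\lambda + \sum_j (f(z_j)-f'(z_j))^2/\sigma_j^2}$, using $\sqrt{\lambda+2X} \le \sqrt{2}\sqrt{\lambda+X}$. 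Dividing by this larger denominator shows $\psi_{T+1}(z_i) \ge \psi(z_i)/\sqrt{2} \ge \psi(z_i)/2$, and combined with $\sigma_i^2 \ge 1$ this gives $\sigma_i^2 \ge \max(1,\psi(z_i)/2)$.

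The routine parts are the monotonicity check and the bound $\sqrt{\lambda+2X}\le\sqrt{2(\lambda+X)}$; the main conceptual obstacle is choosing the right direction of the monotonicity comparison, namely that increasing $\sigma$ \emph{increases} $\Phi(\sigma)$ (not decreases, as one might guess since $\sigma$ sits in a denominator of a denominator), and then recognizing that the stopping criterion is precisely what is needed to swap $\sigma^T$ for $\sigma^{T+1}$ inside $\psi$ at the cost of a $\sqrt{2}$ factor, which is absorbed into the $1/2$ stated in the lemma.
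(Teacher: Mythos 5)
Your proof is correct and follows the same overall skeleton as the paper's: coordinatewise monotonicity of the iteration map plus the uniform bound $\max(1,1/(\alpha\sqrt{\lambda}))$ yields convergence by the monotone convergence theorem, hence the stopping test is eventually met, and the upper bound in \eqref{eq:approximate_weight} follows from monotonicity exactly as you argue. Where you genuinely diverge is the lower bound. The paper derives $(\sigma_i^t)^2\ge\frac{1}{1+\mu}(\sigma_i^{t+1})^2$ from the convergence of $\log((\sigma_i^t)^2)$ and sets $\mu=1$; this ties the factor $2$ to the ratio of consecutive iterates one step \emph{after} the output, which introduces a small index mismatch with the algorithm's actual stopping rule (the criterion compares $\sigma^{T+1}$ with $\sigma^{T}$, not $\sigma^{T+2}$ with $\sigma^{T+1}$, and the first time the ratio dips below $2$ need not be followed by ratios that stay below $2$). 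You instead push the termination condition $1/(\sigma_j^{T})^2\le 2/(\sigma_j^{T+1})^2$ into the denominator of the uncertainty quantity and use $\sqrt{\lambda+2X}\le\sqrt{2(\lambda+X)}$, so that $\sigma_i^2=\max\bigl(1,\psi_{\mathrm{pre}}(z_i)\bigr)\ge\max\bigl(1,\psi(z_i)/\sqrt{2}\bigr)$, where $\psi_{\mathrm{pre}}$ is $\psi$ evaluated with the penultimate weights. This uses exactly the condition the algorithm guarantees at termination, loses only a factor $\sqrt{2}$ rather than $2$, and is therefore both valid and slightly tighter; the paper's route is shorter but implicitly relies on the ratio bound holding at the step following termination.
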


For the second challenge, the weighted version $L_n:=\sum_{i=1}^n(\hf(z_i)-\bar f(z_i))^2/\sigma_i^2$ can also be decomposed into three terms correspondingly. We can demonstrate that an approximate choice of weights satisfying \eqref{eq:approximate_weight} is sufficient to control the corruption term as
\$
\sum_{i=1}^n \frac{(\hf(z_i)-\bar f(z_i))\zeta_i}{\sigma_i^2} = \sum_{i=1}^n \frac{|\hf(z_i)-\bar f(z_i)|\zeta_i\cdot\sqrt{\lambda+L_n}}{\sigma_i^2\sqrt{\lambda+\sum_{j=1}^n(\hf(z_j)-\bar f(z_j))^2/\sigma_j^2}} \le 2\alpha\zeta\sqrt{\lambda+L_n}.
\$
Since the corruption-unrelated terms (corresponding to $I_1,I_2$) can still be bounded by $\tcO(\ln N)$, we have $L_n = \tcO(\ln N + 2\alpha\zeta\sqrt{L_n})$, leading to an $\tcO(\alpha\zeta + \sqrt{\ln N})$ confidence radius. Therefore, with a sufficiently small $\alpha$, the effect of corruption can be countered.

\subsection{Corruption-Robust Algorithm}
\begin{algorithm}[th]
\caption{CR-PEVI}
\small
\label{alg:mdp}
\begin{algorithmic}[1]
\STATE {\bf Input:} $\mathcal{D}=\{(x_i^h, a_i^h, r_i^h)\}_{i,h=1}^{n,H},\cF$
\STATE \textbf{Initialization:} Set $f_n^{H+1}(\cdot)\leftarrow0$
\FOR{step $h=H,H-1,\ldots,1$}
\STATE Choosing weights $\{\sigma_i^h\}_{i=1}^n$ by proceeding Algorithm \ref{alg:wi} with inputs $\{(x_i^h,a_i^h)\}_{i=1}^n, \cF^h, \alpha$
\STATE Find the weighted least-squares solution in \eqref{eq:weighted least-square regression}
\STATE Find $\beta^h$ and construct confidence set 
\$
\hcF^h = \Big\{f\in\cF^h: \lambda + \sum_{i=1}^n(f(x_i^h,a_i^h) - \hat f^h(x_i^h,a_i^h))^2/(\sigma_i^h)^2 \le (\beta^h)^2 \Big\}
\$
\STATE Construct bonus function as \eqref{eq:bonus}
\STATE Let $f_n^h(\cdot,\cdot) = \max\big(0,\hat f^h(\cdot,\cdot) - \beta^h b^h(\cdot,\cdot)\big)$
\STATE Set $\hat{\pi}^h(\cdot) = \argmax_{a\in\cA}f_n^h(\cdot, a)$
\ENDFOR
\STATE {\bf Output:} $\{\hat{\pi}^h\}_{h=1}^H$
\end{algorithmic}
\end{algorithm}

Now, for the offline RL with general function approximation, we integrate the uncertainty weight iteration algorithm with the pessimistic value iteration (PEVI) algorithm \citep{jin2021pessimism},  and propose a Corruption-Robust PEVI (CR-PEVI) in Algorithm \ref{alg:mdp}. Our algorithm employs backward induction from step $H$ to 1. Set estimated value function $f_n^{H+1}(\cdot)=0$. At each step $h\in[H]$, having obtained $f_n^{h+1}$, we calculate $f_n^h$ by solving the following weighted least-square regression:
\#\label{eq:weighted least-square regression}
\hat{f}^h = \argmin_{f^h\in\cF^h}\sum_{i=1}^n \frac{(f(x_i^h,a_i^h) - r_i^h - f_n^{h+1}(x_i^{h+1}))^2}{(\sigma_i^h)^2},
\#
where the weights are obtained via Algorithm \ref{alg:wi}.
As opposed to online RL where the necessity of exploration stimulates optimistic estimation, the literature on offline RL \citep{jin2021pessimism,zhong2022pessimistic} is more inclined to pessimism due to the limitation of offline data coverage. Hence, we construct a confidence set $\hcF^h = \{f\in\hcF^h: \lambda + \sum_{i=1}^n(f(x_i^h,a_i^h) - \hat f^h(x_i^h,a_i^h))^2/(\sigma_i^h)^2 \le (\beta^h)^2 \}$ such that the uncorrupted Bellman operator $\cT^h$ converts the value function $f_n^{h+1}$ into the function class $\hcF^h$ (i.e., $\cT^hf_n^{h+1}\in\hcF^h$) with high probability. For the bonus function, we follow \cite{ye2022corruptionrobust} and choose it as
\#\label{eq:bonus}
b_h(x,a) = \sup_{f,f'\in\hcF^h}\frac{|f(x,a) - f'(x,a)|}{\sqrt{\lambda + \sum_{i=1}^n(f(x_i^h,a_i^h) - f'(x_i^h,a_i^h))^2/(\sigma_i^h)^2}},
\#
which is seldom used in practical algorithms due to its unstability. Specifically, the covering number of the space containing \eqref{eq:bonus} may be uncontrollable. According to Appendix E in \cite{ye2022corruptionrobust}, the issue of the covering number can be addressed under mild conditions by some techniques. Therefore, to maintain readability and consistency in this paper, we assume the corresponding bonus function space $\cB^{h+1}$ of \eqref{eq:bonus} has a bounded covering number. Then we introduce pessimism by subtracting $b^h$ from the estimated value function: $f_n^h(x,a)=\max(0,\hf^h(x,a)-\beta^h b^h(x,a))$. %After $H$ steps, CR-PEVI generates the policy $\{\hat{\pi}^h\}_{h=1}^H$.

\section{Theoretical Analysis}
\subsection{Coverage Condition}
No guarantee for the suboptimality can be provided with insufficient data coverage. Based on pessimism, \citet{jin2020learning,rashidinejad2021bridging} have demonstrated that the coverage over the optimal policy is sufficient for sample-efficient offline RL. The following condition covers the optimal policy under general function approximation. 

\begin{definition}[Coverage Coefficient]\label{df:coverage_condition_mdp}
Consider the offline dataset $\{x_i^h,a_i^h\}_{i,h=1}^{n,H}$. For any initial state $x^1=x\in\cX$, the weighted coverage coefficient is:
{\footnotesize
\#\label{eq:coverage coefficient weighted}
\CC^{\sigma}(\lambda,\hcF,\cZ_n^H) = \max_{h\in[H]}\E_{\pi_*}\bigg[\sup_{f,f'\in\hcF^h}\frac{n(f(x^h,a^h) - f'(x^h,a^h))^2/\sigma^h(x^h,a^h)^2}{\lambda + \sum_{i=1}^n(f(x_i^h,a_i^h) - f'(x_i^h,a_i^h))^2/(\sigma_i^h)^2} \,\bigg|\, x^1=x\bigg],
\#}
where $\E_{\pi_*}$ is taken with respect to the trajectory
induced by $\pi_*$ in the underlying uncorrupted MDP, and the weight for the trajectory induced by the optimal policy $\pi_*$ is
{\footnotesize
\#\label{eq:sigma^h(x^h,a^h)^2}
(\sigma^h(x^h,a^h))^2 = \max\bigg(1, \sup_{f,f'\in\hcF^h} \frac{|f(x^h,a^h) - f'(x^h,a^h)|/\alpha}{\sqrt{\lambda + \sum_{i=1}^n(f(x_i^h,a_i^h) - f'(x_i^h,a_i^h))^2/(\sigma_i^h)^2}}\bigg).
\#}
\end{definition}
When the weights $\sigma_i^h$, $\sigma^h(x^h,a^h)$ all equal to $1$, we get the unweighted coverage coefficient
{\footnotesize
\#\label{eq:coverage coefficient unweighted}
\CC(\lambda,\hcF,\cZ_n^H) = \max_{h\in[H]}\E_{\pi_*}\bigg[\sup_{f,f'\in\hcF^h}\frac{n(f(x^h,a^h) - f'(x^h,a^h))^2}{\lambda + \sum_{i=1}^n(f(x_i^h,a_i^h) - f'(x_i^h,a_i^h))^2} \,\bigg|\, x^1=x\bigg].
\#}
In the face of corruption, we require single-policy coverage over the uncorrupted trajectory. This coefficient depicts the expected uncertainty of sample $(x^h,a^h)$ induced by the optimal policy $\pi_*$ compared to the $n$ training samples. We use the linear MDP to interpret this condition, where the function space $\cF^h$ is embedded into a $d$-dimensional vector space: $\cF^h=\{\langle w(f), \phi(\cdot) \rangle : z\rightarrow\rR\}$, where $z$ denotes the state-action pair $(x,a)$. With the notation $\Lambda^h=\lambda I+\sum_{i=1}^n\phi(z_i^h)\phi(z_i^h)^\top$, we can demonstrate that if the sufficient ``coverage'' in \citet{jin2021pessimism} holds: there exists a constant $c^{\dagger}$ such that $\Lambda^h\succeq I+c^{\dagger}n\E_{\pi_*}[\phi(z^h)\phi(z^h)^\top \,|\,x^1=x]$ for all $h\in[H]$, our coverage coefficient is bounded: $\CC(\lambda,\hcF,\cZ_n^H) \le d/c^{\dagger} < \infty$.

Additionally, we introduce a new general version of the well-explored dataset condition, which is the key to eliminating the uncertainty-related weights from the instance-dependent bound and deriving the final result.
\begin{assumption}[Well-Explored Dataset]\label{as:Well-Explored Dataset}
For a function space $\cF$ and data empirical distribution $\mu$, there exists a constant $C(\cF,\mu)>0$ such that for any $h\in[H]$, and two distinct $f,f'\in\cF^h$,
\#\label{eq:condition of lm:Connections between Weighted and Unweighted Coefficient}
\E_{\mu^h}\Big[\big(f(z^h)-f'(z^h)\big)^2\Big] \ge C(\cF,\mu)\|f-f'\|_{\infty}^2.
\#
\end{assumption}
We interpret this condition with the linear model, where the condition \eqref{eq:condition of lm:Connections between Weighted and Unweighted Coefficient} becomes: for any two distinct $f,f'\in\bcF^h$,
\$
\big(w(f)-w(f')\big)^{\top} \E_{z^h\sim\mu^h} [\phi(z^h)\phi(z^h)^\top] \big(w(f)-w(f')\big) \ge C(\hcF,\mu)\|w(f)-w(f')\|_2^2.
\$
As proved in Lemma \ref{lm:minimum eigenvalue condition and condition for W and UW connections}, with high probability, the above condition holds with $C(\hcF,\mu)=\Theta(d^{-1})$ when the $n$ trajectories of $\cD$ are independent, and the data distribution $\mu^h$ satisfies the minimum eigenvalue condition:
\#\label{eq:minimum eigenvalue condition0}
\sigma_{\min}\big(\E_{z^h\sim\mu^h} [\phi(z^h)\phi(z^h)^\top]\big)=\bar{c}/d,
\#
where $\bar c>0$ is an absolute constant. This is a widely-adopted assumption in the literature \citep{duan2020minimax,wang2020statistical,xiong2022nearly}. Note that $\Theta(d^{-1})$ is the largest possible minimum eigenvalue since for any data distribution $\tilde \mu^h$, $\sigma_{\min}(\E_{z^h\sim\mu^h} [\phi(z^h)\phi(z^h)^\top]) \le d^{-1}$ by using $\|\phi(z^h)\|\le 1$ for any $z^h\in\cX\times\cA$.

We will demonstrate in Lemma \ref{lm:Connections between Weighted and Unweighted Coefficient} that the coverage coefficient $CC^{\sigma}$ is controlled by $C(\cF,\mu)$.
%Therefore, the sufficient ``coverage'' condition implies our coverage condition in the linear setting. We will discuss the connection formally in Lemma \ref{lm:Connection between Coverage Coefficients_linear}.

% Additionally, the relationship of the weighted coverage coefficient and unweighted coverage coefficient, notated as $\CC(\lambda,\hcF,\cZ_n^H)$, is shown as follows:
% {\small
% \$
% \CC^{\sigma}(\lambda,\hcF,\cZ_n^H) \le \frac{1}{\alpha\sqrt{\lambda}}\max_{h\in[H]}\E_{\pi_*}\bigg[\sup_{f,f'\in\hcF^h}\frac{n(f(x^h,a^h) - f'(x^h,a^h))^2}{\lambda + \sum_{i=1}^n(f(x_i^h,a_i^h) - f'(x_i^h,a_i^h))^2} \,\bigg|\, x^1=x\bigg] = \frac{\CC(\lambda,\hcF,\cZ_n^H)}{\alpha\sqrt{\lambda}},
% \$}
% where the inequality uses the upper bound of weights: $(\sigma_i^h)^2\le 1/(\alpha\sqrt{\lambda})$.

\subsection{Main Result}
Then, the following theorem ensures that the suboptimality of Algorithm \ref{alg:mdp} has an $\cO(\zeta/n)$ dependence on corruption $\zeta$.
\begin{theorem}\label{th:mdp}
Given corruption $\zeta = \sum_{h=1}^H\zeta^h$ and $\delta>0$, we choose the covering parameter $\gamma=1/(n\max_h\beta^h\zeta^h)$, $\lambda=\ln(N_n(\gamma))$, the weighting parameter $\alpha=H\sqrt{\ln N_n(\gamma)}/\zeta$, and the confidence radius
$$
\beta^h=c_{\beta}\big( \alpha\zeta^h + \sqrt{\ln(HN_n(\gamma)/\delta)}\big),\quad\text{for}~h=H,\ldots,1,
$$
where
$
N_n(\gamma)=\max_h N(\gamma/n,\cF^h)\cdot N(\gamma/n,\cF^{h+1})\cdot N(\gamma/n,\cB^{h+1}(\lambda)).
$
Then, with probability at least $1-2\delta$, the sub-optimality of Algorithm \ref{alg:mdp} is bounded by
\$
\subopt(\hat\pi, x) = \tcO\bigg(3H\sqrt{\frac{\CC^{\sigma}(\lambda,\hcF,\cZ_n^H)\cdot\ln N}{n}} + \frac{\zeta\cdot\CC^{\sigma}(\lambda,\hcF,\cZ_n^H)}{n }\bigg).
\$
Further, if Assumption \ref{as:Well-Explored Dataset} holds,
\$
\subopt(\hat\pi, x) = \tcO\bigg(3H\sqrt{\frac{\ln N}{C(\cF,\mu)n}} + \frac{\zeta}{n C(\cF,\mu)}\bigg).
\$
\end{theorem}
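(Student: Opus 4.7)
My plan is to reduce the suboptimality to a sum of $\pi_*$-expected bonuses via pessimism, certify the confidence radius $\beta^h$ through a weighted-regression analysis in which the uncertainty weights absorb the corruption, and finally convert the expected bonuses into the coverage coefficient $\CC^\sigma$ (and thence into $1/C(\cF,\mu)$ under Assumption \ref{as:Well-Explored Dataset}). Following the PEVI template of \citet{jin2021pessimism}, I first work on a high-probability event on which $\cT^h f_n^{h+1}\in\hcF^h$ for every $h\in[H]$. Combined with $f_n^h=\max(0,\hat f^h-\beta^h b^h)$ and the definition of $b^h$ as the $\hcF^h$-diameter of point evaluation, this yields the pessimism sandwich $0\le Q_*^h(x,a)-f_n^h(x,a)\le 2\beta^h b^h(x,a)$ together with $f_n^h(x,a)\le Q_*^h(x,a)$. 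An extended value-difference argument then gives
\$
\subopt(\hat\pi,x)\le 2\sum_{h=1}^H\E_{\pi_*}\bigl[\beta^h b^h(x^h,a^h)\,\big|\,x^1=x\bigr],
\$
reducing the proof to (a) certifying $\beta^h$ and (b) bounding $\E_{\pi_*}[b^h]$.

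For (a), I decompose the weighted empirical error $L_n^h:=\sum_i(\hat f^h(z_i^h)-\cT^h f_n^{h+1}(z_i^h))^2/(\sigma_i^h)^2$ as in the three-term sketch following Lemma \ref{lm:converge_weight}. Optimality of $\hat f^h$ kills the $I_1$-analogue; a sub-Gaussian Freedman bound together with a union bound over a $\gamma$-cover of $\cF^h\times\cF^{h+1}\times\cB^{h+1}$ controls the noise term (this triple product is precisely $N_n(\gamma)$, since $f_n^{h+1}$ is data-dependent and itself involves $b^{h+1}$); and the corruption term is bounded by $2\alpha\zeta^h\sqrt{\lambda+L_n^h}$ via the approximate-uncertainty property $\sigma_i^2\ge\psi(z_i^h)/2$ from Lemma \ref{lm:converge_weight}. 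Solving the resulting quadratic in $\sqrt{L_n^h}$ produces the advertised radius $\beta^h=c_\beta\bigl(\alpha\zeta^h+\sqrt{\ln(HN_n(\gamma)/\delta)}\bigr)$.

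For (b), the crux is a self-bounding Cauchy-Schwarz. Definition \eqref{eq:sigma^h(x^h,a^h)^2} gives $(\sigma^h(z^h))^2=\max(1,b^h(z^h)/\alpha)\le 1+b^h(z^h)/\alpha$, while \eqref{eq:coverage coefficient weighted} gives $n\,\E_{\pi_*}[(b^h)^2/(\sigma^h)^2]\le\CC^\sigma$. Writing $b^h=(b^h/\sigma^h)\cdot\sigma^h$ and applying Cauchy-Schwarz,
\$
\E_{\pi_*}[b^h]\le\sqrt{\E_{\pi_*}[(b^h/\sigma^h)^2]\cdot\E_{\pi_*}[(\sigma^h)^2]}\le\sqrt{\frac{\CC^\sigma}{n}\Bigl(1+\frac{\E_{\pi_*}[b^h]}{\alpha}\Bigr)},
\$
and a quadratic-formula step yields $\E_{\pi_*}[b^h]\le\sqrt{\CC^\sigma/n}+\CC^\sigma/(\alpha n)$. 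Summing $\beta^h\,\E_{\pi_*}[b^h]$ over $h$ and plugging in $\alpha=H\sqrt{\ln N}/\zeta$ (so that $\sum_h\beta^h=\tcO(H\sqrt{\ln N})$) produces the first displayed bound. The refined bound then follows by invoking Lemma \ref{lm:Connections between Weighted and Unweighted Coefficient}, which under Assumption \ref{as:Well-Explored Dataset} replaces $\CC^\sigma$ by $1/C(\cF,\mu)$.

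The hardest part is (a): the self-bounding in $L_n^h$ must hold \emph{uniformly} over the data-dependent target $\cT^h f_n^{h+1}$ and estimator $\hat f^h$, which is what forces the triple covering product in $N_n(\gamma)$ together with the bounded-cover assumption on $\cB^{h+1}$. Both the $L_n^h\leftrightarrow\sqrt{L_n^h}$ closure in (a) and the $(\sigma^h)^2\le 1+b^h/\alpha$ identity in (b) lean on Algorithm \ref{alg:wi} producing weights within a factor $2$ of the ideal uncertainty (Lemma \ref{lm:converge_weight}); without that convergence guarantee, neither step closes cleanly, and the $\zeta/n$ rate on the corruption term would degrade to the $\sqrt{\zeta/n}$ rate typical of unweighted analyses.
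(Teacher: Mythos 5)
Your proposal is correct and follows essentially the same three-step route as the paper's proof: the pessimistic value-difference decomposition (Lemma \ref{lm:Regret_Decomposition_mdp}), the weighted-regression confidence radius with the triple cover and the $2\alpha\zeta^h\sqrt{\lambda+L_n^h}$ corruption bound (Lemma \ref{lm:Confidence_Radius_mdp}), and the conversion of $\E_{\pi_*}[b^h]$ into $\sqrt{\CC^{\sigma}/n}+\CC^{\sigma}/(n\alpha)$ followed by Lemma \ref{lm:Connections between Weighted and Unweighted Coefficient}. The only cosmetic differences are that you obtain the bonus bound by Cauchy--Schwarz with $(\sigma^h)^2\le 1+b^h/\alpha$ rather than the paper's split on $\sigma^h=1$ versus $\sigma^h>1$ (both exploit the identity $(\sigma^h)^2=\max(1,b^h/\alpha)$ and yield the same bound), and that your stated per-step sandwich should be on the Bellman residual $f_n^h-(\cT^h f_n^{h+1})$ rather than pointwise on $Q_*^h-f_n^h$ --- the value-difference argument you invoke only needs the former.
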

%Our algorithm's performance depends on the corruption level $\zeta$. 
When $\zeta=\cO(\sqrt{n})$, our algorithm achieves the same order of suboptimality as the uncorrupted case. Whenever $\zeta=o(n)$, our algorithm is sample-efficient. Moreover, when specialized to the linear MDP with dimension $d$, where $C(\hcF,\mu)=\Theta(d^{-1})$ and $\ln N_n(\gamma)=\tcO(d^2)$, the suboptimality bound in Theorem \ref{th:mdp} becomes $\tcO(Hd^{3/2}n^{-1/2} + d\zeta n^{-1})$. The corruption-independent term $\tcO(Hd^{3/2}n^{-1/2})$ matches that of PEVI \citep{jin2021pessimism}. The corruption-dependent term nearly matches the lower bound, as will be discussed later.

\begin{remark}
Although the theory requires a known corruption level $\zeta$, in the experiments, we treat the uncertainty ratio $\alpha=O(1/\zeta)$ as a tuning hyperparameter. The use of independent and identically distributed (i.i.d.) trajectories in our experiments renders the hyperparameter tuning process straightforward and conducive to optimizing the performance. Additionally, we can offer a choice of $\alpha=\Theta(1/\sqrt{n})$. This choice finds support in the online setting \citep{ye2022corruptionrobust,he2022nearly}, where this specific choice of $\alpha$ ensures that suboptimality remains in the order of uncorrupted error bound, even when $\zeta=O(\sqrt{n})$.
\end{remark}

%Additionally, this result is instance-dependent and is illustrated by the corrupted linear MDP with dimension $d$, where $\CC^{\sigma}(\lambda,\hcF,\cZ_n^H)=\cO(d /c^{\dagger})$, $\ln N_n(\gamma)=\tcO(d^2)$, and $c^{\dagger}$ is the weighted coverage coefficient in the linear setting. When the size of the dataset $\cD$ is sufficiently large, the majority of samples exhibit low diversity (uncertainty) with respect to the entire offline dataset, i.e., the majority of weights $\sigma_i^h$ equals $1$. Consequently, under the standard (unweighted) coverage condition in \citet{jin2021pessimism}, the suboptimality bound in Theorem \ref{th:mdp} becomes $\tcO(Hd^{3/2}n^{-1/2} + d\zeta n^{-1})$. The corruption-independent term $\tcO(Hd^{3/2}n^{-1/2})$ matches that of PEVI \citep{jin2021pessimism}. The corruption term nearly matches the lower bound shown below. When the diversity of most samples is relatively high, the error bound recovers $\tcO(Hd^{3/2}\zeta^{1/2}n^{-1/2})$.

\noindent \textbf{Proof sketch.} The detailed proof of Theorem \ref{th:mdp} is provided in Appendix \ref{s:Proof of Theorem 1}. Here we present a brief proof sketch for the suboptimality bound, which is accomplished by three steps: (1) by Lemma \ref{lm:Regret_Decomposition_mdp}, if the uncorrupted Bellman backup $\cT^hf_n\in\hcF^h$ for each $h\in[H]$, we can bound the suboptimality by the sum of the bonus $\sum_{h=1}^H\beta^h\E_{\pi_*}[b^h(x^h,a^h)\,|\, x^1=x]$; (2) by Lemma \ref{lm:Confidence_Radius_mdp}, we demonstrate that an approximate uncertainty weight satisfying \eqref{eq:approximate_weight} is the key to bound the weighted Bellman error  $(\sum_{i=1}^n ((\hf^h - (\cT^h f_n^{h+1}))(x_i^h,a_i^h))^2/(\sigma_i^h)^2)^{1/2}$ by $\beta^h=c_{\beta}( \alpha\zeta^h + \sqrt{\ln(HN_n(\gamma)/\delta)})$; and (3) combining the results in the first two steps, we can obtain the suboptimality bounded by:
\$
\subopt(\hat\pi, x) = \tcO\big(H(\CC^{\sigma}(\lambda,\hcF,\cZ_n^H))^{1/2}\cdot(\ln N_n(\gamma))^{1/2}\cdot n^{-1/2} + \zeta\cdot\CC^{\sigma}(\lambda,\hcF,\cZ_n^H)\cdot n^{-1} \big).
\$

To control the weighted coverage coefficient $\CC^{\sigma}$ by  $C(\cF,\mu)$, which is a challenging task due to the intricate form of uncertainty-related weights, we present the following lemma.
\begin{lemma}\label{lm:Connections between Weighted and Unweighted Coefficient}
Under Assumption \ref{as:Well-Explored Dataset} and choose $\beta^h=C_{\beta}\sqrt{\ln N}$ (where $C_{\beta}>0$ contains the logarithmic terms that are omitted) and $\lambda$ given in Theorem \ref{th:mdp}, we have
\$
\CC^{\sigma}(\lambda,\hcF,\cZ_n^H) \le 1/C(\cF,\mu).
\$
\end{lemma}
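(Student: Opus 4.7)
The plan is to strip the sample weight $\sigma^h(z^h)$ from the numerator, convert the pointwise term $(f(z^h)-f'(z^h))^2$ into its $\mu^h$-expectation via Assumption \ref{as:Well-Explored Dataset}, and then show that the resulting expectation is dominated by the data-dependent denominator with high probability. Concretely, since $\sigma^h(z^h)^2 \ge 1$ by the definition in \eqref{eq:sigma^h(x^h,a^h)^2},
\$
\CC^\sigma(\lambda,\hcF,\cZ_n^H) \le \max_{h\in[H]} \E_{\pi_*}\bigg[\sup_{f,f'\in\hcF^h}\frac{n(f(z^h)-f'(z^h))^2}{\lambda+\sum_{i=1}^n(f(z_i^h)-f'(z_i^h))^2/(\sigma_i^h)^2}\bigg].
\$
Bounding $(f(z^h)-f'(z^h))^2 \le \|f-f'\|_\infty^2$ and invoking Assumption \ref{as:Well-Explored Dataset} give $(f(z^h)-f'(z^h))^2 \le C(\cF,\mu)^{-1}\E_{\mu^h}[(f(z^h)-f'(z^h))^2]$, a bound independent of the outer sample $z^h$. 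Hence the outer $\E_{\pi_*}$ collapses and the $\sup$ pulls outside, so it suffices to show that, uniformly in $h$ and with high probability,
\$
\sup_{f,f'\in\hcF^h}\frac{n\,\E_{\mu^h}[(f(z^h)-f'(z^h))^2]}{\lambda+\sum_{i=1}^n(f(z_i^h)-f'(z_i^h))^2/(\sigma_i^h)^2} \le 1.
\$

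To handle this inner ratio I would first derive a deterministic upper bound on the training weights. Starting from the trivial inequality $(f(z_i^h)-f'(z_i^h))^2/(\sigma_i^h)^2 \le \sum_j(f(z_j^h)-f'(z_j^h))^2/(\sigma_j^h)^2$ and plugging it into the $\psi$ expression used in Algorithm \ref{alg:wi}, one sees $\psi(z_i^h) \le \sigma_i^h/\alpha$. Combined with the fixed-point bound of Lemma \ref{lm:converge_weight}, this yields a uniform ceiling $\sigma_i^h \le M := \max(1,1/\alpha)$, so that $\sum_i(f(z_i^h)-f'(z_i^h))^2/(\sigma_i^h)^2 \ge M^{-2}\sum_i(f(z_i^h)-f'(z_i^h))^2$. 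A Bernstein-type concentration over an appropriate cover of $\hcF^h$ then yields, uniformly in $f,f'$,
\$
\sum_{i=1}^n (f(z_i^h)-f'(z_i^h))^2 \ge \tfrac{n}{2}\E_{\mu^h}[(f(z^h)-f'(z^h))^2] - O(\lambda),
\$
using the covering parameter $\gamma$ and $\lambda=\ln N_n(\gamma)$ chosen as in Theorem \ref{th:mdp}. Under the choice $\beta^h = C_\beta\sqrt{\ln N}$, the weighted diameter of $\hcF^h$ is pinned at the $\sqrt{\ln N}$ scale, so the $O(\lambda)$ slack is absorbed into the $\lambda$ term in the denominator and closes the target inequality.

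\textbf{Main obstacle.} The delicate point is the data-coupled nature of $\{\sigma_i^h\}$: each weight depends on every trajectory through the fixed-point iteration in Algorithm \ref{alg:wi}, so naive concentration of the weighted empirical sum is unavailable. The resolution is to decouple the weights from the $f,f'$ being optimized by using only the deterministic consequence $\sigma_i^h \le M$ of the fixed-point characterization, and then invoke concentration solely on the \emph{unweighted} sum $\sum_i(f(z_i^h)-f'(z_i^h))^2$, whose summands depend only on i.i.d.\ trajectories (cf.\ Lemma \ref{lm:minimum eigenvalue condition and condition for W and UW connections}) and on a fixed function pair. Tracking the logarithmic slack carefully and matching it with the particular choices of $\lambda$ and $\beta^h$ is what pins the constant in front of $1/C(\cF,\mu)$ to exactly $1$, rather than some spurious larger multiple.
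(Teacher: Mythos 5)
Your overall architecture — upper-bound the training weights via the fixed point of Algorithm \ref{alg:wi}, then use Assumption \ref{as:Well-Explored Dataset} to convert the empirical sum in the denominator into $nC(\cF,\mu)\|f-f'\|_\infty^2$ — is the right skeleton, but two of your concrete choices break the constant, and the second one breaks the bound entirely.

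First, the weight ceiling you derive is far too weak. From $(\sigma_i^h)^2\le\max(1,\sigma_i^h/\alpha)$ you get $\sigma_i^h\le M:=\max(1,1/\alpha)$; with the choice $\alpha=H\sqrt{\ln N}/\zeta$ from Theorem \ref{th:mdp}, $M$ scales like $\zeta/(H\sqrt{\ln N})$ and can grow polynomially in $n$. After you deflate the denominator by $M^{-2}$, the inner ratio is only bounded by $M^2$, not by $1$, so your route terminates at $\CC^\sigma\le M^2/C(\cF,\mu)$ — a corruption-dependent blow-up, not the claimed $1/C(\cF,\mu)$. The paper instead feeds Assumption \ref{as:Well-Explored Dataset} \emph{into} the fixed-point characterization: for the maximizing pair $f_i,f_i'$ the denominator of $\psi(z_i^h)$ is at least $\sqrt{nC(\cF,\mu)}\,\|f_i-f_i'\|_\infty/\max_j\sigma_j^h$, which yields the much sharper ceiling $\max_i\sigma_i^h\le\max\bigl(1,1/(\alpha\sqrt{nC(\cF,\mu)})\bigr)$.

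Second, discarding the numerator weight via $\sigma^h(z^h)\ge 1$ throws away exactly the factor needed to cancel the residual $\alpha$-dependence. The paper keeps it and uses the self-referential bound $(\sigma^h(z^h))^2\ge\psi(z^h)/\alpha^2$; in the regime $\alpha\sqrt{nC(\cF,\mu)}<1$ this produces $\psi(z^h)\le\alpha^2/\bigl(\psi(z^h)\,\alpha^2 n^2C(\cF,\mu)^2\bigr)$, i.e.\ $\psi(z^h)^2\le (nC(\cF,\mu))^{-2}$, with the $\alpha$'s cancelling. Without this step there is no mechanism in your argument that removes the dependence on $\alpha$ (equivalently on $\zeta$). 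A smaller point: the paper's $\mu^h$ in Assumption \ref{as:Well-Explored Dataset} is the \emph{empirical} distribution of the dataset, so $\sum_i(f(z_i^h)-f'(z_i^h))^2=n\E_{\mu^h}[(f-f')^2]$ is an identity and no Bernstein/covering concentration is needed inside this lemma (that concentration appears only in Lemma \ref{lm:minimum eigenvalue condition and condition for W and UW connections}, to verify the assumption for linear MDPs); your extra concentration step introduces slack you then have to absorb, which further loosens the constant you are trying to pin at $1$.
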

The main idea of the proof is to use the close relationship between the weights and the uncertainty. See Appendix \ref{s:Connections between Weighted and Unweighted Coefficient} for details.

\noindent \textbf{Lower Bound.} We construct a lower bound for linear MDPs with adversarial corruption $\zeta$ to show that the $\cO(d\zeta n^{-1})$ corruption term of our suboptimality is optimal. The construction of the lower bound is adapted from \citet{zhang2022corruption}, where an $\epsilon$-constant fraction of the dataset is contaminated. We present the proof of Theorem \ref{th:Lower Bound} in Appendix \ref{ss:Lower Bound for Linear MDPs with Corruption}.
\begin{theorem}[Minimax Lower Bound for Linear MDPs]\label{th:Lower Bound}
Under linear MDPs with corruption (Definition \ref{def:cor_mdp}), for any fixed data-collecting distribution $\nu$ satisfying Assumption \ref{as:Well-Explored Dataset}, any algorithm $L:\cD\rightarrow\Pi$ with the knowledge of $\zeta$ cannot find a better policy than $\cO(d\zeta n^{-1})$-optimal policy with probability more than $1/4$:
\$
\min_{L,\nu} \max_{\text{MDP},\cD} \subopt(\hat\pi_L) = \Omega\big(\zeta/nC(\cF,\mu)\big),
\$
where $\cD$ is the corrupted dataset initially generated from the MDP and then corrupted by an adversary, and $\hat\pi_L$ is the policy generated by the algorithm $L$.
\end{theorem}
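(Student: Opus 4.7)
My plan is to reduce to a $d$-point hypothesis testing problem and apply a packing/Le Cam argument, following the template of \citet{zhang2022corruption} but adapted to the total-corruption budget model of Definition \ref{def:cor_mdp}. The key quantitative observation is that under a well-explored sampling distribution each feature direction receives only $\Theta(n/d)$ samples, so an adversary with total budget $\zeta$ can inject per-sample reward perturbations of size $\Theta(d\zeta/n)$ concentrated on a single action without exceeding the budget. This perturbation is exactly enough to flip the identity of the best action.

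I would construct a one-step linear bandit ($H=1$) with actions $\{a_1,\ldots,a_d\}$, orthonormal features $\phi(a_j)=e_j\in\rR^d$, and deterministic rewards. Taking $\nu$ uniform over actions yields $\E_\nu[\phi\phi^\top]=I/d$, which matches \eqref{eq:minimum eigenvalue condition0} and, via Lemma \ref{lm:minimum eigenvalue condition and condition for W and UW connections}, gives $C(\cF,\mu)=\Theta(1/d)$. For each $k\in[d]$ I define an instance $M_k$ by $r(a_j)=\tfrac{1}{2}+\Delta\,\mathbf{1}\{j=k\}$ with $\Delta=c\,d\zeta/n$ for a small constant $c>0$; then $a_k$ is the unique optimum in $M_k$ and every other action is exactly $\Delta$-suboptimal. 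An $H$-step version is obtained by embedding this bandit at the last step of a chain with deterministic transitions, which preserves the order of the lower bound.

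Next I would specify the adversary's strategy on $M_k$: for every sample whose action is $a_k$, replace the observed reward $\tfrac{1}{2}+\Delta$ by $\tfrac{1}{2}$; leave all other samples unchanged. Since $\cT^1 g^2(x,a)=r(x,a)$ when $g^2\equiv 0$, the per-sample Bellman gap is $\Delta$ on touched samples and $0$ elsewhere, for a total cumulative corruption of $n_k\Delta$ with $n_k\sim\mathrm{Binomial}(n,1/d)$. A Chernoff bound gives $n_k\le 2n/d$ with probability at least $7/8$, and with $c\le 1/2$ this guarantees $n_k\Delta\le\zeta$, so Definition \ref{def:cor_mdp} is satisfied. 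Crucially, conditional on this high-probability event the corrupted dataset has exactly the same law as a dataset drawn from the ``null'' bandit in which every action yields reward $\tfrac{1}{2}$; in particular this law does not depend on the index $k$.

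Finally I would close with a standard averaging/Le Cam argument. Let $L$ be any algorithm, $\hat\pi_L$ its output, and Ch the Chernoff event. On Ch, the corrupted-data law is independent of $k$, so for every $j$ the probability $p_j:=\Pb_{M_0}[\hat\pi_L=a_j,\mathrm{Ch}]$ is shared across instances; writing $\Pb_{M_k}[\hat\pi_L=a_k]\le p_k+\Pb[\neg\mathrm{Ch}]\le p_k+1/8$ and averaging over $k$ gives some $k^\star$ with $\Pb_{M_{k^\star}}[\hat\pi_L=a_{k^\star}]\le 1/d+1/8$, hence $\Pb_{M_{k^\star}}[\hat\pi_L\ne a_{k^\star}]\ge 3/4$ for $d\ge 8$. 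On this event $\subopt(\hat\pi_L;M_{k^\star})=\Delta$, so no algorithm is better than $\Delta$-optimal with probability more than $1/4$; substituting $\Delta=\Omega(d\zeta/n)=\Omega(\zeta/(nC(\cF,\mu)))$ yields the stated bound. The main obstacle is the bookkeeping that simultaneously (i) honours the hard budget constraint $\sum_i|\zeta_i^h|\le\zeta^h$ of Definition \ref{def:cor_mdp} rather than an expected one, (ii) survives the Chernoff conditioning on $n_{k^\star}$, and (iii) collapses cleanly to the probability-$1/4$ threshold; all three are handled by choosing $c$ small enough and absorbing the constants exactly as in \citet{zhang2022corruption}'s linear-MDP construction.
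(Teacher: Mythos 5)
Your proposal is correct, and its engine is the same as the paper's: a feature direction that a well-explored distribution visits only $\Theta(n/d)$ times can have a per-sample signal of magnitude $\Theta(d\zeta/n)$ erased without exceeding the total budget $\zeta$, making instances separated by $\Theta(d\zeta/n)$ in value statistically indistinguishable, after which the factor $d$ is converted into $1/C(\cF,\mu)$. The packaging, however, is genuinely different. The paper builds a depth-$L$ tree-structured tabular MDP with $SA=d$, puts Bernoulli rewards on one leaf pair, selects the least-visited leaf pair $(s',a')$ by pigeonhole, has the adversary zero out its positive rewards, and closes with a two-point Le Cam argument between $M$ and $M'$; you instead use a $d$-armed deterministic linear bandit with orthonormal features and a $d$-hypothesis averaging argument against a common ``null'' law. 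Your route is more elementary in two respects: the post-corruption indistinguishability is exact (no likelihood-ratio or KL computation is needed), and the $d$-point averaging delivers the $1/4$ success-probability threshold cleanly for $d\ge 8$, whereas the paper's two-point version leans on the standard ``no policy is good on both instances'' step. What the paper's construction buys is that it is a genuine $H$-step MDP from the outset, while you need the (routine but necessary) embedding of your bandit at the last step of a deterministic chain. Your treatment of the almost-sure budget constraint of Definition \ref{def:cor_mdp} via the Chernoff event and the inequality $\Pb_{M_k}[\hat\pi_L=a_k]\le \Pb_{M_0}[\hat\pi_L=a_k]+\Pb[\neg\mathrm{Ch}]$ is sound. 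Two cosmetic caveats: as written you fix $\nu$ uniform, so for a general well-explored $\nu$ you should target the least-covered direction (the exact analogue of the paper's pigeonhole step), and you need $d\zeta/n=O(1)$ so that the perturbed rewards remain admissible, which is anyway required for the bound to be non-vacuous.
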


\section{Experiments}
Based on our theoretical results, we propose a practical implementation for CR-PEVI and verify its effectiveness on simulation tasks with corrupted offline data. 
% More details can be found in Appendix \ref{}.

% \subsection{Practical Implementation}
\noindent \textbf{Practical Implementation.}
To make our algorithm more practical, we use neural networks to estimate the $Q$ function (i.e., $f$ in our theory) and the weight function $\sigma$. 
In linear MDPs, under a sufficiently broad function approximation class $\hcF^h$ and sufficiently small parameter $\lambda$ in Eq. \eqref{eq:bonus}, the bonus function can be simplified as the bootstrapped uncertainty of $Q$ functions, which in turn can be estimated via the standard deviation of an ensemble of $Q$ networks. We defer the detailed discussion to Appendix \ref{ss:Relationship between the Bootstrapped Uncertainty and Bonus}. Following the state-of-the-art  uncertainty-based offline RL algorithm Model Standard-deviation Gradients (MSG) \cite{ghasemipour2022so}, we learn a group of $Q$ networks $Q_{w_i}, i=1,\ldots,K$ with independent targets and optimize a policy $\pi_{\theta}$ with a lower-confidence bound (LCB) objective \cite{ghasemipour2022so,bai2022pessimistic}. Specifically, $Q_{w_i}$ is learned to minimize a weighted regression objective similar to Eq.\eqref{eq:weighted least-square regression}. The weight function $\sigma$ is estimated via bootstrapped uncertainty:
$
    (\sigma(x,a))^2 = \max(\sqrt{\mathbb{V}_{i=1,\ldots,K}\left[Q_{w_i}(x,a)\right]}, 1)
$, where $\mathbb{V}_{i=1,\ldots,K}\left[Q_{w_i}\right]$ is the variance between the group of $Q$ functions. This uncertainty estimation method has also been adopted by prior works \cite{bai2022pessimistic,ghasemipour2022so,yang2023essential}. We refer to our practical algorithm as Uncertainty Weighted MSG (UWMSG) and defer details to Appendix \ref{ap:implement_detail}.

% \subsection{Experimental Setup}
\noindent \textbf{Experimental Setup.}
We assess the performance of our approach using continuous control tasks from \cite{fu2020d4rl} and introduce both random and adversarial attacks on either the rewards or dynamics for the offline datasets. Details about the four types of data corruption and their cumulative corruption levels are deferred to Appendix \ref{ap:implement_detail}. The ensemble size $K$ is set to $10$ for all experiments. For evaluation, we report average returns with standard deviations over 10 random seeds. More implementation details are also provided in Appendix \ref{ap:implement_detail}.

\begin{figure}[t]
    \centering
    \subfigure[]{\includegraphics[width=0.255\linewidth, trim=0 0 3 0, clip]{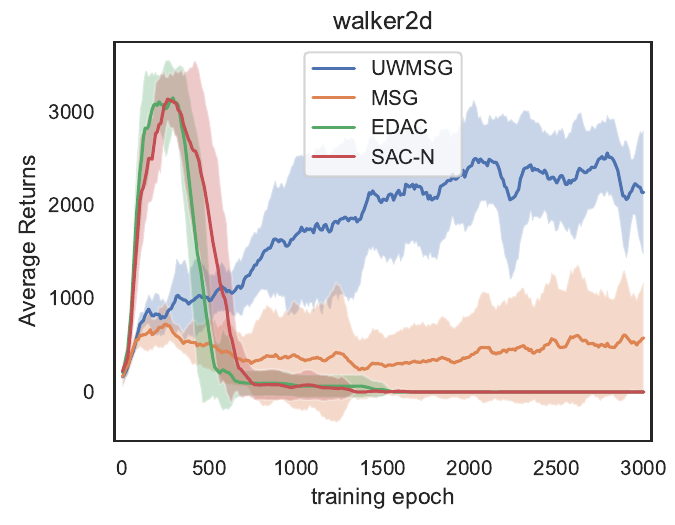}}
    \subfigure[]{\includegraphics[width=0.24\linewidth, trim=19 0 3 0, clip]{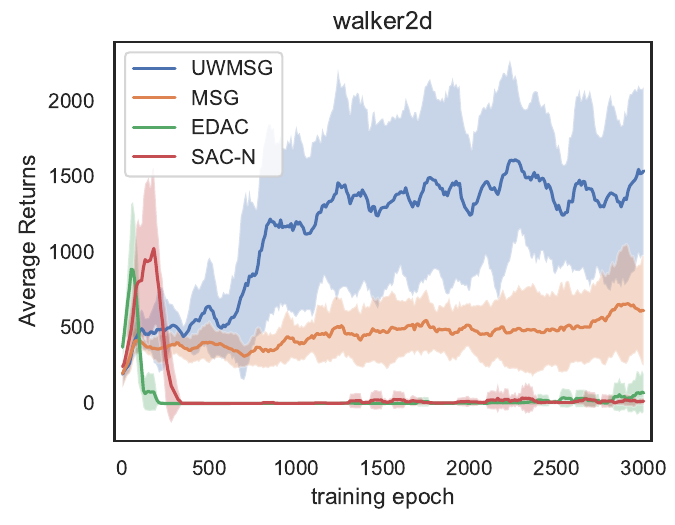}}
     \subfigure[]{\includegraphics[width=0.24\linewidth, trim=19 0 3 0, clip]{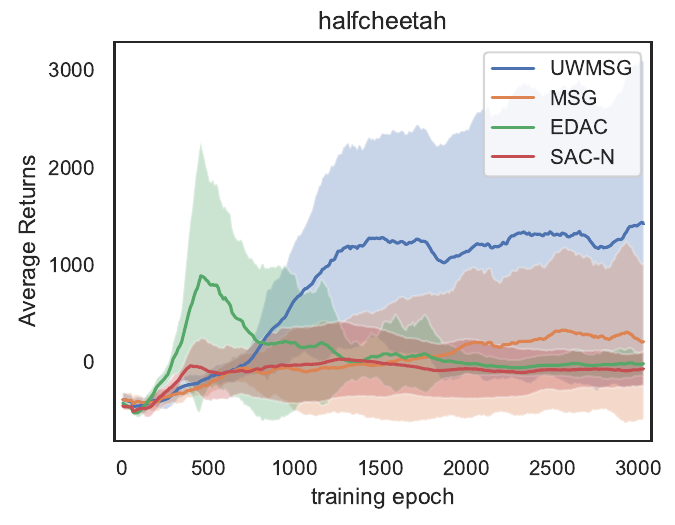}}
     \subfigure[]{\includegraphics[width=0.24\linewidth, trim=19 0 3 0, clip]{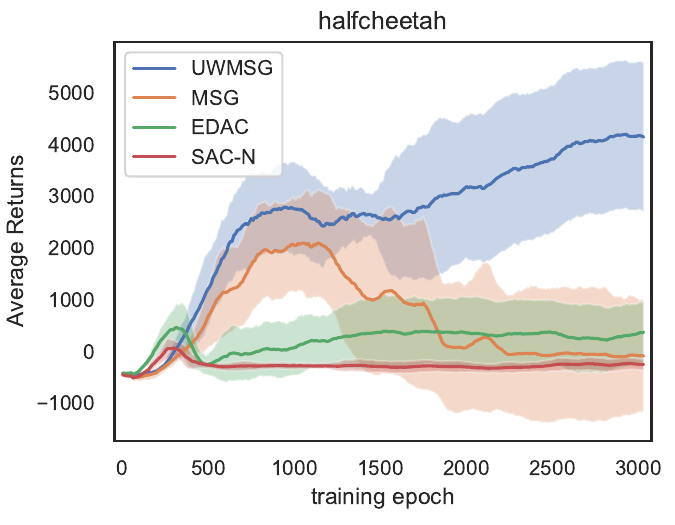}}
    \caption{Performance on the Walker2d and the Halfcheetah tasks under (a) random reward, (b) random dynamics, (c) adversarial reward, and (d) adversarial dynamics attacks.}
    \label{fig:experiments_partial}
\end{figure}

\begin{table}[t]
\small
  \caption{Comparison of different offline RL algorithms under different environments and different data corruption types.}
  \vspace{3pt}
  \label{tab:final_attack}
  \centering
  \begin{adjustbox}{width=0.95\columnwidth}
  \begin{tabular}{l|l|c|c|c|c}
    \toprule
   Environment & Attack Type  & UWMSG & MSG & EDAC & SAC-N \\
    \midrule
   \multirow{4}{*}{Halfcheetah}  &  Random Reward &  7299.9 $\pm$ 169.0 & 4339.5 $\pm$ 3958.7 & 7128.2 $\pm$ 120.5 & \textbf{7357.5 $\pm$ 165.2} \\
    &  Random Dynamics &  \textbf{1425.0 $\pm$ 1659.5}  & 212.9 $\pm$ 793.3 &  -12.3 $\pm$ 131.8 & -66.2 $\pm$ 169.9 \\
    & Adversarial Reward &  \textbf{1016.7 $\pm$ 503.9} & 243.1 $\pm$ 338.6  &  -127.2 $\pm$ 30.3 & -55.7 $\pm$ 24.0 \\
    &  Adversarial Dynamics & \textbf{4144.3 $\pm$ 1437.6} & -87.3 $\pm$ 1055.6 & 374.0 $\pm$ 589.5 & -246.8 $\pm$ 104.9  \\
   \hline
 \multirow{4}{*}{Walker2d}    &  Random Reward & \textbf{2189.7 $\pm$ 603.0} & 539.1 $\pm$ 534.6 & -3.7 $\pm$ 1.2 & -3.8 $\pm$ 1.4 \\
    &  Random Dynamics & \textbf{2278.9 $\pm$ 706.4} & 2122.8 $\pm$ 821.5 &  -3.6 $\pm$ 0.8 & -3.1 $\pm$ 0.7 \\
     &  Adversarial Reward & \textbf{1433.4 $\pm$ 592.1} & 605.8 $\pm$ 310.7 &  61.0 $\pm$ 120.6 & 9.5 $\pm$ 21.6 \\
    &  Adversarial Dynamics & \textbf{946.0 $\pm$ 300.4} & 506.0 $\pm$ 175.5 & -4.8 $\pm$ 0.3 & -5.1 $\pm$ 0.7 \\
    \hline
\multirow{4}{*}{Hopper}    &  Random Reward & \textbf{2021.1 $\pm$ 888.6} & 1599.6 $\pm$ 814.5 & 107.7 $\pm$ 65.5 & 178.8 $\pm$ 114.0 \\
    &  Random Dynamics & \textbf{2116.2 $\pm$ 618.6} & 1552.7 $\pm$ 532.8 &  5.9 $\pm$ 1.2 & 3.9 $\pm$ 2.2 \\
     &  Adversarial Reward & \textbf{751.4 $\pm$ 72.5} & 651.0 $\pm$ 58.0  &  29.2 $\pm$ 18.7 & 111.8 $\pm$ 62.7 \\
    &  Adversarial Dynamics & \textbf{931.2 $\pm$ 227.8} & 717.7 $\pm$ 204.3 & 5.9 $\pm$ 1.2 & 3.9 $\pm$ 2.2 \\
    \hline
    \multicolumn{2}{l|}{Average}   & \textbf{2212.8} & 1083.6 & 630.0 & 607.1 \\
     \bottomrule
  \end{tabular}
  \end{adjustbox}
\end{table}

% \subsection{Experimental Results}
\noindent \textbf{Experimental Results.}
We compare UWMSG with the state-of-the-art uncertainty-base offline RL methods, MSG \cite{ghasemipour2022so}, EDAC \cite{an2021uncertainty}, and SAC-N \cite{an2021uncertainty} under four types of data corruption. In particular, MSG can be considered as UWMSG with a constant weighting function $\sigma(s,a)=1$. As demonstrated in Table \ref{tab:final_attack} and Figure \ref{fig:experiments_partial}, our empirical results find that (1) current offline RL methods are susceptible to data corruption, e.g., MSG, EDAC, SAC-N achieve poor performance under adversarial attacks, and (2) our proposed UWMSG significantly improves performance under different data corruption scenarios, with an average improvement of $104\%$ over MSG. More results can be found in Appendix~\ref{ap:additionl_exp}. In summary, the experimental results validate the theoretical impact of data corruption for value-based offline RL algorithms. Our practical implementation algorithm demonstrates superior efficacy under different data corruption types, thereby highlighting its potential for real-world applications.

\section{Conclusion}
This work investigates the adversarially corrupted offline RL with general function approximation. We propose the uncertainty weight iteration and a weighted version of PEVI. Under a partial coverage condition and a well-explored dataset, our algorithm achieves a suboptimality bound, where the corruption-independent term recovers the uncorrupted bound, and the corruption-related term nearly matches the lower bound in linear models. Furthermore, our experiments demonstrate promising results showing that our practical implementation, UWMSG, significantly enhances the performance of the state-of-the-art offline RL algorithm under reward and dynamics data corruptions.

Our work suggests several potential future directions. First, it remains unsolved whether one can design robust algorithms to handle additional distribution shifts of the state-action pairs. Second, when the corruption level is unknown, how to design a theoretically robust offline RL algorithm and overcome the lack of adaptivity in the offline setting requires further research. Finally, we hope that our work sheds light on applying uncertainty weights to improve robustness in deep offline RL works and even practical applications.

\section*{Acknowledgements}
We thank Wei Xiong and Jiafan He for valuable discussions and feedback on an early draft of this work. We also thank the anonymous reviewers for their helpful comments. %QG is supported in part by the JP Morgan Faculty Research Award.

\bibliography{ref.bib}
\bibliographystyle{apalike}

%%%%%%%%%%%%%%%%%%%%%%%%%%%%%%%%%%%%%%%%%%%%%%%%%%%%%%%%%%%%

\newpage
\appendix
\section{Proof of Theorem \ref{th:mdp}}\label{s:Proof of Theorem 1}
\subsection{Step I: Suboptimality Decomposition}
\begin{lemma}[Regret Decomposition]\label{lm:Regret_Decomposition_mdp}
Assuming that $\cT^hf_n\in\hcF^h$ for all $h\in[H]$, we have
$$
\subopt(\hat{\pi}, x) \le 2\sum_{h=1}^H\beta^h\E_{\pi_*}\big[ b^h(x^h,a^h)\,\big|\, x^1=x\big].
$$
\end{lemma}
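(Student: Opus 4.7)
The plan is to follow the standard pessimism-based suboptimality decomposition pioneered in the PEVI analysis of \citet{jin2021pessimism}, carefully adapted to handle both the weighted confidence set and the nonnegativity clipping in the definition of $f_n^h$. The argument rests on two pointwise facts: (i) the pessimistic estimate satisfies $f_n^h(x,a)\le Q_*^h(x,a)$, so $\hat\pi$ competes against a lower bound on $V_*$; and (ii) the Bellman residual $\zeta^h:=f_n^h-\cT^h f_n^{h+1}$ satisfies $|\zeta^h(x,a)|\le 2\beta^h b^h(x,a)$ and is nonpositive.

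First, I would exploit the hypothesis $\cT^h f_n^{h+1}\in\hcF^h$ together with the fact that $\hat f^h\in\hcF^h$ by construction. From the definition of $\hcF^h$, both functions lie within a $\beta^h$-ball (in the weighted empirical norm) around $\hat f^h$, so by the triangle inequality the weighted empirical distance between any two elements of $\hcF^h$ is at most $2\beta^h$. Plugging this into the definition of the bonus $b^h$ in \eqref{eq:bonus} gives the pointwise bound $|\hat f^h(x,a)-(\cT^h f_n^{h+1})(x,a)|\le 2\beta^h b^h(x,a)$, which in particular implies $\hat f^h(x,a)-\beta^h b^h(x,a)\le (\cT^h f_n^{h+1})(x,a)$. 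Since $(\cT^h f_n^{h+1})(x,a)\ge 0$ (rewards and $f_n^{h+1}$ are nonnegative), the clipping $f_n^h=\max(0,\hat f^h-\beta^h b^h)$ preserves the inequality $f_n^h\le \cT^h f_n^{h+1}$ pointwise; a backward induction starting from $f_n^{H+1}\equiv 0\le V_*^{H+1}$ then yields $f_n^h\le Q_*^h$. The same case analysis (one case for each branch of the max) also shows $|\zeta^h|\le 2\beta^h b^h$.

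Second, I would invoke the extended value-difference decomposition: for the greedy policy $\hat\pi$ with respect to $f_n$,
\[
V_*^1(x)-V_{\hat\pi}^1(x)=-\sum_{h=1}^H\E_{\pi_*}\bigl[\zeta^h(x^h,a^h)\,\big|\,x^1=x\bigr]+\sum_{h=1}^H\E_{\hat\pi}\bigl[\zeta^h(x^h,a^h)\,\big|\,x^1=x\bigr]+\sum_{h=1}^H\E_{\pi_*}\bigl[\langle f_n^h(x^h,\cdot),\pi_*(\cdot\,|\,x^h)-\hat\pi(\cdot\,|\,x^h)\rangle\,\big|\,x^1=x\bigr].
\]
The third sum is nonpositive because $\hat\pi$ maximizes $f_n^h$ action-wise; the second sum is nonpositive by pessimism ($\zeta^h\le 0$); and the first sum is at most $\sum_h\E_{\pi_*}[|\zeta^h|]\le 2\sum_h\beta^h\E_{\pi_*}[b^h(x^h,a^h)\mid x^1=x]$ by the bound from the previous step. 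Combining the three contributions gives the claimed inequality.

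The main subtlety, and the step I expect to write most carefully, is the pessimism argument under the clipping in $f_n^h=\max(0,\hat f^h-\beta^h b^h)$: one must verify the inductive hypothesis survives the max (easy using $\cT^h f_n^{h+1}\ge 0$) and simultaneously ensure the two-sided bound $|\zeta^h|\le 2\beta^h b^h$ remains valid on the ``clipped'' branch, which reduces to checking that $(\cT^h f_n^{h+1})(x,a)\le 2\beta^h b^h(x,a)$ whenever $\hat f^h(x,a)<\beta^h b^h(x,a)$ via the already-established one-sided bound $\cT^h f_n^{h+1}\le \hat f^h+\beta^h b^h$. Beyond that wrinkle, the argument is a direct application of the pessimistic value-iteration template and does not interact with the corruption or weighting machinery at all; those only enter when bounding $\beta^h$ in subsequent lemmas.
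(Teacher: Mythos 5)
Your overall route is the same as the paper's: apply the value-difference decomposition of \citet{jin2021pessimism} (Lemma~\ref{lm:Suboptimality Decomposition}), discard the policy-comparison term by greediness of $\hat\pi$, discard the $\E_{\hat\pi}$ term by pessimism, and bound the remaining $\E_{\pi_*}$ term by $2\sum_{h}\beta^h\E_{\pi_*}[b^h]$. This is exactly the paper's proof structure.

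There is, however, one concrete slip in the key pointwise estimate. You derive $|\hat f^h(x,a)-(\cT^h f_n^{h+1})(x,a)|\le 2\beta^h b^h(x,a)$ via a triangle inequality over arbitrary pairs in $\hcF^h$, and then assert that this ``in particular implies $\hat f^h-\beta^h b^h\le \cT^h f_n^{h+1}$.'' It does not: a two-sided bound with constant $2\beta^h$ only yields $\hat f^h-2\beta^h b^h\le \cT^h f_n^{h+1}$, which is not enough to conclude $f_n^h\le\cT^h f_n^{h+1}$ after clipping; without that, the $\E_{\hat\pi}$ term is no longer nonpositive, and it cannot be absorbed elsewhere because the coverage coefficient only controls bonus expectations under $\pi_*$, not under $\hat\pi$. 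The triangle inequality is also unnecessary here: the confidence set $\hcF^h$ is centered at $\hat f^h$, so the hypothesis $\cT^h f_n^{h+1}\in\hcF^h$ directly gives $\big(\lambda+\sum_{i=1}^n(\cT^h f_n^{h+1}(x_i^h,a_i^h)-\hat f^h(x_i^h,a_i^h))^2/(\sigma_i^h)^2\big)^{1/2}\le\beta^h$, and since the pair $(\hat f^h,\cT^h f_n^{h+1})$ is admissible in the supremum defining $b^h$, one obtains the factor-one bound $|\hat f^h-\cT^h f_n^{h+1}|\le\beta^h b^h$. This is what the paper uses and what your subsequent steps (the one-sided bound $\cT^h f_n^{h+1}\le\hat f^h+\beta^h b^h$, pessimism surviving the clipping via $\cT^h f_n^{h+1}\ge 0$, and the final constant $2$) actually require. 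With that correction your argument coincides with the paper's.
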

\begin{proof}
By invoking Lemma \ref{lm:Suboptimality Decomposition}, we can decompose the suboptimality as follows:
\#\label{eq:suboptimality_decompose}
\subopt(\hat{\pi}, x) &= V_*^1(x) - V_{\hat{\pi}}^1(x)\notag\\
& = \sum_{h=1}^H\E_{\pi_*}\big[f_n^h(x^h,\pi_*(x^h)) - f_n^h(x^h,\hat{\pi}(x^h)) \,\big|\, x^1=x\big]\notag\\
&\qquad - \sum_{h=1}^H\E_{\pi_*}\big[ \cE^h(f_n,x^h,a^h)\,\big|\, x^1=x\big] + \sum_{h=1}^H\E_{\hat{\pi}}\big[ \cE^h(f_n,x^h,a^h)\,\big|\, x^1=x\big]\notag\\
&\le  \underbrace{-\sum_{h=1}^H\E_{\pi_*}\big[ \cE^h(f_n,x^h,a^h)\,\big|\, x^1=x\big]}_{(a)} + \underbrace{\sum_{h=1}^H\E_{\hat{\pi}}\big[ \cE^h(f_n,x^h,a^h)\,\big|\, x^1=x\big]}_{(b)},
\#
where the inequality is due to $\hat\pi(x^h)=\argmax_{a\in\cA} f_n^h(x^h,a)$. Then, we handle the above two terms respectively. 
We first tackle term (b). Supposing that $\cT^h f_n^{h+1} \in \hcF^h$, we get from the definition of the confidence set in Algorithm \ref{alg:mdp} that
\#\label{eq:from the definition of the confidence set}
\Big(\lambda + \sum_{i=1}^n \frac{((\cT^h f_n^{h+1})(x_i^h,a_i^h) - \hat f^h(x_i^h,a_i^h))^2}{(\sigma_i^h)^2}\Big)^{1/2} \le \beta^h.
\#
Therefore, for any $(x^h,a^h)\in\cX\times\cA$, we have
{\small
\$
&\big|\hat{f}^h(x^h,a^h) - (\cT^h f_n^{h+1})(x^h,a^h)\big|\\
&\qquad= \Big(\lambda + \sum_{i=1}^n \frac{((\cT^h f_n^{h+1})(x_i^h,a_i^h) - \hat f^h(x_i^h,a_i^h))^2}{(\sigma_i^h)^2}\Big)^{1/2}\cdot\frac{\big|\hat{f}^h(x^h,a^h) - (\cT^h f_n^{h+1})(x^h,a^h)\big|}{\Big(\lambda + \sum_{i=1}^n ((\cT^h f_n^{h+1})(x_i^h,a_i^h) - \hat f^h(x_i^h,a_i^h))^2/(\sigma_i^h)^2\Big)^{1/2}}\\
&\qquad\le \beta^h b^h(x^h,a^h),
\$}
where the inequality uses \eqref{eq:from the definition of the confidence set} and the definition of the bonus $b^h$ \eqref{eq:bonus}. Then, combining the above result  with $f_n^h(x,a)=\max(0, \hf^h(x,a)-\beta^hb^h(x,a))$ and $(\cT^h f_n^{h+1})(x^h,a^h)\ge0$, we get
$$
- 2\beta^hb^h(x^h,a^h) \le f_n^h(x^h,a^h) - (\cT^h f_n^{h+1})(x^h,a^h) \le 0.
$$
Hence, the term (b) is bounded by
\#\label{eq:suboptimality_term b}
\sum_{h=1}^H\E_{\hat{\pi}}\big[ \cE^h(f_n,x^h,a^h)\,\big|\, x^1=x\big] \le 0.
\#
Moreover, the term (a) is bounded by
\#\label{eq:suboptimality_term a}
 - \sum_{h=1}^H\E_{\pi_*}\big[ \cE^h(f_n,x^h,a^h)\,\big|\, x^1=x\big] \le 2\sum_{h=1}^H\beta^h\E_{\pi_*}\big[ b^h(x^h,a^h)\,\big|\, x^1=x\big].
\#
Finally, by taking \eqref{eq:suboptimality_term a} and \eqref{eq:suboptimality_term b} back into \eqref{eq:suboptimality_decompose}, we conclude the proof.
\end{proof}

\subsection{Step II: Sharper confidence radius for Pessimism}
\begin{lemma}[Confidence Radius]\label{lm:Confidence_Radius_mdp}
In Algorithm \ref{alg:mdp}, for all $h\in[H]$ we have $\cT^hf_n^{h+1}\in\hcF^h$ with probability at least $1-\delta$, where we choose 
$$
\beta^h=24\alpha\zeta^h + \Big(12\lambda + 12\ln(2HN_n^h(\gamma)/\delta) + 12(5\beta^{h+1}\gamma)^2n + 60\beta^{h+1}\gamma\sqrt{nC_1^h(n,\zeta)}\Big)^{1/2},
$$
where 
$$
\begin{aligned}
N_n(\gamma)=\max_h N\Big(\frac{\gamma}{n},\cF^h\Big)\cdot N\Big(\frac{\gamma}{n},\cF^{h+1}\Big)\cdot N\Big(\frac{\gamma}{n},\cB^{h+1}(\lambda)\Big),
\end{aligned}
$$
we use the notation $C_1^h(n,\zeta)=2(\sum_{i=1}^n(\zeta_i^h)^2 + 2n\eta^2 + 3\eta^2\ln(2/\delta))$, and $\zeta^h=\sum_{i=1}^n\zeta_i^h$.
\end{lemma}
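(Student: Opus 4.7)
The plan is to establish $\lambda + L_n^h \le (\beta^h)^2$ with $L_n^h := \sum_{i=1}^n (\hat f^h(z_i^h) - (\cT^h f_n^{h+1})(z_i^h))^2/(\sigma_i^h)^2$; since Bellman completeness (Assumption~\ref{as:realizability and completeness}) gives $\cT^h f_n^{h+1}\in\cF^h$ whenever $f_n^{h+1}\in[0,1]$, this inequality is exactly membership in $\hcF^h$. Writing $y_i^h = r_i^h + f_n^{h+1}(x_i^{h+1})$, $\Delta_i = \hat f^h(z_i^h) - (\cT^h f_n^{h+1})(z_i^h)$, and splitting $y_i^h - (\cT^h f_n^{h+1})(z_i^h) = \epsilon_i^h + \zeta_i^h$ with $\epsilon_i^h = y_i^h - (\cT_i^h f_n^{h+1})(z_i^h)$ the mean-zero, bounded conditional noise and $\zeta_i^h = (\cT_i^h - \cT^h) f_n^{h+1}(z_i^h)$ the corruption, the weighted least-squares optimality of $\hat f^h$ versus the competitor $\cT^h f_n^{h+1}$ rearranges to
\begin{align*}
L_n^h \le 2\sum_{i=1}^n \frac{\Delta_i\,\epsilon_i^h}{(\sigma_i^h)^2} + 2\sum_{i=1}^n \frac{\Delta_i\,\zeta_i^h}{(\sigma_i^h)^2}.
\end{align*}

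\textbf{Corruption term.} The key is to activate the uncertainty weights from Lemma~\ref{lm:converge_weight}. Since $\hat f^h, \cT^h f_n^{h+1}\in\cF^h$, the supremum defining $\psi(z_i^h)$ dominates $|\Delta_i|/(\alpha\sqrt{\lambda+L_n^h})$, and combined with $\psi(z_i^h)\le 2(\sigma_i^h)^2$ from \eqref{eq:approximate_weight} this yields the pointwise bound $|\Delta_i|/(\sigma_i^h)^2 \le 2\alpha\sqrt{\lambda+L_n^h}$. Summing and applying $\sum_i |\zeta_i^h|\le \zeta^h$ from Definition~\ref{def:cor_mdp} bounds the corruption contribution by $4\alpha\zeta^h\sqrt{\lambda + L_n^h}$, which will produce the $24\alpha\zeta^h$ summand of $\beta^h$ after the final quadratic is solved.

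\textbf{Noise term via three-factor covering.} Given fixed $(z_i^h,f_n^{h+1})$, $\epsilon_i^h$ is a martingale difference with respect to the data-collection filtration, so a Freedman/Bernstein-type inequality controls $\sum_i \Delta_i \epsilon_i^h/(\sigma_i^h)^2$ pointwise. To handle the joint data-dependence of $\hat f^h\in\cF^h$, $\hat f^{h+1}\in\cF^{h+1}$, and $b^{h+1}\in\cB^{h+1}(\lambda)$, I would pass to a $(\gamma/n)$-cover of $\cF^h\times\cF^{h+1}\times\cB^{h+1}(\lambda)$ of cardinality $N_n(\gamma)$, union-bound over cover elements and over $h\in[H]$ (producing the $\ln(2HN_n(\gamma)/\delta)$ factor), and track the perturbation error induced by replacing $f_n^{h+1}$ by its cover surrogate. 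Because $f_n^{h+1}$ enters $y_i^h$ with a prefactor $\beta^{h+1}$ through the pessimistic subtraction of $\beta^{h+1} b^{h+1}$, the quadratic expansion $\sum_i(\Delta\cdot\cdot\cdot)^2$ incurs a correction of order $(5\beta^{h+1}\gamma)^2 n$, while the Cauchy--Schwarz cross term against $\sqrt{\sum_i (\epsilon_i^h+\zeta_i^h)^2}$ contributes $5\beta^{h+1}\gamma\sqrt{n C_1^h(n,\zeta)}$, where the bound $\sum_i(\epsilon_i^h+\zeta_i^h)^2 \le 2(\sum_i(\zeta_i^h)^2 + 2n\eta^2 + 3\eta^2\ln(2/\delta))=:C_1^h(n,\zeta)$ comes from the sub-Gaussian tail for $\sum_i(\epsilon_i^h)^2$. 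Altogether the noise term is controlled by $2\sqrt{L_n^h\cdot A^h}$ with $A^h$ equal to the bracket inside the square root of the stated $\beta^h$ (up to the explicit constants).

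\textbf{Solving the quadratic and main obstacle.} Combining gives $L_n^h \le 4\alpha\zeta^h\sqrt{\lambda+L_n^h} + 2\sqrt{L_n^h\cdot A^h}$. Setting $u = \lambda + L_n^h$ and applying the implication $u \le \lambda + (4\alpha\zeta^h + 2\sqrt{A^h})\sqrt{u} \Rightarrow \sqrt{u} \le 4\alpha\zeta^h + 2\sqrt{A^h} + \sqrt{\lambda}$, then squaring and absorbing numerical slack into the explicit constants $12$ and $60$ in the lemma, yields $u \le (\beta^h)^2$. A $\delta$-level union bound over $h\in[H]$ concludes the argument. The principal obstacle is the noise step: although $\epsilon_i^h$ is a martingale difference, the simultaneous dependence of $\hat f^h$, $\hat f^{h+1}$ and $b^{h+1}$ on the offline data forces a uniform argument over the three-factor cover, and the cover radius $\gamma$ must be calibrated against the bonus prefactor $\beta^{h+1}$ so that the discretization error appears only as the $(5\beta^{h+1}\gamma)^2 n$ quadratic term plus the $\sqrt{n}$-scale linear term, rather than as a crude $O(n)$ bound that would destroy the final $\tcO(1/\sqrt{n})$ rate.
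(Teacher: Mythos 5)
Your proposal is correct and follows essentially the same route as the paper's proof: the paper likewise controls the corruption term via the approximate-weight property of Lemma \ref{lm:converge_weight} to get an $\alpha\zeta^h\sqrt{\lambda+L_n}$ contribution, handles the data-dependence of $f_n^{h+1}$ through a cover of $\cF^h\times\cF^{h+1}\times\cB^{h+1}(\lambda)$ with the $\beta^{h+1}$-scaled radius producing exactly the $(5\beta^{h+1}\gamma)^2 n$ and $60\beta^{h+1}\gamma\sqrt{nC_1^h}$ corrections, and closes with the same self-bounding quadratic. The only difference is presentational: the paper works with a surrogate target $\bar y_i^h$ built from the cover element and imports the concentration-plus-covering step as Lemma \ref{lm:empirical_diff_mdp} (Lemma G.4 of \citet{ye2022corruptionrobust}), whereas you sketch that step from first principles via the weighted ERM basic inequality.
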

\begin{proof}
At each step $h\in[H]$, let $\cF_{\gamma}^{h+1}$ be a $(\gamma,\|\cdot\|_{\infty})$ cover of $\cF^{h+1}$, and $\cB^{h+1}_{\gamma}$ as an $(\gamma,\|\cdot\|_{\infty})$ cover of $\cB^{h+1}(\lambda)$. Then, we construct $\Bar{\cF}_{\gamma}^{h+1}=\cF_{\gamma}^{h+1}\oplus\beta_{\tau}^{h+1}\cB^{h+1}_{\gamma}$ as a $((1+\beta^{h+1})\gamma,\|\cdot\|_{\infty})$ cover of $f_n^{h+1}(\cdot)$. Given $f_n^{h+1}$, we have $\Bar{f}^{h+1}\in\Bar{\cF}_{\gamma}^{h+1}$ so that $\|\Bar{f}^{h+1}-f_n^{h+1}\|_{\infty}\le\beps=(1+\beta^{h+1})\gamma$. Then, we define $y_i^h = r_i^h + f_n^{h+1}(x_i^{h+1})$, $\Bar{y}_i^h=r_i^h+\Bar{f}^{h+1}(x_i^{h+1})$ and
$$
\tilde{f}^h = \argmin_{f^h\in\cF^h}\sum_{i=1}^n (f^h(x_i^h,a_i^h)-\Bar{y}_i^h)^2.
$$
We know from the definition of the covers that
\#\label{eq:approximate_erm}
&\left(\sum_{i=1}^n (\hat{f}^h(x_i^h,a_i^h)-\Bar{y}_i^h)^2\right)^{1/2} \le \left(\sum_{i=1}^n (\hat{f}^h(x_i^h,a_i^h)-y_i^h)^2\right)^{1/2}+\sqrt{n}\beps\nonumber\\
&\qquad \le \left(\sum_{i=1}^n(\tilde{f}^h(x_i^h,a_i^h)-y_i^h)^2\right)^{1/2}+\sqrt{n}\beps \le \left(\sum_{i=1}^n (\tilde{f}^h(x_i^h,a_i^h)-\Bar{y}_i^h)^2\right)^{1/2}+2\sqrt{n}\beps,
\#
where the first and third inequality is due to $\|\Bar{f}^{h+1}-f_n^{h+1}\|_{\infty}\le\beps$, and the second inequality comes from the fact that $\hat{f}^h$ is the ERM solution to the least squares problem. Then, we can invoke Lemma \ref{lm:empirical_diff_mdp} by taking $f_*$ as $\E[\Bar{y}_i^h|x_i^h,a_i^h]=(\cT_i^h\Bar{f}^{h+1})(x_i^h,a_i^h)$ and $f_b$ as $\cT^h\Bar{f}^{h+1}$. With probability at least $1-\delta$, we obtain:
\#\label{eq:sum_hat_fkh_Tb_barf0}
&\sum_{i=1}^n \frac{\left(\hat{f}^h(x_i^h,a_i^h)-(\cT^h\Bar{f}^{h+1})(x_i^h,a_i^h)\right)^2}{(\sigma_i^h)^2}\nonumber\\
&\qquad\le 10\ln(2HN_n^h(\gamma)/\delta) + 5\underbrace{\sum_{i=1}^n \frac{|\hat{f}^h(x_i^h,a_i^h)-(\cT^h\Bar{f}^{h+1})(x_i^h,a_i^h)|\cdot|\zeta_i^h|}{(\sigma_i^h)^2}}_{(a)}\nonumber\\
&\qquad + 10(\gamma + 2\beps)\cdot\left((\gamma + 2\beps)n + \sqrt{nC_1(n,\zeta)}\right),
\#
where $C_1(n,\zeta)=2(\zeta^2 + 2n + 3\ln(2/\delta))$. 

According to Lemma \ref{lm:converge_weight}, the term (a) can be controlled by the weight design:
\$
&\sum_{i=1}^n|\zeta_i^h|\cdot\frac{|\hat{f}^h(x_i^h,a_i^h)-(\cT^h\Bar{f}^{h+1})(x_i^h,a_i^h)|}{(\sigma_i^h)^2}\\ 
&\qquad\le \zeta^h \sup_i \frac{|\hat{f}^h(x_i^h,a_i^h)-(\cT^h\Bar{f}^{h+1})(x_i^h,a_i^h)|}{ (\sigma_i^h)^2}\\
&\qquad\le 2\alpha\zeta^h\sqrt{\lambda + \sum_{i=1}^n \frac{\big(\hat{f}^h(x_i^h,a_i^h)-(\cT^h\Bar{f}^{h+1})(x_i^h,a_i^h)\big)^2}{(\sigma_i^h)^2}},
\$
where the second inequality is obtained since $\sigma_i^h$ satisfies \eqref{eq:approximate_weight}.
Taking this result back into \eqref{eq:sum_hat_fkh_Tb_barf0}, we finally get for all $h\in[H]$,
\$
&\sum_{i=1}^n \frac{\big(\hat{f}^h(x_i^h,a_i^h)-(\cT^h\Bar{f}^{h+1})(x_i^h,a_i^h)\big)^2}{(\sigma_i^h)^2}\\
&\qquad\le 10\ln(2HN_n^h(\gamma)/\delta) + 10\alpha\zeta^h\beta^h + 5\gamma\zeta + 10(\gamma + 2\beps)\cdot((\gamma + 2\beps)n + \sqrt{nC_1(t,\zeta)})\\
&\qquad= 10\ln(2HN_n^h(\gamma)/\delta) + 10\alpha\zeta^h\beta^h + 5\gamma\zeta + 10(2\beta^{h+1}+3)^2\gamma^2n + 10(2\beta^{h+1}+3)\gamma\sqrt{nC_1(n,\zeta)},
\$
where the last equality uses $\beps=(1+\beta^{h+1})\gamma$.
Therefore, it follows that with probability at least $1-\delta$,
\$
&\left(\sum_{i=1}^{n}\frac{\left(\hat{f}_n^h(x_i^h,a_i^h)-(\cT^h f_n^{h+1})(x_i^h,a_i^h)\right)^2}{(\sigma_i^h)^2} + \lambda\right)^{1/2}\\
&\qquad\le \left(\sum_{i=1}^{n}\frac{\left(\hat{f}_n^h(x_i^h,a_i^h)-(\cT^h\Bar{f}_n^{h+1})(x_i^h,a_i^h)\right)^2}{(\sigma_i^h)^2}\right)^{1/2} + \sqrt{n}\beps + \sqrt{\lambda}\\
&\qquad\le \Big(10\ln(2HN_n^h(\gamma)/\delta) + 10\alpha\zeta^h\beta^h + 5\gamma\zeta + 10(2\beta^{h+1}+3)^2\gamma^2n\\
&\qquad\qquad + 10(2\beta^{h+1}+3)\gamma\sqrt{nC_1(n,\zeta)}\Big)^{1/2} + (\beta^{h+1}+1)\gamma\sqrt{n} + \sqrt{\lambda}\\
&\qquad\le \left(12\lambda + 12\ln(2HN_n^h(\gamma)/\delta) + 24\alpha\zeta^h\beta^h + 12(5\beta^{h+1}\gamma)^2n + 60\beta^{h+1}\gamma\sqrt{nC_1(n,\zeta)}\right)^{1/2} \le \beta^h.
\$
Therefore, we complete the proof.
\end{proof}

\subsection{Step III: Bound the Suboptimality}
\begin{proof}[Proof of Theorem \ref{th:mdp}]
We know from Lemma \ref{lm:Regret_Decomposition_mdp} and Lemma \ref{lm:Confidence_Radius_mdp} that with probability at least $1-\delta$,
\#\label{eq:sum_beta_E_pi*_b(x,a)}
\subopt(\hat{\pi}, x) &\le 2\sum_{h=1}^H\beta^h\E_{\pi_*}\big[ b^h(x^h,a^h)\,\big|\, x^1=x\big]\notag\\
& = 2\sum_{h=1}^H \beta^h \E_{\pi_*} \bigg[ \frac{b^h(x^h,a^h)}{\sigma^h(x^h,a^h)} \cdot \mathbbm{1}(\sigma^h(x^h,a^h)=1)\notag\\
&\qquad + \frac{b^h(x^h,a^h)}{\sigma^h(x^h,a^h)} \cdot \sigma^h(x^h, a^h) \cdot \mathbbm{1}(\sigma^h(x^h, a^h)>1) \,\bigg|\, x^1=x \bigg]\notag\\
& \le 2\sum_{h=1}^H \beta^h \E_{\pi_*} \bigg[ \frac{b^h(x^h,a^h)}{\sigma^h(x^h,a^h)} + \Big(\frac{b^h(x^h,a^h)}{\sigma^h(x^h,a^h)}\Big)^2 \cdot \frac{1}{\alpha } \,\bigg|\, x^1=x \bigg],
\#
where the last inequality is deduced since from the definition of $\sigma^h(x^h,a^h)$ in \eqref{eq:sigma^h(x^h,a^h)^2}, we get for $\sigma^h(x^h,a^h)>1$ that
\$
\sigma^h(x^h,a^h)&=\frac{1}{\sigma^h(x^h,a^h)}\cdot\sup_{f,f'\in\hcF^h} \frac{|f(x^h,a^h) - f'(x^h,a^h)|/\alpha}{\sqrt{\lambda + \sum_{i=1}^n(f(x_i^h,a_i^h) - f'(x_i^h,a_i^h))^2/(\sigma_i^h)^2}}= \frac{b^h(x^h,a^h)}{\alpha\sigma^h(x^h,a^h)}.
\$
From the definition of the weighted coverage condition \eqref{eq:coverage coefficient weighted}, we have
\$
\E_{\pi_*} \bigg[\Big(\frac{b^h(x^h,a^h)}{\sigma^h(x^h,a^h)}\Big)^2 \,\bigg|\, x^1=x \bigg] = \frac{\CC^{\sigma}(\lambda,\hcF,\cZ_n^H)}{n}.
\$
Combining the above equation  and $\E X\le \sqrt{\E X^2}$, we can bound \eqref{eq:sum_beta_E_pi*_b(x,a)} by
\$
&\sum_{h=1}^H \beta^h \E_{\pi_*} \bigg[ \frac{b^h(x^h,a^h)}{\sigma^h(x^h,a^h)} + \Big(\frac{b^h(x^h,a^h)}{\sigma^h(x^h,a^h)}\Big)^2 \cdot \frac{1}{\alpha } \,\bigg|\, x^1=x \bigg]\\
&\qquad\le\sum_{h=1}^H \beta^h\bigg[ \sqrt{\frac{\CC^{\sigma}(\lambda,\hcF,\cZ_n^H)}{n}} + \frac{\CC^{\sigma}(\lambda,\hcF,\cZ_n^H)}{n}\cdot\frac{1}{\alpha }\bigg]\\
&\qquad \le \sqrt{\frac{\CC^{\sigma}(\lambda,\hcF,\cZ_n^H)}{n}} \sum_{h=1}^H \beta^h + \frac{\CC^{\sigma}(\lambda,\hcF,\cZ_n^H)}{n }\cdot\frac{\sum_{h=1}^H\beta^h}{\alpha}\\
&\qquad \le \sqrt{\frac{\CC^{\sigma}(\lambda,\hcF,\cZ_n^H)}{n}} \cdot c_{\beta}\big(\alpha\zeta + H\sqrt{\ln N}\big) + \frac{\CC^{\sigma}(\lambda,\hcF,\cZ_n^H)}{n }\cdot c_{\beta}\Big(\zeta + \frac{H\sqrt{\ln N}}{\alpha}\Big).
\$
By choosing $\alpha=H\sqrt{\ln N}/\zeta$, we can obtain the result:
\$
\subopt(\hat{\pi}, x) = \tcO\bigg(\frac{3H\sqrt{\CC^{\sigma}(\lambda,\hcF,\cZ_n^H)\cdot\ln N}}{\sqrt{n}} + \frac{\zeta\cdot\CC^{\sigma}(\lambda,\hcF,\cZ_n^H)}{n }\bigg).
\$
Ultimately, we can invoke Lemma \ref{lm:Connections between Weighted and Unweighted Coefficient} to obtain that with probability at least $1-2\delta$,
\$
\subopt(\hat\pi, x) = \tcO\bigg(3H\sqrt{\frac{\ln N}{C(\cF,\mu)n}} + \frac{\zeta}{n C(\cF,\mu)}\bigg),
\$  
which concludes the proof.
\end{proof}

\section{Proofs of Auxiliary Results}

\subsection{Proof of Lemma \ref{lm:converge_weight}}\label{ss:Proof of Lemma converge_weight}
To begin with, we demonstrate that the uncertainty weight iteration (Algorithm \ref{alg:wi}) converges, thus satisfying the approximate condition in \eqref{eq:approximate_weight}.

\begin{proof}[Proof of Lemma \ref{lm:converge_weight}]
We will demonstrate this result via the convergence of monotone real number sequences. To begin with, we prove the monotonicity of $\{(\sigma_i^t)^2\}_{t=0}^{\infty}$ by induction. When $\tau=0$, we know that $(\sigma_i^1)^2\ge 1 = \sigma_i^0$ for $i\in[n]$. When $\tau=t$, assume that $(\sigma_i^t)^2\ge(\sigma_i^{t-1})^2$ for all $i\in[n]$, which implies that
\begin{align*}
&\sup_{f,f'\in\cF}\frac{|f(x_i,a_i) - f'(x_i,a_i)|/(\alpha )}{\sqrt{\lambda + \sum_{j=1}^n(f(x_j,a_j) - f'(x_j,a_j))^2/(\sigma_j^{t})^2}}\\
&\ge \sup_{f,f'\in\cF}\frac{|f(x_i,a_i) - f'(x_i,a_i)|/(\alpha )}{\sqrt{\lambda + \sum_{j=1}^n(f(x_j,a_j) - f'(x_j,a_j))^2/(\sigma_j^{t-1})^2}}.
\end{align*}
Therefore, we get $(\sigma_i^{t+1})^2 \ge (\sigma_i^t)^2$ for all $i\in[n]$. Then, when $t=T$, we deduce from $(\sigma_i^{T+1})^2 \ge (\sigma_i^T)^2$ that for each $i\in[n]$,
\$
(\sigma_i^T)^2 \le (\sigma_i^{T+1})^2 \le \max\Big(1,\sup_{f,f'\in\cF}\frac{|f(x_i,a_i) - f'(x_i,a_i)|/\alpha}{\sqrt{\lambda + \sum_{j=1}^n(f(x_j,a_j) - f'(x_j,a_j))^2/(\sigma_j^N)^2}}\Big),
\$
which implies the second inequality of \eqref{eq:approximate_weight}.

Then, we obtain the upper bounds of each sequence: for $i\in[n]$ and any $t\ge0$
$$
\sup_{f,f'\in\cF}\frac{|f(x_i,a_i) - f'(x_i,a_i)|/\alpha}{\sqrt{\lambda + \sum_{j=1}^n(f(x_j,a_j) - f'(x_j,a_j))^2/(\sigma_j^{t})^2}} \le \frac{1}{\alpha \sqrt{\lambda}},
$$
where we use $f(\cdot)\in[0,1]$ for any $f\in\cF$. 
Thus, each $\{(\sigma_i^t)^2\}_{t=0}^{\infty}$ has a $\max(1, 1/(\alpha \sqrt{\lambda}))$ upper bound. 

According to the convergence of monotone real number sequences, the sequence $\{(\sigma_i^t)^2\}_{t=1}^{\infty}$ converges for all $i\in[n]$, which implies that $\{\log((\sigma_i^t)^2)\}_{t=1}^{\infty}$ converges. We know from the definition of convergence that for any $\mu>0$, there exists an $N(\mu)$ such that for any $t\ge N$,
\$
\log\big((\sigma_i^{t+1})^2\big) - \log\big((\sigma_i^t)^2\big) \le \log(1+ \mu),
\$
which implies that
\#\label{eq:frac_sigma}
\frac{(\sigma_i^{t+1})^2}{(\sigma_i^n)^2}\le 1+\mu.
\#

For any $t\ge N$, if $\sigma_i^{t+1}=1$, we have from the monotonicity that $\sigma_i^t=1$, thus satisfying the first inequality of \eqref{eq:approximate_weight}. If $\sigma_i^{n+1}>1$, we deduce from \eqref{eq:frac_sigma} that
$$
(\sigma_i^t)^2 \ge \frac{1}{1+\mu} (\sigma_i^{t+1})^2 = \frac{1}{1+\mu} \sup_{f,f'\in\cF}\frac{|f(x_i,a_i) - f'(x_i,a_i)|/(\alpha )}{\sqrt{\lambda + \sum_{j=1}^n(f(x_j,a_j) - f'(x_j,a_j))^2/(\sigma_j^t)^2}}.
$$
Hence, the inequality is proved by taking $\mu=1$ and stop the iteration at round $t=N+1$.
\end{proof}

\subsection{Connection between Coverage Coefficients}
In this part, we state that in the linear MDP, the coverage condition in \citet{jin2021pessimism} implies our coverage assumption. Recall the coverage coefficient defined in \eqref{eq:coverage coefficient unweighted}:
\$
\CC(\lambda,\hcF,\cZ_n^H) = \max_{h\in[H]}\E_{\pi_*}\bigg[\sup_{f,f'\in\hcF^h}\frac{n(f(x^h,a^h) - f'(x^h,a^h))^2}{\lambda + \sum_{i=1}^n(f(x_i^h,a_i^h) - f'(x_i^h,a_i^h))^2} \,\bigg|\, x^1=x\bigg].
\$
When the function space $\cF^h$ is embedded into a $d$-dimensional vector space: $\cF^h=\{\langle w(f), \phi(\cdot) \rangle : z\rightarrow\rR\}$, where $z$ denotes the state-action pair $(x,a)$. Then, we define $\Lambda^h=\lambda I+\sum_{i=1}^n\phi(z_i^h)\phi(z_i^h)^\top$.

The coverage condition in \citet{jin2021pessimism} assumes that there exists a constant $c^{\dagger}$ such that for all $h\in[H]$,
\#\label{eq:coverage_linear_jin}
\Lambda^h\succeq I+c^{\dagger}n\E_{\pi_*}\big[\phi(z^h) \phi(z^h)^\top\,\big|\,x^1=x\big].
\#
\begin{lemma}\label{lm:Connection between Coverage Coefficients_linear}
In the linear setting with dimension $d$, if the coverage condition in \eqref{eq:coverage_linear_jin} holds with a finite constant $c^{\dagger}>0$, the coverage coefficient in \eqref{eq:coverage coefficient unweighted} is also finite: 
\$
\CC(\lambda,\hcF,\cZ_n^H) \le \frac{d}{c^{\dagger}} <\infty.
\$
\end{lemma}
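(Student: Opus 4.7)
The plan is to reduce the coverage coefficient to a standard linear-algebraic quantity and then invoke the assumed matrix inequality. First, I would use the linear parameterization to write $f(z) - f'(z) = \langle w(f) - w(f'), \phi(z)\rangle$. Setting $\Delta = w(f) - w(f')$, the ratio inside the supremum becomes a generalized Rayleigh quotient
$$
\frac{n(\Delta^\top \phi(z^h))^2}{\lambda + \sum_{i=1}^n (\Delta^\top \phi(z_i^h))^2} = \frac{n(\Delta^\top \phi(z^h))^2}{\Delta^\top \Lambda^h \Delta},
$$
since $\lambda\|\Delta\|^2 + \sum_i (\Delta^\top \phi(z_i^h))^2 = \Delta^\top \Lambda^h \Delta$. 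A Cauchy--Schwarz (or change of variables $u = (\Lambda^h)^{1/2}\Delta$) argument shows the supremum over nonzero $\Delta$ equals $\phi(z^h)^\top (\Lambda^h)^{-1} \phi(z^h)$; since $\hcF^h$ contains arbitrarily scaled parameter differences, the restricted supremum matches the unrestricted one.

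Next, I would push the expectation inside and rewrite the expression as a trace:
$$
\E_{\pi_*}\bigl[\phi(z^h)^\top (\Lambda^h)^{-1} \phi(z^h)\,\big|\,x^1=x\bigr] = \Tr\bigl((\Lambda^h)^{-1} M^h\bigr),
$$
where $M^h := \E_{\pi_*}[\phi(z^h)\phi(z^h)^\top\,|\,x^1=x]$. Then I would apply the assumed coverage condition \eqref{eq:coverage_linear_jin}, which after dropping the harmless $I$ gives $c^\dagger n M^h \preceq \Lambda^h$. Conjugating by $(\Lambda^h)^{-1/2}$ on both sides yields $(\Lambda^h)^{-1/2}(c^\dagger n M^h)(\Lambda^h)^{-1/2} \preceq I_d$, so taking traces and using the cyclic property,
$$
c^\dagger n \cdot \Tr\bigl((\Lambda^h)^{-1} M^h\bigr) \le d.
$$

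Combining the two steps, $n\cdot\E_{\pi_*}[\phi(z^h)^\top (\Lambda^h)^{-1} \phi(z^h)\,|\,x^1=x] \le d/c^\dagger$ for every $h\in[H]$, and taking the maximum over $h$ yields the claim. The main potential obstacle is the first reduction: making sure the supremum over $f,f'\in\hcF^h$ genuinely agrees with the supremum of the Rayleigh quotient over all $\Delta\ne 0$. This is automatic if $\hcF^h$ contains two functions whose parameter difference points in the critical direction, which holds in the standard linear MDP parameterization where $w$ ranges over a bounded ball; otherwise one invokes scale-invariance of the ratio and the fact that $\hcF^h$ linearly spans the relevant subspace. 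Everything else is routine linear algebra.
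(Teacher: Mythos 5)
Your proposal matches the paper's proof essentially step for step: both bound the supremum over $f,f'\in\hcF^h$ by the elliptical potential $\phi(z)^\top(\Lambda^h)^{-1}\phi(z)$ via Cauchy--Schwarz, then pass to $n\,\mathrm{Tr}\bigl((\Lambda^h)^{-1}\E_{\pi_*}[\phi\phi^\top]\bigr)$ and apply the coverage condition $\Lambda^h\succeq I+c^{\dagger}n\,\E_{\pi_*}[\phi\phi^\top]$ to obtain $d/c^{\dagger}$. The only cosmetic difference is that you aim for \emph{equality} of the supremum with the Rayleigh quotient and worry about the richness of $\hcF^h$, whereas only the one-sided bound is needed; also note that replacing the denominator $\lambda+\sum_i(\Delta^\top\phi(z_i^h))^2$ by $\Delta^\top\Lambda^h\Delta$ requires $\|\Delta\|\le 1$, an implicit normalization the paper relies on as well.
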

\begin{proof}
We deduce that
\$
&\sup_{f,f'\in\hcF^h}\frac{n\big(\langle w(f)-w(f'), \phi(z) \rangle\big)^2}{\lambda + \sum_{i=1}^n\big(\langle w(f)-w(f'), \phi(z_i^h) \rangle\big)^2}\\
&\qquad\le \sup_{f,f'\in\hcF^h}\frac{n\big(\langle w(f)-w(f'), \phi(z) \rangle\big)^2}{\big(w(f)-w(f')\big)^\top \Lambda^h \big(w(f)-w(f')\big)}\\
&\qquad\le n \phi(z)^\top (\Lambda^h)^{-1} \phi(z),
\$
where the second inequality uses
\$
\langle w(f)-w(f'), \phi(z) \rangle \le \sqrt{\big(w(f)-w(f')\big)^\top \Lambda^h \big(w(f)-w(f')\big)} \cdot \sqrt{\phi(z)^\top (\Lambda^h)^{-1} \phi(z)}.
\$
Thus, the coverage coefficient \eqref{eq:coverage coefficient unweighted} for the linear model is bounded by
\$
\CC(\lambda,\hcF,\cZ_n^H) &\le \max_{h\in[H]}\E_{\pi_*}\bigg[\sup_{f,f'\in\hcF^h}\frac{n\big(\langle w(f)-w(f'), \phi(z) \rangle\big)^2}{\lambda + \sum_{i=1}^n\big(\langle w(f)-w(f'), \phi(z_i^h) \rangle\big)^2} \,\bigg|\, x^1=x\bigg]\\
&\le \max_{h\in[H]}\E_{\pi_*}\big[n\phi(z^h)^\top (\Lambda^h)^{-1} \phi(z^h)\,\big|\,x^1=x\big].
\$
If the sufficient ``coverage'' in \eqref{eq:coverage_linear_jin} holds, then, we have
\$
\CC(\lambda,\hcF,\cZ_n^H) &\le \frac{1}{c^{\dagger}}\max_{h\in[H]}
\text{Tr}\Big\{(\Lambda^h)^{-1}\cdot\big(I+c^{\dagger}n\E_{\pi_*}\big[\phi(z^h) \phi(z^h)^\top\,\big|\,x^1=x\big]\big)\Big\}\\
&\le \frac{d}{c^{\dagger}} < \infty,
\$
which concludes the proof.
\end{proof}

\subsection{Proof of Lemma \ref{lm:Connections between Weighted and Unweighted Coefficient}}\label{s:Connections between Weighted and Unweighted Coefficient}
We use the shorthand notation $z=(x,a)$ for any $(x,a)\in\cX\times\cA$.
Recall the definition of weighted coverage coefficient in 
\eqref{eq:coverage coefficient weighted}:
\$
\CC^{\sigma}(\lambda,\hcF,\cZ_n^H) = \max_{h\in[H]}\E_{\pi_*}\bigg[\sup_{f,f'\in\hcF^h}\frac{n(f(z^h) - f'(z^h))^2/(\sigma^h(z^h))^2}{\lambda + \sum_{i=1}^n(f(z_i^h) - f'(z_i^h))^2/(\sigma_i^h)^2} \,\bigg|\, x^1=x\bigg],
\$
where 
\$
(\sigma^h(z^h))^2 = \max\bigg(1, \sup_{f,f'\in\hcF^h} \frac{|f(z^h) - f'(z^h)|/\alpha}{\sqrt{\lambda + \sum_{i=1}^n(f(z_i^h) - f'(z_i^h))^2/(\sigma_i^h)^2}}\bigg).
\$
In the sequel, we will control the weighted coverage coefficient $\CC^{\sigma}(\lambda,\hcF,\cZ_n^H)$ by $C(\cF,\mu)$ in Assumption \eqref{as:Well-Explored Dataset}.

Now, we present the proof of Lemma \ref{lm:Connections between Weighted and Unweighted Coefficient}.
\begin{proof}[Proof of Lemma \ref{lm:Connections between Weighted and Unweighted Coefficient}]
For convenience, we use the short-hand notation
\$
\psi(z^h) = \sup_{f,f'\in\hcF^h} \frac{(f(z^h) - f'(z^h))^2 / (\sigma^h(z^h))^2}{\lambda + \sum_{i=1}^n(f(z_i^h) - f'(z_i^h))^2/(\sigma_i^h)^2}.
\$
Let $f_{z^h},f'_{z^h}$ be the functions that maximize
\$
\frac{(f(z^h) - f'(z^h))^2 / (\sigma^h(z^h))^2}{\lambda + \sum_{i=1}^n(f(z_i^h) - f'(z_i^h))^2/(\sigma_i^h)^2}.
\$
Then, we can rewrite
\$
\psi(z^h) = \frac{(f_{z^h}(z^h) - f'_{z^h}(z^h))^2 / (\sigma^h(z^h))^2}{\lambda + \sum_{i=1}^n(f_{z^h}(z_i^h) - f'_{z^h}(z_i^h))^2/(\sigma_i^h)^2}.
\$
Since
\$
(\sigma^h(z^h))^2 \ge  \sup_{f,f'\in\hcF^h} \frac{|f(z^h) - f'(z^h)|/\alpha}{\sqrt{\lambda + \sum_{i=1}^n(f(z_i^h) - f'(z_i^h))^2/(\sigma_i^h)^2}},
\$
we find that
\#\label{eqap:sigma(zh)2_bound}
\sigma^h(z^h) \ge & \frac{1}{\alpha} \sup_{f,f'\in\hcF^h} \frac{|f(z^h) - f'(z^h)|/\sigma^h(z^h)}{\sqrt{\lambda + \sum_{i=1}^n(f(z_i^h) - f'(z_i^h))^2/(\sigma_i^h)^2}}\notag\\
= & \frac{1}{\alpha}\cdot \sqrt{\psi(z^h)}.
\#

Then, we will derive the upper bound for $\{\sigma_i^h\}_{i=1}^n$. For each $i\in[n]$, we have from Lemma \ref{lm:converge_weight} that
\$
(\sigma_i^h)^2 \le \max\bigg(1, \sup_{f,f'\in\cF^h} \frac{|f(z_i^h) - f'(z_i^h)|/\alpha}{\sqrt{\lambda + \sum_{j=1}^n(f(z_j^h) - f'(z_j^h))^2/(\sigma_j^h)^2}}\bigg).
\$
Let $f_i,f_i'$ be the maximizer of 
\$
\frac{|f(z_i^h) - f'(z_i^h)|/\alpha}{\sqrt{\lambda + \sum_{j=1}^n(f(z_j^h) - f'(z_j^h))^2/(\sigma_j^h)^2}}\bigg).
\$
Hence, by using Assumption \ref{as:Well-Explored Dataset}, we get
\$
\frac{|f_i(z_i^h) - f_i'(z_i^h)|/\alpha}{\sqrt{\lambda + \sum_{j=1}^n(f_i(z_j^h) - f_i'(z_j^h))^2/(\sigma_j^h)^2}} \le & \frac{\|f_i - f_i'\|_{\infty}/\alpha}{\sqrt{\lambda + nC(\cF,\mu)\|f_i - f_i'\|_{\infty}^2/\max_j(\sigma_j^h)^2}}\\
\le & \frac{\max_j\sigma_j^h}{\alpha\sqrt{nC(\cF,\mu)}},
\$
which implies that for all $i\in[n]$,
\$
(\sigma_i^h)^2 \le & \max\Big(1, \frac{\max_j\sigma_j^h}{\alpha\sqrt{nC(\cF,\mu)}}\Big)\\
\le & \max_j\sigma_j^h \max\Big(1, \frac{1}{\alpha\sqrt{nC(\cF,\mu)}}\Big).
\$
By taking the maximum over $i\in[n]$, we have
\#\label{eqap:sigmaih_bound}
\max_i\sigma_i^h \le \max\Big(1, \frac{1}{\alpha\sqrt{nC(\cF,\mu)}}\Big).
\#

There are two situations.

First, if $\alpha\sqrt{nC(\cF,\mu)}\ge 1$, i.e., $\zeta \le H\sqrt{nC(\cF,\mu)\ln N_n(\gamma)}$, we know from \eqref{eqap:sigmaih_bound} that $\sigma_i^h=1$ for all $i\in[n]$. It follows that
\$
\psi(z^h) \le & \frac{(f_{z^h}(z^h) - f'_{z^h}(z^h))^2}{\lambda + \sum_{i=1}^n(f_{z^h}(z_i^h) - f'_{z^h}(z_i^h))^2}\\
\le & \frac{(f_{z^h}(z^h) - f'_{z^h}(z^h))^2}{nC(\cF,\mu)\|f - f'\|_{\infty}^2}\\
\le & \frac{1}{nC(\cF,\mu)}.
\$

Second, if $\alpha\sqrt{nC(\cF,\mu)} < 1$, i.e., $\zeta > H\sqrt{nC(\cF,\mu)\ln N_n(\gamma)}$,
Thus, by taking the upper bound for weights into $\psi(z^h)$, we obtain
\$
\psi(z^h) \le & \frac{(f_{z^h}(z^h) - f'_{z^h}(z^h))^2 / (\sigma^h(z^h))^2}{\lambda + \alpha^2nC(\cF,\mu)\sum_{i=1}^n(f_{z^h}(z_i^h) - f'_{z^h}(z_i^h))^2}\\
\le & \frac{(f_{z^h}(z^h) - f'_{z^h}(z^h))^2 \cdot \alpha^2/ \psi(z^h)}{\lambda + \alpha^2nC(\cF,\mu)\sum_{i=1}^n(f_{z^h}(z_i^h) - f'_{z^h}(z_i^h))^2}\\
\le & \frac{(f_{z^h}(z^h) - f'_{z^h}(z^h))^2 \cdot \alpha^2/ \psi(z^h)}{\lambda + \alpha^2n^2(C(\cF,\mu))^2\|f_{z^h} - f'_{z^h}\|_{\infty}^2}\\
\le & \frac{1}{\psi(z^h)n^2(C(\cF,\mu))^2},
\$
which implies that
\$
\psi(z^h) \le \frac{1}{nC(\cF,\mu)}.
\$

In conclusion, we have
\$
\CC^{\sigma}(\lambda,\hcF,\cZ_n^H) = n\max_{h} \E_{\pi_*} [\psi(z^h)] \le \frac{1}{C(\cF,\mu)},
\$
which concludes the proof.
\end{proof}

\paragraph{Interpretation of Assumption \ref{as:Well-Explored Dataset} in linear MDPs.}

Now, we illustrate the condition \eqref{eq:condition of lm:Connections between Weighted and Unweighted Coefficient} with the linear model. When the function space $\cF^h$ can be embedded into a $d$-dimensional vector space: $\cF^h=\{\langle w(f), \phi(\cdot) \rangle : z\rightarrow\rR\}$, the condition \eqref{eq:condition of lm:Connections between Weighted and Unweighted Coefficient} becomes: for any two distinct $f,f'\in\bcF^h$,
\#\label{eq:linear condition of lm:Connections}
\sum_{z^h} \mu^h(z) \big(f(z^h)-f'(z^h)\big)^2 &= \big(w(f)-w(f')\big)^{\top} \bar\Lambda^h \big(w(f)-w(f')\big) \ge C(\hcF,\mu)\big\|w(f)-w(f')\big\|^2,
\#
where we define $\bar\Lambda^h=\E_{z^h\sim\mu^h} [\phi(z^h)\phi(z^h)^\top]$ and $\mu^h$ is the data empirical distribution.

In the following lemma, we demonstrate that the above condition holds as long as the learner has excess to a well-explored dataset \eqref{eq:minimum eigenvalue condition}, which is a wildly-adopted assumption in the literature of offline linear MDPs \citep{duan2020minimax,wang2020statistical,zhong2022pessimistic}. Note that $d^{-1}$ is the largest possible order of the minimum eigenvalue since for any data distribution $d^h$, $\sigma_{\min}(\E_{z^h\sim d^h} [\phi(z)\phi(z)^\top]) \le d^{-1}$ by using $\|\phi(z)\|\le 1$ for any $z\in\cX\times\cA$.

\begin{lemma}\label{lm:minimum eigenvalue condition and condition for W and UW connections}
In the linear setting, if we assume that the data distributions $ d^h$ satisfy
the following minimum eigenvalue condition: there exists an absolute constant $\bar{c}>0$ such that
\#\label{eq:minimum eigenvalue condition}
\sigma_{\min}\Big(\E_{z^h\sim d^h} [\phi(z)\phi(z)^\top]\Big)=\frac{\bar{c}}{d},
\#
the dataset $\cD$ consists of $n\ge 128d^2\bar{c}^{-2}\log(d/(2\delta)$ independent trajectories,
then, the condition \eqref{eq:linear condition of lm:Connections} with $C(\hcF,\mu)=\bar{c}'/(2d)$ will holds with probability at least $1-\delta$, where $\bar{c}'$
\end{lemma}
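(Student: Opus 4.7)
The plan is to reduce the target quadratic-form inequality to a lower bound on the minimum eigenvalue of the empirical second-moment matrix $\bar\Lambda^h$ and then invoke standard matrix concentration. By the variational characterization of eigenvalues applied to the vector $w(f)-w(f')$, the condition \eqref{eq:linear condition of lm:Connections} is equivalent to the spectral statement $\lambda_{\min}(\bar\Lambda^h)\ge C(\hcF,\mu)$ for every $h\in[H]$. So it suffices to show $\lambda_{\min}(\bar\Lambda^h)\ge \bar c/(2d)$ with probability at least $1-\delta/H$ for each fixed $h$, and then union-bound over $h\in[H]$.

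For each fixed $h$, the hypothesis that the $n$ trajectories are independent makes the matrices $X_i^h:=\phi(z_i^h)\phi(z_i^h)^\top$ i.i.d.\ with mean $\Sigma^h:=\E_{z^h\sim d^h}[\phi(z^h)\phi(z^h)^\top]$, operator norm bounded by $\|\phi(z_i^h)\|^2\le 1$, and matrix second moment $\|\E[(X_i^h)^2]\|_{\op}=\|\E[\|\phi(z_i^h)\|^2\phi(z_i^h)\phi(z_i^h)^\top]\|_{\op}\le\|\Sigma^h\|_{\op}\le 1$. The minimum eigenvalue hypothesis \eqref{eq:minimum eigenvalue condition} gives $\lambda_{\min}(\Sigma^h)=\bar c/d$.

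I would then apply the matrix Bernstein inequality (or, equivalently, matrix Chernoff) to $\bar\Lambda^h-\Sigma^h=(1/n)\sum_{i=1}^n(X_i^h-\Sigma^h)$ with deviation radius $t=\bar c/(2d)$. The tail bound takes the form
\[
\Pb\bigl(\|\bar\Lambda^h-\Sigma^h\|_{\op}\ge \bar c/(2d)\bigr)\;\le\;2d\exp\!\bigl(-c_0 n\bar c^{2}/d^{2}\bigr)
\]
for an absolute constant $c_0>0$, and requiring this to be at most $\delta/H$ reproduces a sample complexity of the form $n\gtrsim d^{2}\bar c^{-2}\log(dH/\delta)$, matching the stated $n\ge 128\,d^{2}\bar c^{-2}\log(d/(2\delta))$ up to constants (with any $\log H$ term absorbed into the log). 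A single application of Weyl's inequality then yields
\[
\lambda_{\min}(\bar\Lambda^h)\;\ge\;\lambda_{\min}(\Sigma^h)-\|\bar\Lambda^h-\Sigma^h\|_{\op}\;\ge\;\frac{\bar c}{d}-\frac{\bar c}{2d}\;=\;\frac{\bar c}{2d},
\]
which is the desired $C(\hcF,\mu)=\bar c/(2d)$.

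The only real obstacle is constant tracking: choosing the deviation radius $\bar c/(2d)$ and invoking the precise form of matrix Bernstein (with boundedness parameter $1$ and variance proxy at most $1$) carefully enough to recover the specific factor $128$. Everything else—the variational reduction, Weyl's inequality, and the union bound over $h$—is standard and requires no new ideas.
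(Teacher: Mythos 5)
Your proposal is correct and follows essentially the same route as the paper: reduce the quadratic-form condition to a lower bound on $\lambda_{\min}$ of the empirical second-moment matrix, concentrate it around its population counterpart via a matrix concentration inequality (the paper uses the matrix Hoeffding bound of Tropp with $A_i=2I$, which is where the factor $128$ comes from, rather than matrix Bernstein), and conclude via the spectral/Weyl argument that $\lambda_{\min}(\bar\Lambda^h)\ge \bar c/d - \bar c/(2d)=\bar c/(2d)$. The only cosmetic differences are your explicit union bound over $h$ and your cleaner identification $C(\hcF,\mu)=\bar c/(2d)$, which the paper obscures with an unnecessary constant $a$.
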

\begin{proof}
To begin with, we aim to prove that the empirical matrix $\bar\Lambda^h=n^{-1}\sum_{i=1}^n \phi(z_i^h)\phi(z_i^h)^\top$ is positive definite with high probability. Since $\|\phi(z)\|\le 1$ for any $z\in\cX\times\cA$, we have for each $i\in[n]$,
\$
\big(\phi(z_i^h)\phi(z_i^h)^\top - \E_{z^h\sim d^h} [\phi(z)\phi(z)^\top] \big)^2 \preceq (2I)^2.
\$
By invoking the matrix Hoeffding's concentration in Lemma \ref{lm:Matrix Hoeffding Concentration} with $X_i=\phi(z_i^h)\phi(z_i^h)^\top - \E_{z^h\sim d^h} [\phi(z)\phi(z)^\top]$, $A_i=2I$, $\sigma^2=4n$, we obtain
\$
\Pb\bigg(\Big\|\sum_{i=1}^n\big(\phi(z_i^h)\phi(z_i^h)^\top - \E_{z^h\sim d^h} [\phi(z)\phi(z)^\top]\big)\Big\|_\op \Big)\ge t\bigg) \le 2d\cdot e^{-t^2/(32n)}.
\$
For any $\delta>0$, by taking $t=\sqrt{32n\log(d/\delta)}$, we have with probability at least $1-\delta$, 
\$
\Big\|\frac{1}{n}\sum_{i=1}^n\big(\phi(z_i^h)\phi(z_i^h)^\top - \E_{z^h\sim d^h} [\phi(z)\phi(z)^\top]\big)\Big\|_\op \le \sqrt{\frac{32\log(d/(2\delta))}{n}}.
\$
Hence, whenever $n\ge 128d^2\bar{c}^{-2}\log(d/(2\delta)$, by combing the above result  with \eqref{eq:minimum eigenvalue condition}, we have with probability at least $1-\delta$, 
\#\label{eq:empirical minimum eigenvalue condition}
\sigma_{\min}\big(\bar\Lambda^h\big)=\sigma_{\min}\Big(\frac{1}{n}\sum_{i=1}^n\phi(z_i^h)\phi(z_i^h)^\top\Big)\ge\frac{\bar{c}}{2d}.
\#
Then, since the cardinality of the cover $\bcF^h$ is $\tcO(d)$, we can define $a=\min_{f,f'\bcF^h}\|w(f)-w(f')\|^2$. Thus, by using \eqref{eq:empirical minimum eigenvalue condition}, the condition \eqref{eq:linear condition of lm:Connections} is inferred: for any $f,f'\in\bcF^h$ with probability at least $1-\delta$,
\$
\big(w(f)-w(f')\big)^{\top} \bar\Lambda^h \big(w(f)-w(f')\big) \ge \frac{\bar{c}}{2d} \big\|w(f)-w(f')\big\|^2 \ge \frac{a\bar{c}}{2d} = C(\hcF,\mu)\big\|w(f)-w(f')\big\|^2.
\$
\end{proof}

\subsection{Lower Bound for Linear MDPs with Corruption}\label{ss:Lower Bound for Linear MDPs with Corruption}
\begin{proof}[Proof of Theorem \ref{th:Lower Bound}]
For any dimension $d$, step horizon $H$ and corruption level $\zeta$, we construct a tabular MDP with action number $A>2$ and state number $S$ such that $SA=d$. The MDP is represented by a tree with depth $L$ and $S=(A^L-1)/A$ nodes. The first level has $1$ nodes, the second level has $A$ nodes, $\ldots$, the last level has $A^{L-1}$ nodes. Each state corresponds to a node, and each action corresponds to an edge. The agent starts from the first level. For each state in the first $L-1$ levels, each action leads to $A$ child nodes uniformly. All the leaf nodes are absorbing states. 

During the data selection process, we proceed the behavior policy $\pi^v$ such that the actions are chosen uniformly. If the number of trajectories is $n$, for each level $l\in[L]$, the expected times that each state-action pair $(x,a)$ is visited is at least
\$
\frac{n}{A^{l-1}}.
\$
By using Azuma–Hoeffding inequality (Lemma \ref{lemma:azuma}) and $n\ge 2A^{L-1}\log 2$, with probability at least $1/2$, the times that each state-action pair $(x,a)$ is visited is at least
\$
\frac{n}{2A^{l-1}},
\$
and at most
\$
\frac{3n}{2A^{l-1}}.
\$

We consider two MDPs $M$ and $M'$ with the same transition structure and different reward function. For MDP $M$, let $r(s^*,a^*)=H^{-1}\text{Bernoulli}(A^{L-1}\epsilon/2)$ on one particular state $(s^*,a^*)$, where $s^*$ is a leaf state and $a^*$ is a self-loop action. The rewards of every other $(s,a)$ are zero. Let $(s',a')$ be the state-action pair that is visited least often at the last level. By the pigeonhole principle, we have
\$
N(s',a') \le \frac{n(H-L+1)}{A^{L-1}}.
\$
Let $r'(s^*,a^*)=H^{-1}\text{Bernoulli}(A^{L-1}\epsilon/2)$ and $r'(s',a')=H^{-1}\text{Bernoulli}(A^{L-1}\epsilon)$ and $0$ for every other pair. Therefore no policy can be better than $(H-L+1)A^{L-1}\epsilon/4H$-optimal on both $M$ and $M'$.

Suppose that the adversary corrupts all the non-zero rewards $r(s,a)$ to zero if $(s,a)\ne(s^*,a^*)$. Conditioning on $N(s',a') \le \frac{n(H-L+1)}{A^{L-1}}$, with probability at least $1/2$, the amount of corruption is at most
\$
\frac{1}{H}\cdot A^{L-1}\epsilon N(s',a') \le \epsilon n \le \zeta,
\$
which means that the adversary can perturb every positive rewards on $(s',a')$ to $0$ as long as $C= n\epsilon$. Thus, we can write the suboptimality as
\$
\Omega\big(\frac{A^{L-1}\zeta}{n}\big).
\$

Moreover, we can lower bound $C(\cF,\mu)$ in this case and only consider $H\ge L$: for any $f,f'\in\cF^h$,
\$
\|f - f'\|_{\infty}^2 \le & \sum_{z^h} (f(z^h) - f'(z^h))^2\\
\le & \frac{2A^{L-1}}{n}\sum_{z^h}(f(z^h) - f'(z^h))^2\\
= & 2A^{L-1} \E_{\mu^h}\big[(f(z^h) - f'(z^h))^2\big],
\$
which implies that $C(\cF,\mu)\ge \frac{1}{2A^{L-1}}$. Thus, the learner must suffer from at least $\Omega(A^{L-1}\zeta/n)=\Omega(\zeta/nC(\cF,\mu))$ suboptimality with probability at least $1/4$.
\end{proof}

\subsection{Relationship between the Bootstrapped Uncertainty and Bonus}\label{ss:Relationship between the Bootstrapped Uncertainty and Bonus}
In the sequel, we discuss the relationship between the bootstrapped uncertainty and the bonus function by considering linear function approximation. Using $z$ to denote the feature variable of state-action pair $(x,a)$, we estimate the Q-value function $Q_\phi$ by $Q^h_{w^h}(z)=(w^h)^\top \phi(z)$ to minimize the Bellman error target with weights:
\#\label{eq:WLS}
\hw^h = \argmin_{w\in\rR^d} \sum_{z^h_i\in\cD} \frac{\big((\widehat{\mathcal{T}}^h Q_w)(z^h_i) - w^\top \phi(z_i^h)\big)^2}{\sigma(z^h_i)} ,
\#
where $\{\sigma(z^h_i)>0\}$ is a group of predetermined weights and for any $z^h_i\in\cD$,
\$
(\widehat{\mathcal{T}}^h Q_w)(z^h_i) = r^h(z^h_i) + V_w^{h+1}(z^{h+1}_i).
\$
Additionally, we define the noise in this weighted least square problem as $\epsilon=\widehat{\mathcal{T}}^h Q_w(x,a) - Q^h_w(z)$.

\paragraph{Bonus functions.} 

In the traditional linear MDP, we often use the following term as the bonus function:
\#\label{eq:bonus_linear}
b_L^h(z) = \sqrt{\phi(z)^\top (\Lambda^h)^{-1} \phi(z)},
\#
where $\Lambda^h=\sum_{i=1}^n \phi(z_i^h)\phi(z_i^h)^\top/(\sigma_i^h)^2$, and we use the shorthand notation for any matrix $A$ and vector $x$: $\|x\|_A=\sqrt{x^\top A x}$.

The bonus function in the general form \eqref{eq:bonus} turns into the following form under the linear setting:
\$
b^h(z) = \sup_{f,f'\in\hcF^h}\frac{\big|\big(w(f)-w(f')\big)^\top \phi(z)\big|}{\sqrt{\lambda + \sum_{i=1}^n \big((w(f)-w(f'))^\top \phi(z_i^h)/\sigma_i^h\big)^2}}.
\$
We demonstrate that the linear and general forms of bonus functions are almost equivalent under mild conditions.
\begin{lemma}\label{lm:equivalence bonus linear and general}
Under the linear MDP, if the function space is broad enough such that for any $z\in\cX\times\cA$, there exists $f,f'\in\hcF^h$ satisfying that $w(f)-w(f')$ and $(\Lambda^h)^{-1}\phi(z)$ are in the same direction and not too close, i.e., for some $\alpha>0$,
\$
w(f)-w(f') = \alpha\cdot (\Lambda^h)^{-1}\phi(z),\quad \text{and}\quad \|w(f)-w(f')\|_{\Lambda^h}^2 \ge \lambda,
\$
then, we have for any $z\in\cX\times\cA$,
\$
\frac{b_L^h(z)}{\lambda^{1/4}+1} \le b^h(z)\le b_L^h(z).
\$
\end{lemma}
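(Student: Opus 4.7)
The plan is to reduce both bonuses to expressions in the single weight-difference vector $\Delta:=w(f)-w(f')$, and then handle the upper bound by a generalized Cauchy--Schwarz inequality and the lower bound by plugging in the specific witness furnished by the broadness hypothesis. Under the linear parametrization $f(z)=w(f)^\top\phi(z)$ we have $f(z)-f'(z)=\Delta^\top\phi(z)$ and $\sum_{i=1}^n(f(z_i^h)-f'(z_i^h))^2/(\sigma_i^h)^2=\Delta^\top \Lambda^h\Delta$, so the general bonus reduces to
\[
b^h(z)=\sup_{f,f'\in\hcF^h}\frac{|\Delta^\top\phi(z)|}{\sqrt{\lambda+\Delta^\top\Lambda^h\Delta}},
\]
which is directly comparable to $b_L^h(z)=\|\phi(z)\|_{(\Lambda^h)^{-1}}$.

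For the upper bound $b^h(z)\le b_L^h(z)$, I would apply Cauchy--Schwarz in the $\Lambda^h$ inner product, writing $\Delta^\top\phi(z)=\langle\Delta,(\Lambda^h)^{-1}\phi(z)\rangle_{\Lambda^h}$ to obtain $|\Delta^\top\phi(z)|\le\|\Delta\|_{\Lambda^h}\cdot b_L^h(z)$. Plugging into the ratio and using $\|\Delta\|_{\Lambda^h}/\sqrt{\lambda+\|\Delta\|_{\Lambda^h}^2}\le 1$ finishes the upper bound in one line.

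For the lower bound, the hypothesis supplies a feasible $\Delta_\star=\alpha(\Lambda^h)^{-1}\phi(z)$ with $\|\Delta_\star\|_{\Lambda^h}^2\ge\lambda$. Direct calculation then gives $\Delta_\star^\top\phi(z)=\alpha\,b_L^h(z)^2$ and $\|\Delta_\star\|_{\Lambda^h}^2=\alpha^2\,b_L^h(z)^2$, so with the abbreviation $u:=\alpha\,b_L^h(z)\ge\sqrt{\lambda}$ the bonus is lower bounded by the value of the ratio at $\Delta_\star$, namely $u\,b_L^h(z)/\sqrt{\lambda+u^2}$. Combining the subadditivity $\sqrt{\lambda+u^2}\le\sqrt{\lambda}+u$ with a calibration of $u$ bounded below by $\lambda^{1/4}$ produces $\sqrt{\lambda+u^2}\le(\lambda^{1/4}+1)u$, which delivers $b^h(z)\ge b_L^h(z)/(\lambda^{1/4}+1)$.

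The conceptual content is just Cauchy--Schwarz, so I expect the main obstacle to be purely cosmetic: tracking the constant $\lambda^{1/4}+1$ through the univariate estimate $u\mapsto u/\sqrt{\lambda+u^2}$. Naively evaluating at the boundary $u=\sqrt{\lambda}$ only yields the factor $1/\sqrt{2}$, so the fiddly part is exploiting the \emph{broad enough} clause of the hypothesis (which permits $u$ to be chosen larger than $\sqrt{\lambda}$, in particular at the scale $\lambda^{1/4}$) before invoking the subadditivity step. Once that calibration is fixed, both inequalities follow by substitution.
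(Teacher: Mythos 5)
Your proposal is correct and follows essentially the same route as the paper's proof: Cauchy--Schwarz in the $\Lambda^h$ inner product for the upper bound, and substitution of the witness $\alpha(\Lambda^h)^{-1}\phi(z)$ followed by subadditivity of the square root for the lower bound. The calibration issue you flag is real and is present in the paper's own proof, which invokes $\|w(f)-w(f')\|_{\Lambda^h}^2\ge\sqrt{\lambda}$ (equivalently your $u\ge\lambda^{1/4}$) even though the lemma statement only assumes $\|w(f)-w(f')\|_{\Lambda^h}^2\ge\lambda$, so for $\lambda<1$ both arguments need this slightly different hypothesis to reach the constant $\lambda^{1/4}+1$.
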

\begin{proof}
First, we will prove $b^h(z)\le b_L^h(z)$. By the definition of $b^h$, we have
\$
b^h(z) &= \sup_{f,f'\in\hcF^h}\frac{\big|\big(w(f)-w(f')\big)^\top \phi(z)\big|}{\sqrt{\lambda + \sum_{i=1}^n \big((w(f)-w(f'))^\top (\phi(z_i^h)/\sigma_i^h\big)^2}}\notag\\
&\le \sup_{f,f'\in\hcF^h}\frac{\big|\big(w(f)-w(f')\big)^\top \phi(z)\big|}{\sqrt{\big(w(f)-w(f')\big)^\top \Lambda^h \big(w(f)-w(f')\big)}}\\
&\le \sqrt{\phi(z)^\top (\Lambda^h)^{-1} \phi(z)} = b_L^h(z)\notag.
\$
Then, we will prove $b^h(z)\ge b_L^h(z)$. By the assumption, for any $z\in\cX\times\cA$, there exists $f_1,f_2\in\hcF^h$ such that for some $\alpha>0$,
\$
w(f_1)-w(f_2) = \alpha \cdot (\Lambda^h)^{-1}\phi(z),
\$
which implies that
\$
(\Lambda^h)^{1/2}(w(f_1)-w(f_2)) = \alpha \cdot (\Lambda^h)^{-1/2}\phi(z).
\$
Then, we have
\#\label{eq:b^hz_lower bound}
b^h(z) &= \sup_{f,f'\in\hcF^h}\frac{\big|\big(w(f)-w(f')\big)^\top \phi(z)\big|}{\sqrt{\lambda+\big(w(f)-w(f')\big)^\top \Lambda^h \big(w(f)-w(f')\big)}}\notag\\
&\ge \frac{\big|\big(w(f_1)-w(f_2)\big)^\top \phi(z)\big|}{\sqrt{\lambda+\big(w(f_1)-w(f_2)\big)^\top \Lambda^h \big(w(f_1)-w(f_2)\big)}}\notag\\
&= \frac{\big\|(\Lambda^h)^{1/2}(w(f_1)-w(f_2))\big\|\cdot\big\|(\Lambda^h)^{-1/2}\phi(z)\big\|}{\|w(f)-w(f')\|_{\Lambda^h}}\cdot\frac{\|w(f)-w(f')\|_{\Lambda^h}}{\sqrt{\lambda+\|w(f)-w(f')\|_{\Lambda^h}^2}}\notag\\
&= \frac{\big\|(\Lambda^h)^{1/2}(w(f_1)-w(f_2))\big\|\cdot\big\|(\Lambda^h)^{-1/2}\phi(z)\big\|}{\|w(f)-w(f')\|_{\Lambda^h}}\cdot\frac{1}{\sqrt{\lambda/\|w(f)-w(f')\|_{\Lambda^h}^2+1}}.
\#
Since $\|w(f)-w(f')\|_{\Lambda^h}^2\ge\sqrt{\lambda}$, we have
\$
\sqrt{\frac{\lambda}{\|w(f)-w(f')\|_{\Lambda^h}^2}+1} \le \sqrt{\lambda^{1/2}+1} \le \lambda^{1/4} + 1.
\$
Taking this result back into \eqref{eq:b^hz_lower bound} leads to
\$
\\
b^h(a) &\ge \big\|(\Lambda^h)^{-1/2}\phi(z)\big\| \cdot \frac{1}{\sqrt{\lambda/\|w(f)-w(f')\|_{\Lambda^h}^2+1}}\\
&\ge \frac{\sqrt{\phi(z)^\top (\Lambda^h)^{-1} \phi(z)}}{\lambda^{1/4}+1} = \frac{b_L^h(z)}{\lambda^{1/4}+1}.
\$
Therefore, we conclude the proof.
\end{proof}

\paragraph{Connection between the bootstrapped uncertainty and bonus functions.}
We begin with illustrating the equivalence between the bootstrapped uncertainty and the linear form of the bonus in the following lemma. Since we actually compute the uncertainty weights for only single iteration, we let $\sigma_i^h=1$.
\begin{lemma}\label{lm:variance and bonus}
For any $z\in\cX\times\cA$,
\$
\Var_{\hw^h}(Q^h_{\hw^h}(z)) = \Var_{\hw^h}(z^\top\hw^h) = z^\top (\Lambda^h)^{-1} z.
\$
\end{lemma}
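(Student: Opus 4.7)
The plan is to read the lemma as the classical heteroscedastic WLS variance identity: once the bootstrap perturbation variances are taken to match the weights in \eqref{eq:WLS}, the covariance of $\hw^h$ collapses to $(\Lambda^h)^{-1}$, after which the linearity of $Q^h_w(z)=\phi(z)^\top w$ in $w$ finishes the proof.

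First, I would write down the normal equations for \eqref{eq:WLS}. Differentiating the WLS objective in $w$ and setting the gradient to zero gives
$$\sum_{i=1}^n \frac{\phi(z_i^h)\bigl[(\widehat{\mathcal T}^hQ_{\hw^h})(z_i^h)-\phi(z_i^h)^\top \hw^h\bigr]}{(\sigma_i^h)^2}=0,$$
so that
$$\hw^h=(\Lambda^h)^{-1}\sum_{i=1}^n \frac{\phi(z_i^h)\,(\widehat{\mathcal T}^hQ_{\hw^h})(z_i^h)}{(\sigma_i^h)^2},$$
with $\Lambda^h=\sum_{i=1}^n \phi(z_i^h)\phi(z_i^h)^\top/(\sigma_i^h)^2$ as in \eqref{eq:bonus_linear}.

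Second, I would model the randomness induced by bootstrapping as independent zero-mean perturbations $\epsilon_i$ added to the targets, with $\Var(\epsilon_i)=(\sigma_i^h)^2$ — this matching of the bootstrap-noise variance to the WLS weight is the substantive modelling step, and it is exactly what motivates using $\sigma_i^h$ from the ensemble std in the practical algorithm. Writing $(\widehat{\mathcal T}^hQ_{\hw^h})(z_i^h)=\phi(z_i^h)^\top w^{\star}+\epsilon_i$, the closed form above reduces to
$$\hw^h-w^{\star}=(\Lambda^h)^{-1}\sum_{i=1}^n \frac{\phi(z_i^h)\,\epsilon_i}{(\sigma_i^h)^2}.$$
By independence of the $\epsilon_i$'s, the covariance of $\hw^h$ is
$$\mathrm{Cov}(\hw^h)=(\Lambda^h)^{-1}\Bigl(\sum_{i=1}^n \frac{\phi(z_i^h)\phi(z_i^h)^\top}{(\sigma_i^h)^2}\Bigr)(\Lambda^h)^{-1}=(\Lambda^h)^{-1}\Lambda^h(\Lambda^h)^{-1}=(\Lambda^h)^{-1}.$$

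Third, since $Q^h_{\hw^h}(z)=\phi(z)^\top \hw^h$ is linear in $\hw^h$, variance propagation yields
$$\Var_{\hw^h}\bigl(Q^h_{\hw^h}(z)\bigr)=\phi(z)^\top\,\mathrm{Cov}(\hw^h)\,\phi(z)=\phi(z)^\top(\Lambda^h)^{-1}\phi(z),$$
which is the identity in the lemma (reading the $z^\top\hw^h$ on the left of the displayed equation as shorthand for $\phi(z)^\top\hw^h$, consistent with the rest of this appendix). The only conceptually nontrivial step is the variance-weight calibration in the second paragraph; everything else is elementary matrix algebra built on the cancellation $(\Lambda^h)^{-1}\Lambda^h(\Lambda^h)^{-1}=(\Lambda^h)^{-1}$, so I expect no technical obstacle beyond clearly stating that calibration.
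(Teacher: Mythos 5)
Your proof is correct and follows essentially the same route as the paper's: write the closed-form (weighted) least-squares solution, compute the covariance of $\hw^h$, and propagate it through the linear map $w\mapsto \phi(z)^\top w$. The only difference is the modelling assumption that makes the sandwich collapse — the paper sets $\sigma_i^h=1$ and takes $\epsilon_i^h\sim N(0,1)$ so that $\mathrm{Cov}(\hw^h)=(\Lambda^h)^{-1}$ directly, whereas you keep general weights and calibrate $\Var(\epsilon_i)=(\sigma_i^h)^2$; your version is a mild generalization (and needs only independence and second moments rather than Gaussianity) that reduces to the paper's statement in the unit-weight case it actually uses.
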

\begin{proof}
Let $y_i^h=r^h(z^h_i) + V_w^{h+1}(z^{h+1}_i)$. Under the assumption that $\epsilon_i^h\sim N(0,1)$, since the closed form solution to the problem is
\$
\hw^h = (\Lambda^h)^{-1} \sum_{z_i^h\in\cD} y_i^h z_i^h= (\Lambda^h)^{-1} \sum_{z_i^h\in\cD} (Q_\phi^h(z_i^h)+\epsilon_i^h) z_i^h,
\$
we obtain that
\$
\hw^h | \cD \sim N(\mu^h, (\Lambda^h)^{-1}),
\$
where 
\$
\mu^h = (\Lambda^h)^{-1} \sum_{z_i^h\in\cD} Q_\phi^h(z_i^h) z_i^h,\quad \Lambda^h=\sum_{i=1}^n z_i^h(z_i^h)^\top.
\$
Then, it follows that for any $z\in\cA\times\cA$,
\$
\Var_{\hw^h}(Q^h_{\hw^h}(z)|\cD) = \Var_{\hw^h}(z^\top\hw^h | \cD) = z^\top (\Lambda^h)^{-1} z.
\$
Hence, we complete the proof.
\end{proof}
In Lemma \ref{lm:variance and bonus}, we find that the standard deviation of the $Q$-value function is equivalent to the linear LCB-bonus $b_L^h(z) = \sqrt{z^\top (\Lambda^h)^{-1} z}$. Moreover, recall that the bootstrapped uncertainty $\mathbb{V}_{j=1,\ldots,N}\left[Q_{w_j}\right]$ is the standard deviation of the bootstrapped $Q$-value functions. Therefore, according to \citet{osband2016deep} our proposed bootstrapped uncertainty can serve as an estimation for the bonus function $b_L^h$ under the linear MDP setting. Theoretically, we can use the uncertainty weight iteration (Algorithm \ref{alg:wi}) to construct the weighted bootstrap uncertainty.

By combining Lemma \ref{lm:equivalence bonus linear and general} and Lemma \ref{lm:variance and bonus}, we conclude that by taking a sufficiently broad function approximation class $\hcF^h$ and sufficiently small parameter $\lambda$, the proposed bootstrapped uncertainty is an estimation of the general form of the bonus in linear MDPs. More importantly, in the experiments, the estimation of the uncertainty is simplified and shares the spirit with the theoretical analysis due to two reasons: 1) due to the complexity of the nonlinear version of uncertainty, which is expressed as
\$
\sup_{f,f'\in\cF}\frac{|f(z_i) - f'(z_i)|}{\sqrt{\lambda + \sum_{j=1}^n(f(z_j) - f'(z_j))^2/\sigma_j^2}},
\$
we simplify the estimation by using the bootstrap uncertainty, which is an unbiased estimation of uncertainty in the linear version, and 2) combining the uncertainty weight iteration and the bootstrap uncertainty estimation is cumbersome, so we only iterate once during simulations.

\section{Results for Distribution Shift}\label{s:ds}
In this section, we consider an MDP$(\cX,\cA,H,\Pb,r)$ and an offline dataset $\cD=\{(x_i^h,a_i^h)\}_{i,h=1}^H$ with adversarial corruption and distribution shift. Specifically, for each trajectory $\{(x_i^h,a_i^h)\}_{h=1}^H$, we define $\rho_i>0$ to measure the ditribution shift of this trajectory. For example, when the learner's goal varies from the training data, the goal shift can be embedded into the initial state $x^1$ and captured by $\rho(x^1)$. Hence, we define a new notion of corruption level $\zeta$, capturing both adversarial corruption and distribution shift.
\begin{definition}[Cumulative Corruption]\label{def:cor_mdp_ds}
The cumulative corruption is $\zeta$ if at any step $h\in[H]$, for any sequence $\{x_i^h,a_i^h\}_{i,h=1}^{n,H}\subset\cX\times\cA$, $\{\rho_i\}_{i=1}^n\subset\rR^+$ and function $\{g^h:\cX\rightarrow[0,1]\}_{h=1}^H$, we have
$$ 
\zeta =\sum_{h=1}^H\sum_{i=1}^n\rho_i|\zeta_i^h|,\quad \zeta_i^h = (\cT^hg-\cT_i^hg)(x_i^h,a_i^h).
$$
\end{definition}

\begin{algorithm}[th]
\caption{Uncertainty Weight Iteration with Distribution Shift}
\label{alg:wi_ds}
\begin{algorithmic}[1]
\STATE {\bf Input:} $\{(x_i, a_i,\rho_i)\}_{i=1}^n,\cF,\alpha>0$
\STATE {\bf Initialization:} $t=0,~\sigma_i^0=1$, $i=1,\ldots,n$
\REPEAT
\STATE $t\leftarrow t+1$
\STATE $(\sigma_i^t)^2 \leftarrow \max\left(1,\sup_{f,f'\in\cF}\frac{|f(x_i,a_i) - f'(x_i,a_i)|/(\alpha\rho_i)}{\sqrt{\lambda + \sum_{j=1}^n(f(x_j,a_j) - f'(x_j,a_j))^2/(\sigma_j^{t-1})^2}}\right)$, $i=1,\ldots,n$
\UNTIL $\max_{i\in[n]} \big(\sigma_i^{t}/\sigma_i^{t-1}\big)^2 \le 2$
\STATE {\bf Output:} $\{\sigma_i^t\}_{i=1}^n$
\end{algorithmic}
\end{algorithm}

The main challenge in handling distribution shifts is the new weight design. We propose uncertainty weight iteration with distribution shift in Algorithm \ref{alg:wi_ds}, where we put $\rho_i$ on the denominator. Similarly, we can follow Lemma \ref{lm:converge_weight} to demonstrate that the iteration converges and the output weights
$\{\sigma_i:=\sigma_i^N\}_{i=1}^n$ satisfy
\#\label{eq:approximate_weight_ds}
\sigma_i^2 \ge \max\Big(1,\frac{1}{2} \sup_{f,f'\in\cF}\frac{|f(x_i,a_i) - f'(x_i,a_i)|/(\alpha\rho_i)}{\sqrt{\lambda + \sum_{j=1}^n(f(x_j,a_j) - f'(x_j,a_j))^2/\sigma_j^2}}\Big).
\#

Therefore, just by replacing the weight iteration (Algorithm \ref{alg:wi}) in CR-PEVI Algorithm \ref{alg:mdp} with Algorithm \ref{alg:wi_ds}, we obtain an algorithm robust to both corruption and distribution shift, named as \textbf{CORDS-PEVI}. Because CORDS-PEVI highly repeats Algorithm \ref{alg:mdp}, we do not present the pseudo-code of the algorithm. 

Then, the suboptimality bound achieved by CORDS-PEVI is presented in the following theorem.
\begin{theorem}\label{th:mdp_ds}
If the coverage coefficient in Definition \ref{df:coverage_condition_mdp} is finite and Assumption \ref{as:Well-Explored Dataset} holds, under CORDS-PEVI, for any cumulative corruption $\zeta$ and $\delta>0$, we choose the covering parameter $\gamma=1/(n\max_h\beta^h\zeta^h)$, the eluder parameter $\lambda=\ln(N_n(\gamma))$, the weighting parameter $\alpha=H\sqrt{\ln N_n(\gamma)}/\zeta$, and the confidence radius
$$
\beta^h=c_{\beta}\left( \alpha\zeta^h + \sqrt{\ln(HN_n(\gamma)/\delta)}\right)~\text{for}~h=H,\ldots,1,
$$
where
$$
N_n(\gamma)=\max_h N\Big(\frac{\gamma}{n},\cF^h\Big)\cdot N\Big(\frac{\gamma}{n},\cF^{h+1}\Big)\cdot N\Big(\frac{\gamma}{n},\cB^{h+1}(\lambda)\Big).
$$
Then, with probability at least $1-3\delta$, the sub-optimality is bounded by
$$
\subopt(\hat\pi, x) = \tcO\bigg(\frac{H(\CC(\lambda,\hcF,\cZ_n^H))^{1/4}\cdot(\ln N_n(\gamma))^{1/2}}{n^{1/2}(C(\hcF,\mu))^{1/4}} + \frac{\zeta(\CC(\lambda,\hcF,\cZ_n^H))^{1/2}}{n(C(\hcF,\mu))^{1/2}}\bigg).
$$
\end{theorem}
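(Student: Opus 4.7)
The plan is to parallel the three-step structure of the proof of Theorem \ref{th:mdp} and then close the argument with a new translation lemma that replaces Lemma \ref{lm:Connections between Weighted and Unweighted Coefficient} in the presence of the distribution-shift weights $\rho_i$. The suboptimality decomposition in Lemma \ref{lm:Regret_Decomposition_mdp} depends only on pessimism and the confidence-set containment $\cT^h f_n^{h+1} \in \hcF^h$, not on how the weights are constructed, so it applies verbatim to CORDS-PEVI on the good event and yields $\subopt(\hat\pi, x) \le 2\sum_{h=1}^H \beta^h \E_{\pi_*}[b^h(x^h,a^h)\mid x^1=x]$. For the confidence-radius bound, I would re-derive Lemma \ref{lm:Confidence_Radius_mdp} using the weights produced by Algorithm \ref{alg:wi_ds}; only the corruption term changes, since the approximate-weight property \eqref{eq:approximate_weight_ds} now gives $|f(z_i^h)-f'(z_i^h)|/(\sigma_i^h)^2 \le 2\alpha\rho_i \sqrt{\lambda + \sum_j (f-f')^2(z_j^h)/(\sigma_j^h)^2}$, so specializing to $f=\hat f^h$, $f'=\cT^h\bar f^{h+1}$ yields the corruption contribution $\sum_i |\zeta_i^h|\cdot |\hat f^h - \cT^h\bar f^{h+1}|(z_i^h)/(\sigma_i^h)^2 \le 2\alpha(\sum_i \rho_i|\zeta_i^h|)\sqrt{\lambda + L_n^h} = 2\alpha\zeta^h\sqrt{\lambda + L_n^h}$ under the new Definition \ref{def:cor_mdp_ds}. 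The rest of the argument is insensitive to the $\rho_i$ and yields the same form $\beta^h = c_\beta(\alpha\zeta^h + \sqrt{\ln(HN_n(\gamma)/\delta)})$.

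Assembling the bound, I would follow the derivation after \eqref{eq:sum_beta_E_pi*_b(x,a)} in the proof of Theorem \ref{th:mdp}: the case split on whether $\sigma^h(x^h,a^h)=1$ or $\sigma^h(x^h,a^h)=\sqrt{b^h/\alpha}>1$ gives $\E_{\pi_*}[b^h] \le \E_{\pi_*}[b^h/\sigma^h] + \alpha^{-1}\E_{\pi_*}[(b^h/\sigma^h)^2]$, and then Jensen's inequality, the definition of $\CC^\sigma$ in \eqref{eq:coverage coefficient weighted}, $\sum_h \beta^h = \tcO(H\sqrt{\ln N_n(\gamma)}+\alpha\zeta)$, and the choice $\alpha = H\sqrt{\ln N_n(\gamma)}/\zeta$ produce $\subopt(\hat\pi,x) = \tcO(H\sqrt{\ln N_n(\gamma)\cdot \CC^\sigma/n} + \zeta\CC^\sigma/n)$.

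The hard part is the final translation from $\CC^\sigma$ to the claimed rate involving $\CC$ and $C(\hcF,\mu)$. Under distribution shift, the uniform weight bound $\max_i \sigma_i^h \le \max(1, 1/(\alpha\sqrt{nC(\hcF,\mu)}))$ that drives Lemma \ref{lm:Connections between Weighted and Unweighted Coefficient} breaks, since each $\sigma_i^h$ now carries a $\rho_i^{-1}$ factor through its fixed-point equation. My plan is to establish an auxiliary lemma of the form $\CC^\sigma(\lambda,\hcF,\cZ_n^H) \le \sqrt{\CC(\lambda,\hcF,\cZ_n^H)/C(\hcF,\mu)}$ by combining two complementary pointwise bounds on $b^h(z)$: (i) pulling $\max_i (\sigma_i^h)^2$ out of the denominator of $b^h$ gives $(b^h(z))^2 \le \max_i (\sigma_i^h)^2 \cdot (b_u^h(z))^2$, where $b_u^h$ is the unweighted bonus appearing in $\CC$; and (ii) Assumption \ref{as:Well-Explored Dataset} combined with $|f(z)-f'(z)| \le \|f-f'\|_\infty$ gives $(b^h(z))^2 \le \max_i (\sigma_i^h)^2/(nC(\hcF,\mu))$. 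Substituting a geometric-mean combination of (i) and (ii) into $\CC^\sigma/n = \E_{\pi_*}[(b^h/\sigma^h)^2]$, then closing the loop via the fixed-point equation for $\max_i \sigma_i^h$ at the chosen $\alpha$, should yield the target rate. Carefully tracking the $\rho_i$ through the fixed point is the main technical obstacle and is exactly where the distribution-shift loss enters, weakening $1/C(\hcF,\mu)$ in the no-shift case of Theorem \ref{th:mdp} to $\sqrt{\CC/C(\hcF,\mu)}$ in the present result.
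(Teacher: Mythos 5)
Your first three steps track the paper's proof exactly and are sound: Lemma \ref{lm:Regret_Decomposition_mdp} indeed applies verbatim, your re-derivation of the confidence radius via $|f(z_i^h)-f'(z_i^h)|/(\sigma_i^h)^2 \le 2\alpha\rho_i\sqrt{\lambda+L_n^h}$ and the telescoping $\sum_i \rho_i|\zeta_i^h| = \zeta^h$ is precisely the paper's Lemma \ref{lm:Confidence_Radius_mdp_ds}, and the assembly into $\tcO(H\sqrt{\CC^\sigma \ln N/n} + \zeta\CC^\sigma/n)$ is identical to the paper. You also correctly identify that the whole difficulty is the translation lemma $\CC^{\sigma} \lesssim \sqrt{\CC/C(\hcF,\mu)}$ (the paper's Lemma \ref{lm:Connections between Weighted and Unweighted Coefficient_ds}).

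However, your sketched route for that lemma has a genuine gap: the factor $\max_i(\sigma_i^h)^2$ does not cancel. Both of your pointwise bounds (i) and (ii), and hence their geometric mean, carry a multiplicative $\max_i(\sigma_i^h)^2$, and the fixed-point equation only gives $\max_i \sigma_i^h \le \max\bigl(1, (\alpha\rho_{\min}\sqrt{nC(\hcF,\mu)})^{-1}\bigr)$ (or, more crudely, $\max_i(\sigma_i^h)^2 \le (\alpha\rho_{\min}\sqrt{\lambda})^{-1}$). With $\alpha = H\sqrt{\ln N}/\zeta$ this blows up when $\zeta$ is large, and nothing in your decomposition supplies a compensating power of $\alpha$: the identity $(\sigma^h(z))^2 = b^h(z)/\alpha$ on the event $\sigma^h(z)>1$ yields exactly one factor of $\alpha$, which is not enough once you have taken the geometric mean (which halves the exponent of the unweighted bonus and leaves you with $\E_{\pi_*}[(b_u^h)^{1/2}]$ rather than $\E_{\pi_*}[b_u^h]$, so the powers of $n$, $\alpha$, and $C(\hcF,\mu)$ no longer match the target). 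The paper's mechanism is different in a way that matters: it keeps the \emph{specific maximizers} $f_{z^h}, f'_{z^h}$ and uses the confidence-set radius to get the sharper lower bound $(\sigma_*^h(z^h))^2 \ge |f_{z^h}(z^h)-f'_{z^h}(z^h)|/(2\alpha\beta^h)$, which reduces the numerator to \emph{first} power in $|f_{z^h}-f'_{z^h}|$ while producing an $\alpha$ that exactly cancels the $1/\alpha$ coming from the weight upper bound $(\sigma_i^h)^2 \le (\alpha\sqrt{\lambda}\rho_{\min})^{-1}$ in the denominator; only then does it apply Cauchy--Schwarz \emph{over the expectation} $\E_{\pi_*}$ (not pointwise), splitting into a term controlled by Assumption \ref{as:Well-Explored Dataset} and a term equal to $\CC$. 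To repair your argument you would need to replace the pointwise geometric mean with this first-power reduction plus an expectation-level Cauchy--Schwarz; as written, the "closing the loop" step does not go through.
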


\subsection{Analysis of the Result}
The main difference in the analysis between the model with and without a distribution shift is the bound for the confidence radius.

\subsubsection{Sharp bound of the confidence radius.}

\begin{lemma}[Confidence Radius]\label{lm:Confidence_Radius_mdp_ds}
In CORDS-PEVI, for all $h\in[H]$ we have $\cT^hf_n^{h+1}\in\hcF^h$ with probability at least $1-\delta$, where
$$
\beta^h=12\alpha\zeta^h  + \big(12\lambda + 12\ln(2HN_n^h(\gamma)/\delta) + 12(5\beta^{h+1}\gamma)^2n + 60\beta^{h+1}\gamma\sqrt{nC_1^h(n,\zeta)}\big)^{1/2},
$$
we use the notation
$$
\begin{aligned}
N_n(\gamma)=\max_h N\Big(\frac{\gamma}{n},\cF^h\Big)\cdot N\Big(\frac{\gamma}{n},\cF^{h+1}\Big)\cdot N\Big(\frac{\gamma}{n},\cB^{h+1}(\lambda)\Big),
\end{aligned}
$$
and $C_1^h(n,\zeta)=2(\sum_{i=1}^n(\zeta_i^h)^2 + 2n\eta^2 + 3\eta^2\ln(2/\delta))$, $\zeta^h=\sum_{i=1}^n\rho_i\zeta_i^h$.
\end{lemma}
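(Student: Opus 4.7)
\textbf{Proof proposal for Lemma \ref{lm:Confidence_Radius_mdp_ds}.} The overall architecture mirrors that of Lemma \ref{lm:Confidence_Radius_mdp}, and only the step that controls the corruption contribution to the weighted training error has to be revised to accommodate the distribution-shift weights $\rho_i$. First, I would fix $h \in [H]$ and build the product $(\varepsilon, \|\cdot\|_\infty)$-cover $\bar\cF_\gamma^{h+1} = \cF_\gamma^{h+1} \oplus \beta^{h+1}\cB^{h+1}_\gamma$ of the estimated value functions $f_n^{h+1}$, so that for every realization there exists $\bar f^{h+1}$ with $\|\bar f^{h+1}-f_n^{h+1}\|_\infty \le \bar\varepsilon = (1+\beta^{h+1})\gamma$. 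Setting $\bar y_i^h = r_i^h + \bar f^{h+1}(x_i^{h+1})$ and letting $\tilde f^h$ be the ERM on $(x_i^h,a_i^h,\bar y_i^h)$, the two-sided triangle inequality combined with optimality of $\hat f^h$ for the original weighted regression gives the usual approximation identity $(\sum_i (\hat f^h - \bar y_i^h)^2)^{1/2} \le (\sum_i (\tilde f^h - \bar y_i^h)^2)^{1/2} + 2\sqrt n \bar\varepsilon$. This part is mechanical and identical to the undshifted proof.

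Next I invoke the existing concentration tool (Lemma \ref{lm:empirical_diff_mdp}) with $f_* = \cT_i^h \bar f^{h+1}$ and $f_b = \cT^h \bar f^{h+1}$, which yields, with probability at least $1-\delta$,
\begin{align*}
\sum_{i=1}^n \frac{(\hat f^h - \cT^h \bar f^{h+1})^2(z_i^h)}{(\sigma_i^h)^2} &\le 10\ln(2HN_n^h(\gamma)/\delta) + 5\!\underbrace{\sum_{i=1}^n \frac{|\hat f^h - \cT^h \bar f^{h+1}|(z_i^h)\,|\zeta_i^h|}{(\sigma_i^h)^2}}_{(\star)} \\ &\qquad + 10(\gamma+2\bar\varepsilon)\bigl((\gamma+2\bar\varepsilon)n + \sqrt{nC_1^h(n,\zeta)}\bigr),
\end{align*}
exactly as in Lemma \ref{lm:Confidence_Radius_mdp}.

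The only genuinely new step is the treatment of $(\star)$, which I handle by writing $|\zeta_i^h| = \rho_i |\zeta_i^h| / \rho_i$ and pulling $\rho_i$ inside the sup:
\[
(\star) = \sum_{i=1}^n \rho_i|\zeta_i^h| \cdot \frac{|\hat f^h - \cT^h \bar f^{h+1}|(z_i^h)}{\rho_i(\sigma_i^h)^2} \le \Bigl(\sum_{i=1}^n \rho_i|\zeta_i^h|\Bigr) \cdot \sup_i \frac{|\hat f^h - \cT^h \bar f^{h+1}|(z_i^h)}{\rho_i (\sigma_i^h)^2}.
\]
The weight guarantee \eqref{eq:approximate_weight_ds} from Algorithm \ref{alg:wi_ds}, specialized to the pair $f=\hat f^h$ and $f' = \cT^h \bar f^{h+1} \in \cF^h$ (using completeness), gives $\sup_i \frac{|\hat f^h - \cT^h \bar f^{h+1}|(z_i^h)}{\rho_i (\sigma_i^h)^2} \le 2\alpha \sqrt{\lambda + \sum_j (\hat f^h - \cT^h \bar f^{h+1})^2(z_j^h)/(\sigma_j^h)^2}$. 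Since $\zeta^h = \sum_i \rho_i|\zeta_i^h|$ by Definition \ref{def:cor_mdp_ds}, I obtain $(\star) \le 2\alpha \zeta^h \sqrt{\lambda + L_n^h}$ where $L_n^h$ denotes the weighted squared training error. This is exactly the same self-bounding structure as in the original proof, now with the shift-aware $\zeta^h$.

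Finally, substituting back and solving the resulting inequality $L_n^h \le 10\ln(\cdot) + 10\alpha\zeta^h \sqrt{\lambda + L_n^h} + \text{cover error}$ by the standard trick $\sqrt{a+b}\le\sqrt a + \sqrt b$ and $2xy \le x^2 + y^2$, then adding the $\sqrt n \bar\varepsilon + \sqrt\lambda$ terms coming from the cover and the initialization, yields the stated $\beta^h$ (the constant $12\alpha\zeta^h$ arises from a slightly tighter grouping than in Lemma \ref{lm:Confidence_Radius_mdp}, but the derivation is identical up to absolute constants). The main obstacle in this proof is purely bookkeeping — correctly threading the $\rho_i$ through the weight definition so that it cancels against $\rho_i^{-1}$ in the supremum and reconstructs the corruption measure $\zeta^h = \sum_i \rho_i|\zeta_i^h|$; once the weights are designed as in Algorithm \ref{alg:wi_ds}, this cancellation is automatic and every remaining step reduces to the corruption-only case.
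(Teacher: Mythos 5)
Your proof follows the paper's argument essentially verbatim: the same cover construction and approximate-ERM step, the same invocation of Lemma \ref{lm:empirical_diff_mdp}, and the same key manipulation of the corruption term, namely inserting $\rho_i/\rho_i$ so that the weight guarantee \eqref{eq:approximate_weight_ds} cancels the $\rho_i^{-1}$ in the supremum and reconstructs $\zeta^h=\sum_i\rho_i|\zeta_i^h|$, yielding the self-bounding inequality $(\star)\le 2\alpha\zeta^h\sqrt{\lambda+L_n^h}$. The only difference is cosmetic bookkeeping of constants (your remark on $12\alpha\zeta^h$ versus $24\alpha\zeta^h$), which does not affect the substance.
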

\begin{proof}
We use similar methods as the proof of Lemma \ref{lm:Confidence_Radius_mdp}. At each step $h\in[H]$, by notating $\cF_{\gamma}^{h+1}$ as a $(\gamma,\|\cdot\|_{\infty})$ cover of $\cF^{h+1}$, and $\cB^{h+1}_{\gamma}$ as a $(\gamma,\|\cdot\|_{\infty})$ cover of $\cB^{h+1}(\lambda)$, we construct $\Bar{\cF}_{\gamma}^{h+1}=\cF_{\gamma}^{h+1}\oplus\beta_{\tau}^{h+1}\cB^{h+1}_{\gamma}$ as a $((1+\beta^{h+1})\gamma,\|\cdot\|_{\infty})$ cover of $f_n^{h+1}(\cdot)$. For the $f_n^{h+1}$, there exists a $\Bar{f}^{h+1}\in\Bar{\cF}_{\gamma}^{h+1}$ such that $\|\Bar{f}^{h+1}-f_n^{h+1}\|_{\infty}\le\beps=(1+\beta^{h+1})\gamma$. Then, we define $y_i^h = r_i^h + f_n^{h+1}(x_i^{h+1})$, $\Bar{y}_i^h=r_i^h+\Bar{f}^{h+1}(x_i^{h+1})$ and
$$
\tilde{f}^h = \argmin_{f^h\in\cF^h}\sum_{i=1}^n (f^h(x_i^h,a_i^h)-\Bar{y}_i^h)^2.
$$
Since \eqref{eq:approximate_erm} also holds true, we can invoke Lemma \ref{lm:empirical_diff_mdp} by taking $f_*$ as $\E[\Bar{y}_i^h|x_i^h,a_i^h]=(\cT_\cD^h\Bar{f}^{h+1})(x_i^h,a_i^h)$ and $f_b$ as $\cT^h\Bar{f}^{h+1}$. With probability at least $1-\delta$, we obtain:
\#\label{eq:sum_hat_fkh_Tb_barf0_ds}
&\sum_{i=1}^n \frac{\left(\hat{f}^h(x_i^h,a_i^h)-(\cT^h\Bar{f}^{h+1})(x_i^h,a_i^h)\right)^2}{(\sigma_i^h)^2}\nonumber\\
&\qquad\le 10\ln(2HN_n^h(\gamma)/\delta) + 5\underbrace{\sum_{i=1}^n \frac{|\hat{f}^h(x_i^h,a_i^h)-(\cT^h\Bar{f}^{h+1})(x_i^h,a_i^h)|\cdot|\zeta_i^h|}{(\sigma_i^h)^2}}_{(a)}\nonumber\\
&\qquad + 10(\gamma + 2\beps)\cdot\left((\gamma + 2\beps)n + \sqrt{nC_1(n,\zeta)}\right),
\#
where $C_1(n,\zeta)=2(\zeta^2 + 2n + 3\ln(2/\delta))$. 

From the weight design and Lemma \ref{lm:converge_weight}, term (a) is bounded by
\$
\sum_{i=1}^n|\zeta_i^h|\cdot\frac{|\hat{f}^h(x_i^h,a_i^h)-(\cT^h\Bar{f}^{h+1})(x_i^h,a_i^h)|}{(\sigma_i^h)^2} 
&\le \sum_{i=1}^n \rho_i|\zeta_i^h|\cdot\frac{|\hat{f}^h(x_i^h,a_i^h)-(\cT^h\Bar{f}^{h+1})(x_i^h,a_i^h)|}{\rho_i(\sigma_i^h)^2}\\
&\le \zeta^h \sup_i \frac{|\hat{f}^h(x_i^h,a_i^h)-(\cT^h\Bar{f}^{h+1})(x_i^h,a_i^h)|}{\rho_i(\sigma_i^h)^2}\\
&\le 2\alpha\zeta^h\sqrt{\lambda + \sum_{i=1}^n \frac{\left(\hat{f}^h(x_i^h,a_i^h)-(\cT^h\Bar{f}^{h+1})(x_i^h,a_i^h)\right)^2}{(\sigma_i^h)^2}}
\$
where the last inequality is obtained since $\sigma_i^h$ satisfies \eqref{eq:approximate_weight_ds}.
Taking this result back into \eqref{eq:sum_hat_fkh_Tb_barf0_ds}, we finally get for all $h\in[H]$,
\$
&\sum_{i=1}^n \frac{\big(\hat{f}^h(x_i^h,a_i^h)-(\cT^h\Bar{f}^{h+1})(x_i^h,a_i^h)\big)^2}{(\sigma_i^h)^2}\\
&\qquad\le 10\ln(2HN_n^h(\gamma)/\delta) + 10\alpha\zeta^h\beta^h + 5\gamma\zeta + 10(\gamma + 2\beps)\cdot((\gamma + 2\beps)n + \sqrt{nC_1(t,\zeta)})\\
&\qquad= 10\ln(2HN_n^h(\gamma)/\delta) + 10\alpha\zeta^h\beta^h + 5\gamma\zeta + 10(2\beta^{h+1}+3)^2\gamma^2n + 10(2\beta^{h+1}+3)\gamma\sqrt{nC_1(n,\zeta)},
\$
where the last euqlaity uses $\beps=(1+\beta^{h+1})\gamma$.
Therefore, it follows that with probability at least $1-\delta$,
\$
&\left(\sum_{i=1}^{n}\frac{\left(\hat{f}_n^h(x_i^h,a_i^h)-(\cT^h f_n^{h+1})(x_i^h,a_i^h)\right)^2}{(\sigma_i^h)^2} + \lambda\right)^{1/2}\\
&\qquad\le \left(\sum_{i=1}^{n}\frac{\left(\hat{f}_n^h(x_i^h,a_i^h)-(\cT^h\Bar{f}_n^{h+1})(x_i^h,a_i^h)\right)^2}{(\sigma_i^h)^2}\right)^{1/2} + \sqrt{n}\beps + \sqrt{\lambda}\\
&\qquad\le \left(10\ln(2HN_n^h(\gamma)/\delta) + 10\alpha\zeta^h\beta^h + 5\gamma\zeta + 10(2\beta^{h+1}+3)^2\gamma^2n \right.\\
&\qquad\qquad \left. + 10(2\beta^{h+1}+3)\gamma\sqrt{nC_1(n,\zeta)}\right)^{1/2} + (\beta^{h+1}+1)\gamma\sqrt{n} + \sqrt{\lambda}\\
&\qquad\le \left(12\lambda + 12\ln(2HN_n^h(\gamma)/\delta) + 24\alpha\zeta^h\beta^h + 12(5\beta^{h+1}\gamma)^2n + 60\beta^{h+1}\gamma\sqrt{nC_1(n,\zeta)}\right)^{1/2}\\
&\qquad\le \beta^h,
\$
which finishes the proof.
\end{proof}

\subsubsection{Connections between weighted and unweighted coefficient.}

We define the coverage coefficient incorporating the uncertainty weights under distribution shift as
\#\label{eq:coverage coefficient weighted_ds}
\CC^{\sigma}(\lambda,\hcF,\cZ_n^H) = \max_{h\in[H]}\E_{\pi_*}\bigg[\sup_{f,f'\in\hcF^h}\frac{n(f(x^h,a^h) - f'(x^h,a^h))^2/\sigma^h(x^h,a^h)^2}{\lambda + \sum_{i=1}^n(f(x_i^h,a_i^h) - f'(x_i^h,a_i^h))^2/(\sigma_i^h)^2} \,\bigg|\, x^1=x\bigg],
\#
where the weight for the trajectory induced by the optimal policy is
\#\label{eq:sigma^h(x^h,a^h)^2_ds}
(\sigma_*^h(x^h,a^h))^2 = \max\bigg(1, \sup_{f,f'\in\hcF^h} \frac{|f(x^h,a^h) - f'(x^h,a^h)|/\alpha}{\sqrt{\lambda + \sum_{i=1}^n(f(x_i^h,a_i^h) - f'(x_i^h,a_i^h))^2/(\sigma_i^h)^2}}\bigg),
\#
where we do not consider the distribution shift for the expected trajectory induced by the optimal policy $\pi_*$.

\begin{lemma}\label{lm:Connections between Weighted and Unweighted Coefficient_ds}
Let $\rho_{\min}=\min_{i\in[n]}\rho_i$. Under CORDS-PEVI, Assumption \ref{as:Well-Explored Dataset}, and the $\beta^h=C_{\beta}\sqrt{\ln N}$ (where $C_{\beta}>0$ contains the logarithmic terms that can be omitted) and $\lambda$ given in Theorem \ref{th:mdp_ds}, we have
\$
\CC^{\sigma}(\lambda,\hcF,\cZ_n^H) = \tcO\big((\CC(\lambda,\hcF,\cZ_n^H))^{1/2}\cdot (C(\hcF,\mu))^{-1/2}\big).
\$
\end{lemma}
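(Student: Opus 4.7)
The plan is to adapt the non-shift proof of Lemma~\ref{lm:Connections between Weighted and Unweighted Coefficient}. Write $\psi(z^h)$ for the integrand of $\CC^\sigma$ and $A^h(z^h)$ for the integrand of $\CC$, so that $\CC^\sigma = n\max_h \E_{\pi_*}[\psi(z^h)]$ and $\CC = n\max_h \E_{\pi_*}[A^h(z^h)]$. Three ingredients carry over from the non-shift argument: (i) the lower bound $(\sigma^h(z^h))^2 \ge \psi(z^h)/\alpha^2$, which is unaffected by distribution shift because the trajectory weight \eqref{eq:sigma^h(x^h,a^h)^2_ds} contains no $\rho$-scaling; (ii) an upper bound on $\max_i\sigma_i^h$ from Assumption~\ref{as:Well-Explored Dataset} applied to the $\rho$-modified iteration of Algorithm~\ref{alg:wi_ds}, yielding $\max_i\sigma_i^h \le \max(1,(\alpha\rho_{\min}\sqrt{nC(\hcF,\mu)})^{-1})$; and (iii) the elementary comparison $\lambda+\sum_i(f-f')^2/(\sigma_i^h)^2 \ge (\lambda+\sum_i(f-f')^2)/\max_i(\sigma_i^h)^2$ whenever $\max_i\sigma_i^h \ge 1$.

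Combining these ingredients at the maximizers $f_*, f_*'$ of $\psi(z^h)$ gives
\$
\psi(z^h) \le \frac{\max_i(\sigma_i^h)^2}{(\sigma^h(z^h))^2}\, A^h(z^h) \le \frac{\alpha^2 \max_i(\sigma_i^h)^2}{\psi(z^h)}\, A^h(z^h),
\$
which rearranges to the pointwise bound $\psi(z^h) \le \alpha\max_i\sigma_i^h \sqrt{A^h(z^h)}$. Substituting (ii) yields either $\psi(z^h)\le A^h(z^h)$ (when all data weights equal $1$) or $\psi(z^h) \le (\rho_{\min}\sqrt{nC(\hcF,\mu)})^{-1}\sqrt{A^h(z^h)}$. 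Taking $\E_{\pi_*}$ and applying Jensen's inequality $\E\sqrt{A^h(z^h)} \le \sqrt{\E[A^h(z^h)]} \le \sqrt{\CC/n}$ then produces
\$
\CC^\sigma \le \max\!\Big(\CC,\; \frac{1}{\rho_{\min}}\sqrt{\CC/C(\hcF,\mu)}\Big),
\$
which matches the claimed $\tcO(\sqrt{\CC/C(\hcF,\mu)})$ bound after absorbing the $\rho_{\min}^{-1}$ factor into the hidden constants and using $\CC \lesssim 1/C(\hcF,\mu)$ (verifiable directly from well-exploredness in the linear setting, so that the first branch of the max is dominated by the second).

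The main obstacle is precisely the $\rho_{\min}^{-1}$ factor, which has no analogue in the non-shift bound $\CC^\sigma \le 1/C(\hcF,\mu)$. A clean proof likely requires either an a priori lower bound on $\rho_{\min}$ (absorbed into $\tcO$) or a refined step in (ii) that replaces the crude $\max_i\sigma_i^h$ bound with a $\rho$-sensitive averaging over data points that cancels worst-case $\rho_i$'s. A secondary subtlety is verifying that $\sum_i(f_*-f_*')^2$ dominates $\lambda$ when invoking (iii); under the Theorem~\ref{th:mdp_ds} choice $\lambda=\ln N$ and Assumption~\ref{as:Well-Explored Dataset}, this can be ensured after a trivial case split on $\|f_*-f_*'\|_\infty$ where the small-norm regime admits a direct bound on $\psi(z^h)$.
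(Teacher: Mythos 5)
Your proposal is correct and reaches the paper's bound, but it takes a genuinely different route through the two key estimates. Where you lower-bound the trajectory weight via the self-referential inequality $(\sigma^h(z^h))^2 \ge \psi(z^h)/\alpha^2$ (imported from the proof of Lemma \ref{lm:Connections between Weighted and Unweighted Coefficient}) and close with Jensen's inequality $\E\sqrt{A^h}\le\sqrt{\E A^h}$, the paper instead lower-bounds $(\sigma_*^h(z^h))^2$ by $|f_{z^h}(z^h)-f'_{z^h}(z^h)|/(2\alpha\beta^h)$, using that the maximizers lie in the confidence set so their weighted empirical distance is at most $2\beta^h$ --- this is the only place the hypothesis $\beta^h=C_\beta\sqrt{\ln N}$ enters --- pairs this with the cruder weight bound $(\sigma_i^h)^2\le 1/(\alpha\rho_{\min}\sqrt{\lambda})$, and closes with Cauchy--Schwarz after a case split on whether $\|f_{z^h}-f'_{z^h}\|_\infty>n^{-1}$. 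Your route is arguably cleaner: it never invokes $\beta^h$, avoids the additive $2C_\beta/(\lambda\rho_{\min})$ term and the $n^{-1}$ case split, and extracts the $C(\hcF,\mu)$ factor from your sharpened weight bound (ii) rather than from a separate Cauchy--Schwarz factor. Two of the worries you flag are moot. First, the $\rho_{\min}^{-1}$ factor is not an obstacle relative to the paper: the paper's own final display is $\tfrac{2C_\beta}{\lambda\rho_{\min}}+\tfrac{2C_\beta}{\rho_{\min}}\sqrt{2\CC(\lambda,\hcF,\cZ_n^H)/C(\hcF,\mu)}$, and it absorbs $\rho_{\min}^{-1}$ into the $\tcO$ exactly as you propose (both proofs also implicitly rely on the upper-bound half of the convergence guarantee for Algorithm \ref{alg:wi_ds}, which the appendix only states for the lower bound). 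Second, your step (iii) needs no domination of $\lambda$ by $\sum_i(f_*-f_*')^2$: since every $\sigma_i^h\ge 1$, the inequality $\lambda+X/M\ge(\lambda+X)/M$ holds unconditionally with $M=\max_i(\sigma_i^h)^2\ge 1$. One small cleanup: $\CC(\lambda,\hcF,\cZ_n^H)\le 1/C(\hcF,\mu)$ follows from Assumption \ref{as:Well-Explored Dataset} for general function classes (the unweighted integrand is at most $1/(nC(\hcF,\mu))$), so you need not restrict that check to the linear setting; with it, the first branch of your max is indeed dominated by the second.
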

\begin{proof}
We adopt the same approaches in the proof of Lemma \ref{lm:Connections between Weighted and Unweighted Coefficient}. For each $i\in[n],h\in[H]$, since the weight $(\sigma_i^h)^2$ yielded by Algorithm \ref{alg:wi_ds} is upper bounded by $1/(\alpha\sqrt{\lambda}\rho_{\min})$, we get
\#\label{eq:CC_bound1_ds}
\CC^{\sigma}(\lambda,\hcF,\cZ_n^H) &= \max_{h\in[H]} \sum_{z^h} d_{\pi_*}^h(z^h) \sup_{f,f'\in\hcF^h}\frac{n(f(z^h) - f'(z^h))^2/(\sigma_*^h(z^h))^2}{\lambda + n\sum_{\bar z^h} \mu(z^h)(f(\bar z^h) - f'(\bar z^h))^2/(\sigma^h(\bar z^h))^2}\notag\\
&\le \max_{h\in[H]} \sum_{z^h} d_{\pi_*}^h(z^h) \sup_{f,f'\in\hcF^h}\frac{(f(z^h) - f'(z^h))^2/(\sigma_*^h(z^h))^2}{\lambda/n + \alpha\sqrt{\lambda}\sum_{\bar z^h} \mu(\bar z^h)(f(\bar z^h) - f'(\bar z^h))^2}\notag\\
&\le \frac{1}{\alpha\rho_{\min}\sqrt{\lambda}}\max_{h\in[H]} \sum_{z^h} d_{\pi_*}^h(z^h) \sup_{f,f'\in\hcF^h}\frac{(f(z^h) - f'(z^h))^2/(\sigma_*^h(z^h))^2}{\lambda/n + \sum_{\bar z^h} \mu(\bar z^h)(f(\bar z^h) - f'(\bar z^h))^2},
\#
where the last inequality uses $\alpha\rho_{\min}\sqrt{\lambda}\le 1$. For any $z^h\sim d_{\pi_*}^h(z^h)$, take the $f_{z^h},f_{z^h}'\in\hcF^h$ that maximize the term:
\#\label{eq:f,f'max_term_ds}
\frac{(f_{z^h}(z^h) - f_{z^h}'(z^h))^2/(\sigma_*^h(z^h))^2}{\lambda/n + \sum_{\bar z^h} \mu(\bar z^h)(f_{z^h}(\bar z^h) - f_{z^h}'(\bar z^h))^2}.
\#
Then, \eqref{eq:CC_bound1_ds} is written as
\#\label{eq:CC_bound2_ds}
\CC^{\sigma}(\lambda,\hcF,\cZ_n^H) &\le \frac{1}{\alpha\rho_{\min}\sqrt{\lambda}}\max_{h\in[H]} \sum_{z^h} d_{\pi_*}^h(z^h) \frac{(f_{z^h}(z^h) - f_{z^h}'(z^h))^2/(\sigma_*^h(z^h))^2}{\lambda/n + \sum_{\bar z^h} \mu(\bar z^h)(f_{z^h}(\bar z^h) - f_{z^h}'(\bar z^h))^2}\notag\\
&= \frac{1}{\alpha\rho_{\min}\sqrt{\lambda}}\max_{h\in[H]} \sum_{z^h} d_{\pi_*}^h(z^h) \frac{(f_{z^h}(z^h) - f_{z^h}'(z^h))^2/(\sigma_*^h(z^h))^2}{\lambda/n + \sum_{\bar z^h} \mu(\bar z^h)(f_{z^h}(\bar z^h) - f_{z^h}'(\bar z^h))^2}.
\#
To bound the above term, we need to lower bound $(\sigma_*^h(z^h))^2$ in \eqref{eq:sigma^h(x^h,a^h)^2_ds} by
\$
\sup_{f,f'\in\hcF^h} \frac{|f(z^h) - f'(z^h)|/\alpha}{\sqrt{\lambda + \sum_{i=1}^n(f(z_i^h) - f'(z_i^h))^2/(\sigma_i^h)^2}} &\ge \frac{|f_{z^h}(z^h) - f_{z^h}'(z^h)|/\alpha}{\sqrt{\lambda + \sum_{i=1}^n(f_{z^h}(z_i^h) - f_{z^h}'(z_i^h))^2/(\sigma_i^h)^2}}\\
&\ge \frac{|f_{z^h}(z^h) - f_{z^h}'(z^h)|}{2\alpha\beta^h},
\$
where the last inequality uses $f_{z^h},f_{z^h}\in\hcF^h$:
\$
\lambda + \sum_{i=1}^n\frac{(f_{z^h}(z_i^h) - f_{z^h}'(z_i^h))^2}{(\sigma_i^h)^2} &= \lambda + \sum_{i=1}^n\frac{(f_{z^h}(z_i^h) - \hf(z_i^h) + \hf(z_i^h) - f_{z^h}'(z_i^h))^2}{(\sigma_i^h)^2}\\
&\le 2\Big[\lambda + \sum_{i=1}^n\frac{(f_{z^h}(z_i^h) - \hf(z_i^h))^2}{(\sigma_i^h)^2} + \lambda + \sum_{i=1}^n\frac{\hf(z_i^h) - f_{z^h}'(z_i^h))^2}{(\sigma_i^h)^2}\Big] \le 4(\beta^h)^2.
\$
Thus, substituting this lower bound into \eqref{eq:CC_bound2_ds} and taking $\beta^h=C_{\beta}\sqrt{\ln N}=C_{\beta}\sqrt{\lambda}$, we get
\$
\CC^{\sigma}(\lambda,\hcF,\cZ_n^H) &\le \frac{2\alpha\beta^h}{\alpha\rho_{\min}\sqrt{\lambda}}\max_{h\in[H]} \sum_{z^h} d_{\pi_*}^h(z^h) \frac{|f_{z^h}(z^h) - f_{z^h}'(z^h)|}{\lambda/n + \sum_{\bar z^h} \mu^h(\bar z^h)(f_{z^h}(\bar z^h) - f_{z^h}'(\bar z^h))^2}\notag\\
&= \frac{2C_{\beta}}{\rho_{\min}} \max_{h\in[H]} \sum_{z^h} d_{\pi_*}^h(z^h) \frac{|f_{z^h}(z^h) - f_{z^h}'(z^h)| \cdot \mathbbm 1\big(\|f_{z^h}-f_{z^h}'\|_{\infty}\le n^{-1}\big)}{\lambda/n + \sum_{\bar z^h} \mu^h(\bar z^h)(f_{z^h}(\bar z^h) - f_{z^h}'(\bar z^h))^2}\notag\\
&\qquad + \frac{2C_{\beta}}{\rho_{\min}} \max_{h\in[H]} \sum_{z^h} d_{\pi_*}^h(z^h) \frac{|f_{z^h}(z^h) - f_{z^h}'(z^h)|\cdot \mathbbm 1\big(\|f_{z^h}-f_{z^h}'\|_{\infty}> n^{-1}\big)}{\lambda/n + \sum_{\bar z^h} \mu^h(\bar z^h)(f_{z^h}(\bar z^h) - f_{z^h}'(\bar z^h))^2}\\
&\le \frac{2C_{\beta}}{\lambda\rho_{\min}} + \frac{2C_{\beta}}{\rho_{\min}} \max_{h\in[H]} \sum_{z^h} d_{\pi_*}^h(z^h) \frac{|f_{z^h}(z^h) - f_{z^h}'(z^h)|\cdot \mathbbm 1\big(\|f_{z^h}-f_{z^h}'\|_{\infty}> n^{-1}\big)}{\lambda/n + \sum_{\bar z^h} \mu^h(\bar z^h)(f_{z^h}(\bar z^h) - f_{z^h}'(\bar z^h))^2}.
\$
Then, we can invoke the Cauchy-Schwarz inequality to split the above term  into two parts:
\#\label{eq:CC_bound4_ds}
\CC^{\sigma}(\lambda,\hcF,\cZ_n^H) &\le \frac{2C_{\beta}}{\lambda\rho_{\min}} + \frac{2C_{\beta}}{\rho_{\min}}\max_{h\in[H]} \sum_{z^h} \frac{\sqrt{d_{\pi_*}^h(z^h)}}{\sqrt{\lambda/n + \sum_{\bar z^h} \mu^h(\bar z^h)(f_{z^h}(\bar z^h) - f_{z^h}'(\bar z^h))^2}}\notag\\
&\qquad\cdot \frac{\sqrt{d_{\pi_*}^h(z^h)}|f_{z^h}(z^h) - f_{z^h}'(z^h)|}{\sqrt{\lambda/n + \sum_{\bar z^h} \mu^h(\bar z^h)(f_{z^h}(\bar z^h) - f_{z^h}'(\bar z^h))^2}} \cdot \mathbbm 1\big(\|f_{z^h}-f_{z^h}'\|_{\infty}> n^{-1}\big)\notag\\
&\le \frac{2C_{\beta}}{\lambda\rho_{\min}} + \frac{2C_{\beta}}{\rho_{\min}}\max_{h\in[H]} \Big(\underbrace{\sum_{z^h} \frac{d_{\pi_*}^h(z^h)\cdot \mathbbm 1\big(\|f_{z^h}-f_{z^h}'\|_{\infty}> n^{-1}\big)}{\lambda/n + \sum_{\bar z^h} \mu^h(\bar z^h)(f_{z^h}(\bar z^h) - f_{z^h}'(\bar z^h))^2}}_{I_1}\Big)^{1/2} \notag\\
&\qquad\cdot \Big(\underbrace{\max_{h\in[H]}\sum_{z^h} d_{\pi_*}^h(z^h)\frac{(f_{z^h}(z^h) - f_{z^h}'(z^h))^2}{\lambda/n + \sum_{\bar z^h} \mu^h(\bar z^h)(f_{z^h}(\bar z^h) - f_{z^h}'(\bar z^h))^2}}_{I_2}\Big)^{1/2},
\#
Then, the terms $I_1$ and $I_2$ can be handled in the same way as the proof of Lemma \ref{lm:Connections between Weighted and Unweighted Coefficient}. For the term $I_1$, we get
\$
I_1 \le \sum_{z^h} \frac{d_{\pi_*}^h(z^h)}{C(\hcF,\mu)/2} = \frac{2}{C(\hcF,\mu)}.
\$
For the term $I_2$, we have
\$
I_2 &= \max_{h\in[H]} \sum_{z^h} d_{\pi_*}^h(z^h) \sup_{f,f'\in\hcF^h}\frac{n(f(z^h) - f'(z^h))^2}{\lambda + \sum_{i=1}^n (f(z_i^h) - f'(z_i^h))^2} = \CC(\lambda,\hcF,\cZ_n^H).
\$
Therefore, by taking the bound of the terms $I_1,I_2$ into \eqref{eq:CC_bound4_ds}, we have
\$
\CC^{\sigma}(\lambda,\hcF,\cZ_n^H) = \frac{2C_{\beta}}{\lambda\rho_{\min}} + \frac{2C_{\beta}}{\rho_{\min}}\sqrt{\frac{2\CC(\lambda,\hcF,\cZ_n^H)}{C(\hcF,\mu)}} = \tcO\Big(\sqrt{\frac{\CC(\lambda,\hcF,\cZ_n^H)}{C(\hcF,\mu)}}\Big),
\$
which concludes the proof.
\end{proof}

\subsubsection{The suboptimality bound.}
Having the guarantee for the confidence radius and the connection between weighted and unweighted coverage coefficient, we can follow the three steps in the proof of Theorem \ref{th:mdp} to bound the suboptimality.
\begin{proof}[Proof of Theorem \ref{th:mdp_ds}]
Since when $\sigma_*^h(x^h,a^h)>1$, we have
\$
\sigma_*^h(x^h,a^h)&=\frac{1}{\sigma_*^h(x^h,a^h)}\cdot\sup_{f,f'\in\hcF^h} \frac{|f(x^h,a^h) - f'(x^h,a^h)|/\alpha}{\sqrt{\lambda + \sum_{i=1}^n(f(x_i^h,a_i^h) - f'(x_i^h,a_i^h))^2/(\sigma_i^h)^2}}= \frac{b^h(x^h,a^h)}{\alpha\sigma_*^h(x^h,a^h)},
\$
from which it follows that
\$
&\sum_{h=1}^H\beta^h\E_{\tilde{\pi}_*}\big[ b^h(x^h,a^h)\,\big|\, x^1=x\big]\\
&\qquad \le \sum_{h=1}^H \beta^h \E_{\tilde{\pi}_*} \left[ \frac{b^h(x^h,a^h)}{\sigma_*^h(x^h,a^h)} + \left(\frac{b^h(x^h,a^h)}{\sigma_*^h(x^h,a^h)}\right)^2 \cdot \frac{1}{\alpha} \,\bigg|\, x^1=x \right]\\
&\qquad \le \sum_{h=1}^H \left[\beta^h \cdot \sqrt{\frac{\CC^{\sigma}(\lambda,\hcF,\cZ_n^H)}{n}} + \frac{\beta^h}{\alpha} \cdot \frac{\CC^{\sigma}(\lambda,\hcF,\cZ_n^H)}{n}\right],
\$
where the last inequality uses $\E X\le \sqrt{\E X^2}$ and
\$
\E_{\pi_*} \bigg[\Big(\frac{b^h(x^h,a^h)}{\sigma_*^h(x^h,a^h)}\Big)^2 \,\bigg|\, x^1=x \bigg] = \frac{\CC^{\sigma}(\lambda,\hcF,\cZ_n^H)}{n}.
\$
We further get with probability at least $1-\delta$,
\$
&\sqrt{\frac{\CC^{\sigma}(\lambda,\hcF,\cZ_n^H)}{n}} \sum_{h=1}^H \beta^h + \frac{\CC^{\sigma}(\lambda,\hcF,\cZ_n^H)}{n}\cdot\frac{\sum_{h=1}^H\beta^h}{\alpha}\\
&\qquad \le \sqrt{\frac{\CC^{\sigma}(\lambda,\hcF,\cZ_n^H)}{n}} \cdot c_{\beta}\big(\alpha\zeta  + H\sqrt{\ln N}\big) + \frac{\CC^{\sigma}(\lambda,\hcF,\cZ_n^H)}{n}\cdot c_{\beta}\Big(\zeta  + \frac{H\sqrt{\ln N}}{\alpha}\Big)\\
&\qquad = \tcO\bigg(\frac{3H\sqrt{\CC^{\sigma}(\lambda,\hcF,\cZ_n^H)\cdot\ln N}}{\sqrt{n}} + \frac{3\zeta\cdot\CC^{\sigma}(\lambda,\hcF,\cZ_n^H)}{n}\bigg)
\$
where we choose $\alpha=H\sqrt{\ln N}/\zeta$. 
By invoking Lemma \ref{lm:Connections between Weighted and Unweighted Coefficient_ds}, we obtain that with probability at least $1-2\delta$,
\$
\subopt(\hat\pi, x) = \tcO\bigg(\frac{H(\CC(\lambda,\hcF,\cZ_n^H))^{1/4}\cdot(\ln N_n(\gamma))^{1/2}}{n^{1/2}(C(\hcF,\mu))^{1/4}} + \frac{\zeta(\CC(\lambda,\hcF,\cZ_n^H))^{1/2}}{n(C(\hcF,\mu))^{1/2}}\bigg).
\$

Ultimately, we conclude the proof.
\end{proof}

\section{Implementation Details}
\label{ap:implement_detail}

\paragraph{Implementation of UWMSG.} Following prior uncertainty-based offline RL algorithm Model Standard-deviation Gradients (MSG) \cite{ghasemipour2022so}, we learn a group of $Q$ networks $Q_{w_i}, i=1,\ldots,K$ with independent targets and optimize a policy $\pi_{\theta}$ with a lower-confidence bound (LCB) objective \cite{ghasemipour2022so,bai2022pessimistic}. Specifically, $Q_{w_i}$ is learned to minimize the following weighted regression objective with samples from the offline dataset $\mathcal{D}$:
\begin{equation}
\begin{aligned}
\min_{w_i}  \mathbb{E}_{(x,a,r,x')\sim \mathcal{D}} \Big[ & \frac{\big(\widehat{\mathcal{T}} Q_{w_i}(x,a) - Q_{w_i}(x,a)\big)^2}{\sigma(x,a)^2} \Big].
\label{eq:Q_obj}
\end{aligned}
\end{equation}

The weight function $\sigma$ is estimated via bootstrapped uncertainty:
$
    % \sigma(x,a) = \max(\sqrt{\mathbb{V}_{i=1,\ldots,K}\left[Q_{w_i}(x,a)\right]}, 1)
    \sigma(x,a)=\text{clip}( \mu \times  \sqrt{\mathbb{V}_{i=1,\ldots,K}\left[Q_{w_i}(x,a)\right]}, 1, M)
$, where $\mathbb{V}_{i=1,\ldots,K}\left[Q_{w_i}\right]$ refers to the variance between the group of $Q$ functions, and $M$ is used to control the maximum value of the weighting function. Note that $\sigma(x,a)$ is detached from the gradients, and the update is exclusively on $w_i$. We introduce the uncertainty ratio $\mu$ for $\sigma(x,a)$ to tune the weight function.
The independent target $\widehat{\mathcal{T}}Q_{w_i}$ for $Q_{w_i}$ is defined as follows:
\begin{equation}
\label{eq:q_target}
    \widehat{\mathcal{T}}Q_{w_i}(x, a):=r(x,a)+\gamma {\mathbb{E}}_{a'\sim \pi_\theta(\cdot|x')}\big[ Q_{w'_i}(x',a')\big],
\end{equation}
where $Q_{w'_i}$ is the target network for $Q_{w_i}$. In empirical offline RL, it is a common practice \cite{an2021uncertainty,bai2022pessimistic,ghasemipour2022so} to utilize the discounted form of the Q function rather than the episodic version. The policy $\pi_{\theta}$ optimizes the same pessimistic objective as MSG:
\begin{equation}
\label{eq:policy_obj}
\min_{\theta} \mathbb{E}_{x \sim \mathcal{D}, a \sim \pi_{\theta}(\cdot|x)}  \Big[\mathbb{E}_{i=1,\ldots,K} \left[Q_{w_i}(x,a)\right] - \beta \cdot \sqrt{\mathbb{V}_{i=1,\ldots,K} \left[Q_{w_i}(x,a) \right]} \Big],
\end{equation}
Although the weighting function $\sigma(x,a)$ shares some similarities with the pessimistic bonus, they differ in the following aspects: (1) \textbf{ $\sigma(x,a)$ measures the intrinsic variance of the corrupted data, while the pessimistic bonus penalizes out-of-distribution (OOD) actions produced by $\pi_{\theta}$; (2) $\sigma(x,a)$ weights the $Q$ learning objective and is detached from gradients, whereas the pessimistic bonus requires gradients for $\pi_{\theta}$}.

\paragraph{Training and Evaluation Details.} We use 3-layer MLPs with 256 neurons in each layer for both $Q$ and policy networks. The ensemble size is set to $K=10$ for all the experiments. The hyperparameters, such as learning rate and optimizer, are listed in Table \ref{tab:hyper-table}. We train each algorithm for 3000 epochs, where one epoch contains 1000 updates. Regarding the offline datasets, we use `halfcheetah-medium-v2', `walker2d-medium-replay-v2', and `hopper-medium-replay-v2' datasets and refer to them as `halfcheetah', `walker2d', and `hopper' in our paper. Our implementation is based on SAC-N \cite{an2021uncertainty}. Therefore, there are additional entropy regularization terms for both $\widehat{\mathcal{T}}Q_{w_i}(s, a)$ in Eq \eqref{eq:q_target} and the policy objective in Eq \eqref{eq:policy_obj}. For hyperparameters related to corruption and uncertainty weighting, we list them in Table \ref{tab:hyper-attack}. Since tasks vary in their ability to resist corruption, their hyperparameters are tuned separately. We use the same LCB ratio $\beta$ for MSG and UWMSG, which is searched within $\{4.0,6.0\}$. The uncertainty ratio $\mu$ for UWMSG is search from $\{0.2,0.3,0.5,0.7,1.0\}$.

To evaluate algorithms, we run the deterministic policy of each agent for 1000 steps and report their average cumulative returns with standard deviations over $10$ random seeds. Our code is based on \cite{tarasov2022corl} and is available at \href{https://github.com/YangRui2015/UWMSG}{https://github.com/YangRui2015/UWMSG}.

\paragraph{Data Corruption Details.}
We implement both random and adversarial corruption on either rewards or dynamics. The four types of data corruption are listed below:
\begin{itemize}[leftmargin=*]
    \item Random reward attack: randomly sample $c\%$ transitions $(x,a,r,x')$ from $D$, and modify the reward $\hat r  \sim \text{Uniform}[-\epsilon, \epsilon]$, where $c$ is the corruption rate and $\epsilon$ is the corruption scale.
    \item  Random dynamics attack: randomly sample $c\%$ transitions $(x,a,r,x')$, and modify the next-step state $\hat x' = x'+\delta \cdot \text{std} , \delta \sim \text{Uniform}[-\epsilon,\epsilon]^{d}$, where $d$ is dimension of states and $\text{std}$ is the $d$-dimensional standard deviation of all states in the offline dataset.
    \item  Adversarial reward attack: randomly sample $c\%$ transitions $(x,a,r,x')$, and modify the reward as: $\hat r=- \epsilon \times r$.
    \item Adversarial dynamics attack: pretrain a group of $Q_{p}$ functions and a policy function $\pi_{p}$, then randomly sample $c\%$ transitions $(x,a,r,x')$, and modify the next-step states $\hat x'=\min_{\hat x'\in \mathbb{B}_d(x',\epsilon)} Q_{p}(x', \pi_{p}(\hat x'))$, where $\mathbb{B}_d(x',\epsilon)=\{|\hat x' - x'| \leq \epsilon \cdot \text{std} \}$ regularizes the maximum difference for each state dimension. The optimization is implemented through gradient descent similar to prior works \cite{zhang2020robust,yangrorl}. 
\end{itemize}

For the implementation of an adversarial dynamics attack, the optimization is performed through 10-step gradient descent with learning rate $\frac{\epsilon}{10}$. After each gradient descent step, the states are clipped within $\mathbb{B}_d(x',\epsilon)$. The pretraining algorithm used is MSG for the halfcheetah and walker2d tasks, while EDAC is employed for the hopper task due to its significantly better performance on this task compared to MSG in the absence of corruption. Finally, the corrupted data is saved and will be loaded for future training. To control the cumulative corruption $\zeta$ under continuous state-action spaces, we incorporate random or adversarial noise with predefined corruption ranges and corruption scales into the rewards and next-step states. This is because the fact that $\zeta_i \leq |r-r_{\mathcal{D}}| + D_{TV}(P\|P_{\mathcal{D}}) \leq |r-r_{\mathcal{D}}| + \sqrt{\frac{1}{2} D_{KL}(P\|P_{\mathcal{D}}})$. When we consider $P$ and $P_{\mathcal{D}}$ are both Diagonal Gaussian distributions with the same constant variance, $\zeta_i \leq  |r-r_{\mathcal{D}}| + \sqrt{\frac{1}{2}} \|\mu - \mu_{\mathcal{D}}\| + \text{const} $. When we corrupt only one element in rewards and dynamics, the empirical cumulative corruption can be approximated as the multiplication of the number of corrupted samples and the corruption scale: $\zeta=|D| \times c\% \times \epsilon$, where $|D|$ represents the size of the dataset, and $\epsilon$ represents the corruption scale. Note that this approximation may not hold for our reward corruption, but we use the same calculation for simplicity. 

 \begin{table}[h!]
\small
  \caption{Hyper-parameters for UWMSG and MSG.}
  \vspace{3pt}
  \label{tab:hyper-table}
  \centering
  \begin{tabular}{p{0.55\columnwidth}p{0.4\columnwidth}}
    \toprule
    Hyper-parameters & Value\\
    \midrule
    Ensemble size $K$          & 10  \\
    Policy network  & FC(256,256,256) with ReLU \\
    $Q$-network  & FC(256,256,256) with ReLU \\
    LCB ratio $\beta$  & $\{4.0, 6.0\}$ \\
    Uncertainty ratio $\mu$ & $\{0.2,0.3,0.5,0.7,1.0\}$ \\
    Maximum value of uncertainty weight $M$ & 10 \\
    Target network smoothing coefficient $\tau$ & 5e-3 \\
    Discount factor $\gamma$ & 0.99 \\
    Policy learning rate & 3e-4 \\
    $Q$ network learning rate & 3e-4 \\
	Optimizer & Adam  \\
	Automatic Entropy Tuning & True \\
	batch size & 256 \\
     \bottomrule
  \end{tabular}
\end{table}

\begin{table}[h!]
\small
  \caption{Data corruption settings and the hyperparameters used for uncertainty-weighting in UWMSG.}
  \vspace{3pt}
  \label{tab:hyper-attack}
  \centering
  \begin{adjustbox}{width=1\columnwidth}
  \begin{tabular}{l|l|c|c|c|c|c|c}
    \toprule
   Attack type & Attack object &  Environment  & Corruption rate $c\%$ & Corruption scale $\epsilon$ & Cumulative corruption $\zeta$ & LCB ratio $\beta$ & Uncertainty ratio $\mu$ \\
    \midrule
   \multirow{6}{*}{Random}  & \multirow{3}{*}{Reward}      & halfcheetah &  20$\%$ & 30.0 & 5.99$\times 10^6$ & 4.0 & 0.7 \\
    &  &  walker2d &  30$\%$ & 30.0 & 2.72$\times 10^6$ & 4.0 & 0.3 \\
    &  &  hopper & 20$\%$ & 30.0 &  2.41$\times 10^6$ & 6.0 & 0.7 \\
    \cline{2-8}
    & \multirow{3}{*}{Dynamics} & halfcheetah & 20$\%$  & 2.0 & 4.00 $\times 10^5$ & 4.0 & 0.5 \\
    &  &  walker2d & 10$\%$ & 0.5 & 1.51$\times 10^4$ & 6.0 & 0.5 \\
    & & hopper & 10$\%$ & 0.5 & 2.01$\times 10^4$ & 6.0  & 0.7  \\
   \hline
 \multirow{6}{*}{Adversarial}    & \multirow{3}{*}{Reward}      & halfcheetah & 20$\%$  & 3.0 & 5.99 $\times 10^5$ & 4.0 & 0.7 \\
    &  &  walker2d & 20$\%$  & 3.0 & 1.81 $\times 10^5$ & 4.0 &  0.5\\
    & & hopper & 10$\%$ & 5.0 &  2.01$\times 10^5$ & 6.0 & 0.7 \\
    \cline{2-8}
     & \multirow{3}{*}{Dynamics}      & halfcheetah & 30$\%$ & 1.2 & 3.60 $\times 10^5$ & 4.0 & 0.2  \\
    &  &  walker2d & 10$\%$ & 0.3 & 9.05 $\times 10^3$ & 4.0 & 0.5 \\
    & & hopper & 10$\%$ & 0.5 & 2.01$\times 10^4$ & 6.0 & 1.0 \\
     \bottomrule
  \end{tabular}
  \end{adjustbox}
\end{table}

\section{Comparison with Uncertainty-weighted Actor Critic (UWAC)}
Our practical implementation algorithm, UWMSG, shares some similarities with UWAC in terms of utilizing uncertainty weighting technique for offline RL. However, there are three key differences between our approaches: 
\begin{itemize}
    \item [1.] We focus on offline RL with data corruption, rather than the general offline RL setting explored by UWAC. Therefore, in our setting, the uncertainty arises from both corrupted datasets and OOD actions.
    \item [2.] While UWAC penalizes OOD actions in the Q objective through minimizing $\mathbb{E}_{(x,a,r,x')\sim \mathcal{D}, a'\sim \pi_{\theta}(\cdot|x')} \Big[ \frac{\big(\widehat{\mathcal{T}} Q(x,a) - Q(x,a)\big)^2}{Var[Q(x',a')]} \Big]$, with the aim of reducing the importance of OOD actions, our uncertainty weighting focuses on penalizing in-dataset $(x,a)$ pairs.
    \item [3.] Another distinction lies in the uncertainty estimation methods employed. UWAC uses dropout uncertainty, while we utilize bootstrapped uncertainty in our work. However, it is worth noting that our approach is not limited to a specific type of uncertainty estimation. In the future, more advanced uncertainty estimation methods (e.g., \citep{sun2022daux,deng2023uncertainty}) can be applied to potentially enhance the performance of UWMSG.
\end{itemize}

\section{Additional Results}
\label{ap:additionl_exp}
\paragraph{Learning Curves}All learning curves are shown in Figure \ref{fig:experiments_halfcheetah}, Figure \ref{fig:experiments_walker2d}, and Figure \ref{fig:experiments_hopper}. We can find that (1) current offline RL methods are susceptible to data corruption, e.g., MSG, EDAC, SAC-N achieve poor and unstable performance under adversarial attacks, and (2) our proposed UWMSG method significantly improves performance under different data corruption scenarios. Moreover, we posit that the reason for the observed initial increase and subsequent significant decrease of EDAC and SAC-N performance in some cases may be attributed to the characteristics of Temporal Difference (TD) learning. Specifically, the effect of corruption needs to accumulate over time, which may necessitate an extended training period to destroy the performance. In contrast, our algorithm UWMSG does not suffer from this problem and exhibits stable performance.

\begin{figure}[ht]
    \centering
    \subfigure[]{\includegraphics[width=0.255\linewidth, trim=0 0 3 0, clip]{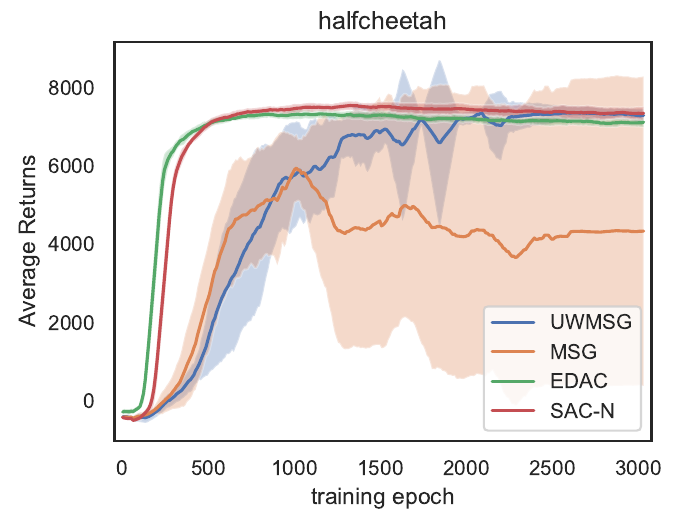}}
    \subfigure[]{\includegraphics[width=0.24\linewidth, trim=20 0 3 0, clip]{figs/halfcheetah_dynamics.pdf}}
     \subfigure[]{\includegraphics[width=0.24\linewidth, trim=20 0 3 0, clip]{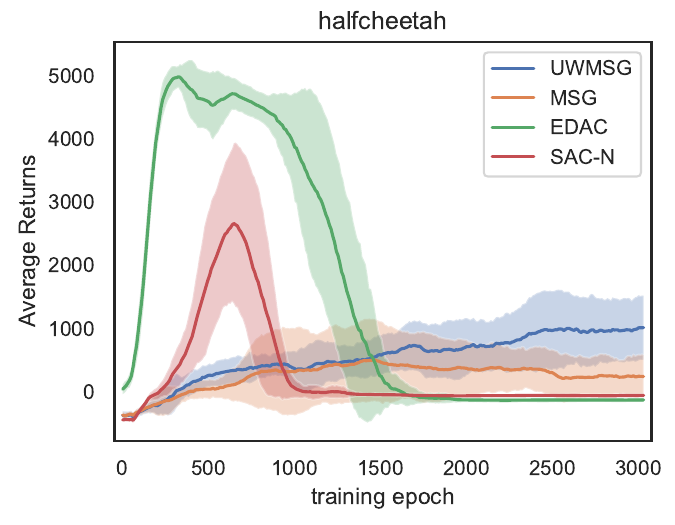}}
     \subfigure[]{\includegraphics[width=0.24\linewidth, trim=20 0 3 0, clip]{figs/halfcheetah_dynamics_adv.pdf}}
    \caption{Comparison on the halfcheetah task under (a) random reward, (b) random dynamics, (c) adversarial reward, and (d) adversarial dynamics attacks.}
    \label{fig:experiments_halfcheetah}
\end{figure}

\begin{figure}[ht]
    \centering
    \subfigure[]{\includegraphics[width=0.255\linewidth, trim=0 0 3 0, clip]{figs/walker2d_reward.pdf}}
    \subfigure[]{\includegraphics[width=0.24\linewidth, trim=20 0 3 0, clip]{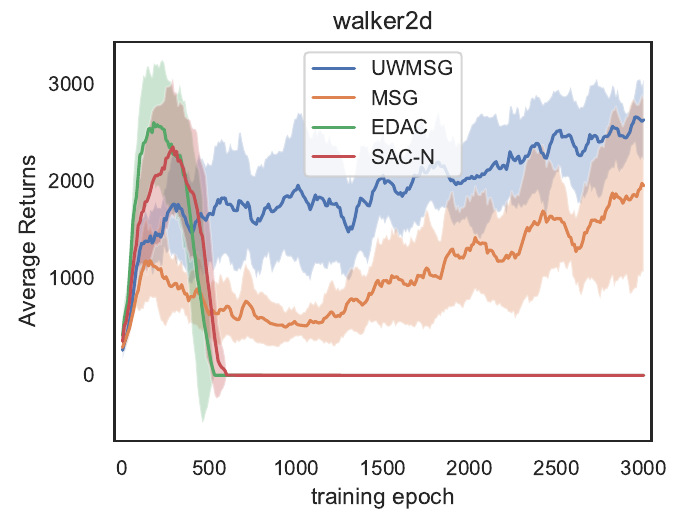}}
     \subfigure[]{\includegraphics[width=0.24\linewidth, trim=20 0 3 0, clip]{figs/walker2d_reward_adv.pdf}}
     \subfigure[]{\includegraphics[width=0.24\linewidth, trim=20 0 3 0, clip]{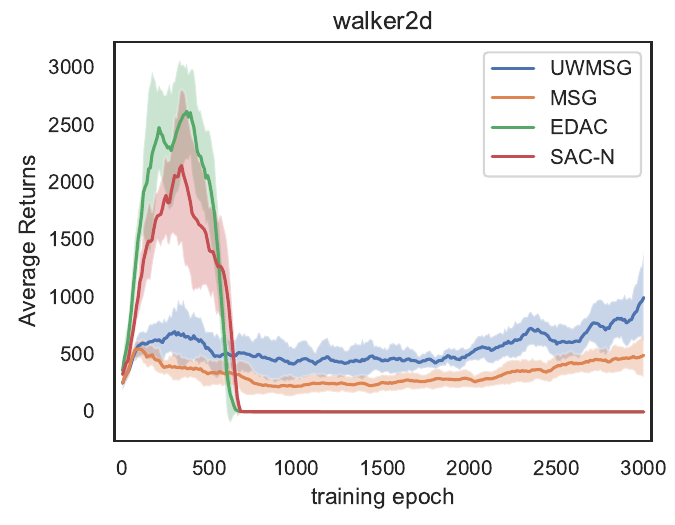}}
    \caption{Comparison on the walker2d task under (a) random reward, (b) random dynamics, (c) adversarial reward, and (d) adversarial dynamics attacks.}
    \label{fig:experiments_walker2d}
\end{figure}

\begin{figure}[ht]
    \centering
    \subfigure[]{\includegraphics[width=0.255\linewidth, trim=0 0 3 0, clip]{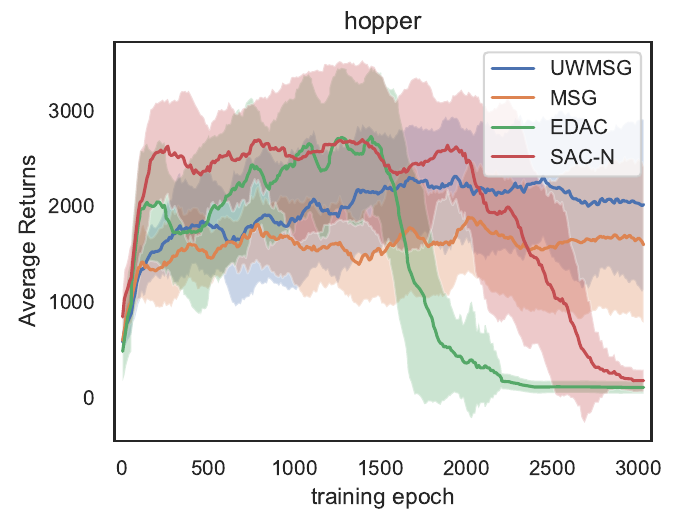}}
    \subfigure[]{\includegraphics[width=0.24\linewidth, trim=20 0 3 0, clip]{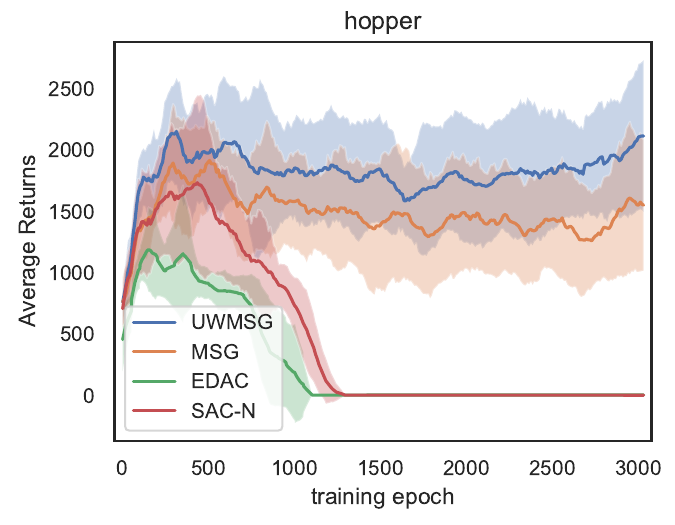}}
     \subfigure[]{\includegraphics[width=0.24\linewidth, trim=20 0 3 0, clip]{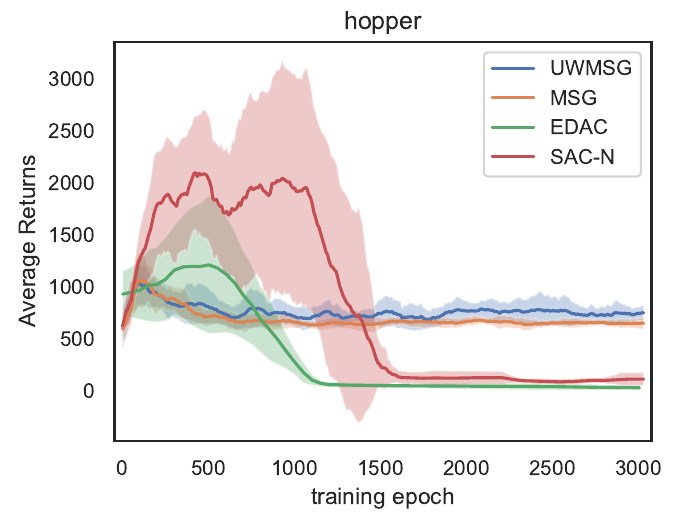}}
     \subfigure[]{\includegraphics[width=0.24\linewidth, trim=20 0 3 0, clip]{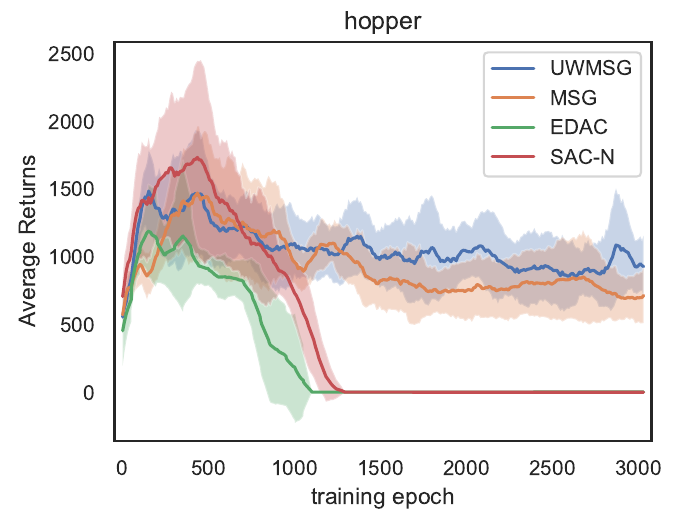}}
    \caption{Comparison on the hopper task under (a) random reward, (b) random dynamics, (c) adversarial reward, and (d) adversarial dynamics attacks.}
    \label{fig:experiments_hopper}
\end{figure}

\paragraph{Varying Corruption Level} We evaluate the performance of UWMSG under varying levels of corruption in Figure \ref{fig:corruption_level}. This is achieved by maintaining a consistent corruption scale in Table \ref{tab:hyper-attack} while adjusting the corruption rate. As depicted in the figure, as the cumulative corruption level rises, the overall performance of UWMSG progressively declines. These findings align with our theoretical analysis. Besides, the results indicate that dynamics corruption poses a greater challenge compared to reward corruption, leading to a larger drop in performance with a smaller corruption level.

\begin{figure}
    \centering
    \includegraphics[width=0.7\linewidth]{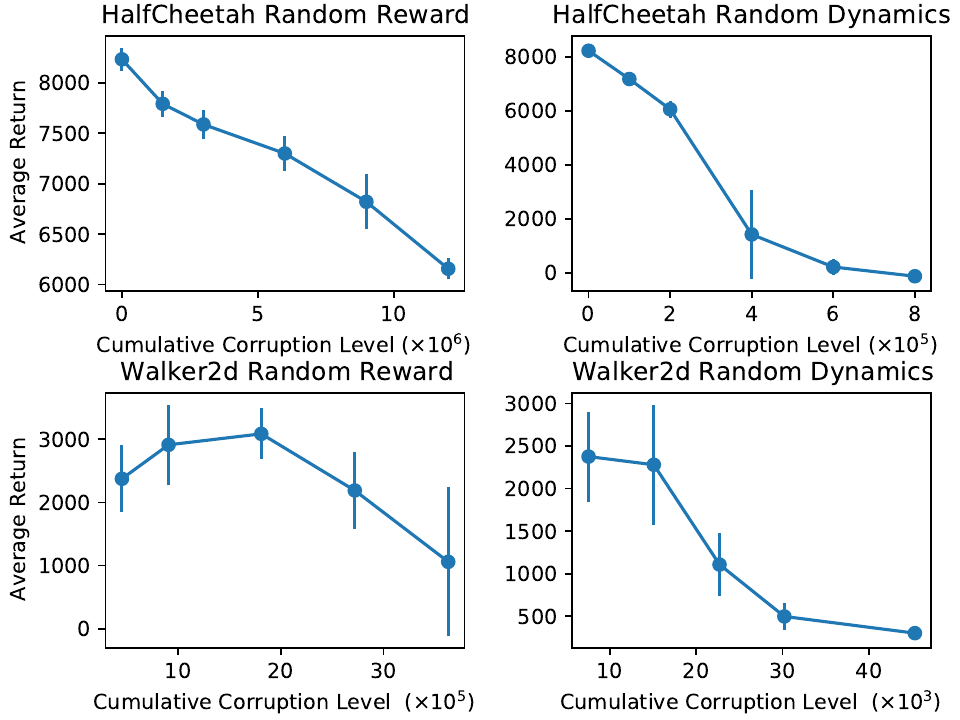}
    \caption{Performance of UWMSG under varying levels of corruption. Results are averaged over 5 random seeds.}
    \label{fig:corruption_level}
\end{figure}

\section{Technical Lemmas}
\begin{lemma}[Azuma–Hoeffding inequality, \citealt{cesa2006prediction}]\label{lemma:azuma}
Let $\{x_i\}_{i=1}^n$ be a martingale difference sequence with respect to a filtration $\{\cG_{i}\}$ satisfying $|x_i| \leq M$ for some constant $M$, $x_i$ is $\cG_{i+1}$-measurable, $\E[x_i|\cG_i] = 0$. Then for any $0<\delta<1$, with probability at least $1-\delta$, we have 
\begin{align}
    \sum_{i=1}^n x_i\leq M\sqrt{2n \log (1/\delta)}.\notag
\end{align} 
\end{lemma}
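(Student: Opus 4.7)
The plan is to apply the Chernoff method combined with Hoeffding's lemma conditioned on the filtration. Writing $S_n = \sum_{i=1}^n x_i$, for any $\lambda > 0$ the exponential Markov inequality gives
\[
\Pb(S_n \ge t) \le e^{-\lambda t} \E[e^{\lambda S_n}].
\]
So the whole job is to bound the moment generating function $\E[e^{\lambda S_n}]$ uniformly in the (unknown) joint law of the $x_i$'s, and then optimize $\lambda$ to balance the two factors.

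To bound the MGF, I would peel off one term at a time using the tower property: since $S_n = S_{n-1} + x_n$ and $S_{n-1}$ is $\cG_n$-measurable,
\[
\E[e^{\lambda S_n}] = \E\bigl[e^{\lambda S_{n-1}} \cdot \E[e^{\lambda x_n} \mid \cG_n]\bigr].
\]
The key intermediate step is the conditional version of Hoeffding's lemma: since $|x_n| \le M$ and $\E[x_n \mid \cG_n] = 0$, one has $\E[e^{\lambda x_n} \mid \cG_n] \le e^{\lambda^2 M^2/2}$ almost surely. This inequality follows from writing $x_n$ as a convex combination of $-M$ and $M$ (pointwise on the probability space), using convexity of $u \mapsto e^{\lambda u}$ to linearize, and then bounding the resulting logarithm of the MGF by its second-order Taylor expansion (equivalently, a standard computation on $\psi(\lambda) = \log\E[e^{\lambda x_n}\mid\cG_n]$ using $\psi(0)=\psi'(0)=0$ and $\psi''\le M^2$). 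Iterating this bound $n$ times yields $\E[e^{\lambda S_n}] \le e^{n\lambda^2 M^2/2}$.

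Combining with the Chernoff step gives $\Pb(S_n \ge t) \le \exp\bigl(-\lambda t + n\lambda^2 M^2/2\bigr)$. Optimizing over $\lambda > 0$, the minimizer is $\lambda^* = t/(nM^2)$, which yields $\Pb(S_n \ge t) \le \exp\bigl(-t^2/(2nM^2)\bigr)$. Setting the right-hand side equal to $\delta$ and inverting for $t$ produces the threshold $t = M\sqrt{2n\log(1/\delta)}$, which is exactly the stated bound.

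The only non-routine ingredient is the conditional Hoeffding lemma, and even that is a classical one-variable computation; everything else (Chernoff, tower property, optimizing a quadratic in $\lambda$) is mechanical. There is no real obstacle here beyond being careful that Hoeffding's lemma is applied conditionally on $\cG_n$ rather than unconditionally, so that the martingale-difference structure (rather than independence) is what drives the proof.
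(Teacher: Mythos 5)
Your proof is correct: the Chernoff bound, the conditional Hoeffding lemma applied with respect to $\cG_n$ (correctly using that $S_{n-1}$ is $\cG_n$-measurable under the paper's indexing convention), the iterated MGF bound $\E[e^{\lambda S_n}]\le e^{n\lambda^2M^2/2}$, and the optimization $\lambda^*=t/(nM^2)$ all check out and yield exactly the stated threshold. The paper does not prove this lemma itself — it is quoted as a standard result with a citation — and your argument is precisely the classical proof of Azuma–Hoeffding, so there is nothing to reconcile.
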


\begin{lemma}[Lemma 3.1 of \citet{jin2021pessimism}]\label{lm:Suboptimality Decomposition}
Let $\hat\pi=\{\hat\pi^h\}_{h=1}^H$ be the greedy policy such that for any $x$, $\hat\pi^h(x)=\argmax_{a\in\cA}f_n^h(x,a)$. For any initial state $x\in\cX$,
\$
\subopt(\hat\pi,x) &= \sum_{h=1}^H\E_{\pi_*}\big[f_n^h(x^h,\pi_*(x^h)) - f_n^h(x^h,\hat{\pi}(x^h)) \,\big|\, x^1=x\big]\notag\\
&\qquad - \sum_{h=1}^H\E_{\pi_*}\big[ \cE^h(f_n,x^h,a^h)\,\big|\, x^1=x\big] + \sum_{h=1}^H\E_{\hat{\pi}}\big[ \cE^h(f_n,x^h,a^h)\,\big|\, x^1=x\big],
\$
where $\cE^h(f,x^h,a^h)=f^h(x^h,a^h)-(\cT^h f^{h+1})(x^h,a^h)$ is the Bellman residual.
\end{lemma}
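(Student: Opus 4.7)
\textbf{Proof proposal for Lemma \ref{lm:Suboptimality Decomposition}.}
My plan is to introduce the proxy value function induced by the pessimistic estimates,
$V_{f_n}^h(x):=\max_{a\in\cA}f_n^h(x,a)=f_n^h(x,\hat\pi^h(x))$,
and write the suboptimality as
$\subopt(\hat\pi,x)=\bigl[V_*^1(x)-V_{f_n}^1(x)\bigr]-\bigl[V_{\hat\pi}^1(x)-V_{f_n}^1(x)\bigr]$.
Each of the two bracketed differences will be expanded by a single telescoping identity, and the stated three-term decomposition will drop out by subtraction.

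The core step is a per-step identity for an arbitrary policy $\pi$. Starting from
$V_\pi^h(x)-V_{f_n}^h(x)=Q_\pi^h(x,\pi^h(x))-f_n^h(x,\hat\pi^h(x))$,
I add and subtract $f_n^h(x,\pi^h(x))$ to isolate the ``policy mismatch'' term
$f_n^h(x,\pi^h(x))-f_n^h(x,\hat\pi^h(x))$, and then rewrite
$Q_\pi^h(x,\pi^h(x))-f_n^h(x,\pi^h(x))$ using the Bellman equation
$Q_\pi^h=\cT^h Q_\pi^{h+1}$ together with the definition $\cE^h(f_n,\cdot,\cdot)=f_n^h-\cT^h f_n^{h+1}$. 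Since $\cT^h f_n^{h+1}$ evaluates $f_n^{h+1}$ at its maximizing action (i.e.\ at $V_{f_n}^{h+1}$), the residual of the two Bellman operators telescopes into $\E[V_\pi^{h+1}(x^{h+1})-V_{f_n}^{h+1}(x^{h+1})\mid x,\pi^h(x)]$. Iterating under $\E_\pi[\cdot\mid x^1=x]$ from $h=1$ to $H$ (using $f_n^{H+1}\equiv 0$ and $V_\pi^{H+1}\equiv 0$) yields
\$
V_\pi^1(x)-V_{f_n}^1(x)=-\sum_{h=1}^H\E_\pi\bigl[\cE^h(f_n,x^h,a^h)\,\big|\,x^1=x\bigr]+\sum_{h=1}^H\E_\pi\bigl[f_n^h(x^h,\pi^h(x^h))-f_n^h(x^h,\hat\pi^h(x^h))\,\big|\,x^1=x\bigr].
\$

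I then specialize this identity in two ways. Taking $\pi=\hat\pi$, the policy-mismatch sum vanishes identically, leaving $V_{\hat\pi}^1(x)-V_{f_n}^1(x)=-\sum_h\E_{\hat\pi}[\cE^h(f_n,x^h,a^h)\mid x^1=x]$. Taking $\pi=\pi_*$ gives the analogous expression with a nonzero mismatch sum under $\E_{\pi_*}$. Subtracting the two identities produces exactly the three terms in the lemma's statement.

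The only genuinely delicate point is the bookkeeping in the per-step identity: one has to track that the Bellman operator $\cT^h$ on $f_n^{h+1}$ automatically picks up the greedy action $\hat\pi^{h+1}$, which is what creates the mismatch term against $\pi_*^{h+1}$ but leaves no residual for $\hat\pi$ itself. Once this is handled cleanly, the rest is a standard telescoping argument with no further measure-theoretic or concentration input.
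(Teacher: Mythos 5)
Your proof is correct and follows essentially the same route as the source the paper cites for this lemma (Lemma 3.1 of \citet{jin2021pessimism}), namely the extended value-difference telescoping applied once with $\pi=\pi_*$ and once with $\pi=\hat\pi$ and then subtracted; the paper itself states the lemma without reproducing a proof. The only blemish is notational: for a general policy $\pi$ one has $Q_\pi^h(x,a)=\E\bigl[r^h+V_\pi^{h+1}(x^{h+1})\mid x,a\bigr]$ rather than $Q_\pi^h=\cT^h Q_\pi^{h+1}$ with the optimality operator $\cT^h$, but since your telescoping step actually uses the former (correct) identity and reserves $\cT^h$ only for $f_n^{h+1}$, the argument goes through unchanged.
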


\begin{lemma}\label{lm:epsilon}
Let $\{\epsilon_s\}$ be a sequence of zero-mean conditional $\eta$-sub-Gaussian random variables: $\ln\E[e^{\lambda\epsilon_s}|\cS_{s-1}]\le\lambda^2\eta^2/2$, where $\cS_{s-1}$ represents the history data. We have for $t\ge1$, with probability at least $1-\delta$,
$$
\sum_{s=1}^t \epsilon_i^2\le 2t\sigma^2 + 3\sigma^2\ln(1/\delta).
$$
\end{lemma}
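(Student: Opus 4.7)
The plan is to apply the standard Chernoff method to $\sum_{s=1}^t \epsilon_s^2$, exploiting the fact that the square of an $\eta$-sub-Gaussian random variable is sub-exponential. The key ingredient is the conditional MGF bound
\[
\E\bigl[\exp(\lambda \epsilon_s^2)\,\big|\,\cS_{s-1}\bigr] \le \frac{1}{\sqrt{1-2\lambda\eta^2}}\quad\text{for } 0\le \lambda<\frac{1}{2\eta^2},
\]
which follows from the standard identity $\E[e^{\lambda \epsilon^2}]=\E_{g\sim N(0,1)}\E[e^{\sqrt{2\lambda}\,g\,\epsilon}]$ combined with the sub-Gaussian MGF hypothesis $\E[e^{\mu \epsilon_s}\,|\,\cS_{s-1}]\le e^{\mu^2\eta^2/2}$.

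Next, I would build a supermartingale. Define $M_t = \exp\bigl(\lambda\sum_{s=1}^t\epsilon_s^2 - t\,\phi(\lambda)\bigr)$ where $\phi(\lambda) = \tfrac{1}{2}\ln\bigl(\tfrac{1}{1-2\lambda\eta^2}\bigr)$ upper bounds the log-MGF of each $\epsilon_s^2$ conditional on $\cS_{s-1}$. By tower property, $\E[M_t\,|\,\cS_{t-1}]\le M_{t-1}$, so $\E[M_t]\le 1$. Markov's inequality then yields
\[
\Pb\Bigl(\sum_{s=1}^t \epsilon_s^2 \ge u\Bigr) \le \exp\bigl(-\lambda u + t\,\phi(\lambda)\bigr).
\]

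Finally I would pick $\lambda$ to match the target bound. Choosing $\lambda = 1/(3\eta^2)$ gives $\phi(\lambda) = \tfrac{1}{2}\ln 3 \le 2/3$, so $-\lambda u + t\phi(\lambda) \le \ln\delta$ is implied by $u \ge 2t\eta^2 + 3\eta^2\ln(1/\delta)$. Setting $u$ to this value and inverting gives the claimed bound with probability at least $1-\delta$.

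I do not expect a serious obstacle; the only delicate point is tuning $\lambda$ so that the constants $2$ and $3$ in the statement come out exactly, and verifying the elementary inequality $\tfrac{1}{2}\ln 3 < 2/3$ used above. Everything else is a textbook sub-exponential Chernoff argument, and no extra machinery beyond the sub-Gaussian MGF assumption and Markov's inequality is needed.
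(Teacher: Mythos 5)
Your proof is correct. The conditional MGF bound $\E[e^{\lambda\epsilon_s^2}\mid\cS_{s-1}]\le(1-2\lambda\eta^2)^{-1/2}$ via Gaussian symmetrization is valid, the supermartingale/Markov step is standard, and the constants check out: with $\lambda=1/(3\eta^2)$ one has $\phi(\lambda)=\tfrac12\ln 3\le 2/3$ and $\tfrac{3}{2}\ln 3\le 2$, so $u=2t\eta^2+3\eta^2\ln(1/\delta)$ indeed forces $-\lambda u+t\phi(\lambda)\le\ln\delta$. The paper itself does not prove this lemma but only cites Lemma G.2 of \citet{ye2022corruptionrobust}, which establishes the same tail bound by essentially the same sub-exponential Chernoff argument; your write-up is a complete, self-contained version of that proof (note the statement's $\sigma^2$ should read $\eta^2$, as you implicitly assumed).
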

\begin{proof}
The proof can is presented in Lemma G.2 of \citet{ye2022corruptionrobust}.
\end{proof}

\begin{lemma}[Lemma G.4 of \citet{ye2022corruptionrobust}]\label{lm:empirical_diff_mdp}
Consider a function space $\cF:\cZ\rightarrow\rR$
and filtered sequence $\{z_t,\epsilon_t\}$ in $\cX\times\rR$ so that $\epsilon_t$ is conditional zero-mean $\eta$-sub-Gaussian noise. For $f_*(\cdot):\cZ\rightarrow\rR$, suppose that $y_t=f_*(z_t)+\epsilon_t$ and there exists a function $f_b\in\cF$ such that for any $t\in[T]$,
$
\sum_{s=1}^t|f_*(z_s)-f_b(z_s)| := \sum_{s=1}^t\zeta_s \le \zeta$. If $\hat{f}_t$ is an (approximate) ERM solution for some $\epsilon'\ge0$:
$$
\left(\sum_{s=1}^t(\hat{f}_t(z_s)-y_s)^2/\sigma_s^2\right)^{1/2} \le \min_{f\in\cF_{t-1}} \left(\sum_{s=1}^t(f(z_s)-y_s)^2/\sigma_s^2\right)^{1/2} + \sqrt{t}\epsilon',
$$
with probability at least $1-\delta$, we have for all $t\in[T]$:
\$
\sum_{s=1}^t(\hat{f}_t(z_s)-f_b(z_s))^2/\sigma_s^2 \le& 10\eta^2\ln(2N(\gamma,\cF,\|\cdot\|_{\infty})/\delta) + 5\sum_{s=1}^t|\hat{f_t}(z_s)-f_b(z_s)|\zeta_s/\sigma_s^2\\
&\qquad + 10(\gamma+\epsilon')\big((\gamma+\epsilon')t + \sqrt{tC_1(t,\zeta)}\big),
\$
where $C_1(t,\zeta)=2(\zeta^2 + 2t\eta^2 + 3\eta^2\ln(2/\delta))$.
\end{lemma}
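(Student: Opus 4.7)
\textbf{Proof proposal for Lemma \ref{lm:empirical_diff_mdp}.} The plan is to start from the (approximate) weighted ERM optimality applied with the competitor $f_b\in\cF$, expand the squared losses to isolate a benign ``martingale'' noise term plus a ``corruption'' bias term, and then control the noise term uniformly over $\cF$ by a sup-norm covering argument combined with a sub-Gaussian tail inequality. Throughout, I will write $L_t(f):=\sum_{s=1}^t (f(z_s)-y_s)^2/\sigma_s^2$. Squaring the approximate ERM inequality and using $(a+b)^2 \le a^2 + 2ab + b^2$, I obtain $L_t(\hat f_t) \le L_t(f_b) + 2\sqrt{t}\,\epsilon'\sqrt{L_t(f_b)} + t(\epsilon')^2$. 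Plugging $y_s = f_b(z_s) + (f_*(z_s)-f_b(z_s)) + \epsilon_s$ into $(\hat f_t(z_s)-y_s)^2 - (f_b(z_s)-y_s)^2 = (\hat f_t-f_b)^2(z_s) - 2(\hat f_t-f_b)(z_s)\bigl[\epsilon_s + (f_*-f_b)(z_s)\bigr]$ and rearranging gives the clean decomposition
\$
\sum_{s=1}^t \frac{(\hat f_t(z_s)-f_b(z_s))^2}{\sigma_s^2} \;\le\; 2\underbrace{\sum_{s=1}^t \frac{(\hat f_t-f_b)(z_s)\,\epsilon_s}{\sigma_s^2}}_{\text{noise}} \;+\; 2\underbrace{\sum_{s=1}^t \frac{|\hat f_t-f_b|(z_s)\,\zeta_s}{\sigma_s^2}}_{\text{corruption}} \;+\; \text{ERM slack},
\$
where the corruption term matches (up to the factor $5$) the corresponding term on the right-hand side of the lemma, and the ERM slack is $2\sqrt{t}\epsilon'\sqrt{L_t(f_b)} + t(\epsilon')^2$.

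Next I control the noise term by a covering argument. Let $\cF_\gamma$ be a $(\gamma,\|\cdot\|_\infty)$-cover of $\cF$ of cardinality $N=N(\gamma,\cF,\|\cdot\|_\infty)$, and pick $\bar f\in\cF_\gamma$ with $\|\hat f_t-\bar f\|_\infty\le\gamma$. Splitting $(\hat f_t-f_b)(z_s) = (\bar f - f_b)(z_s) + (\hat f_t-\bar f)(z_s)$ yields
\$
\sum_{s=1}^t \frac{(\hat f_t-f_b)(z_s)\,\epsilon_s}{\sigma_s^2} \;\le\; \max_{f\in\cF_\gamma}\sum_{s=1}^t \frac{(f - f_b)(z_s)\,\epsilon_s}{\sigma_s^2} \;+\; \gamma\sum_{s=1}^t \frac{|\epsilon_s|}{\sigma_s^2}.
\$
For each fixed $f\in\cF_\gamma$, $\{(f-f_b)(z_s)\epsilon_s/\sigma_s^2\}$ is a martingale difference sequence whose conditional sub-Gaussian parameter at step $s$ is $\eta(f-f_b)(z_s)/\sigma_s^2$; a standard sub-Gaussian (Azuma-type) tail bound plus the union bound over $\cF_\gamma$ give, with probability $\ge 1-\delta/2$ and simultaneously for all $f\in\cF_\gamma$,
\$
\sum_{s=1}^t \frac{(f-f_b)(z_s)\,\epsilon_s}{\sigma_s^2} \;\le\; \eta\sqrt{2\ln(2N/\delta)\cdot\sum_{s=1}^t \frac{(f-f_b)^2(z_s)}{\sigma_s^2}} \;\le\; \tfrac14\sum_{s=1}^t \frac{(f-f_b)^2(z_s)}{\sigma_s^2} + 2\eta^2\ln(2N/\delta),
\$
where the last step is AM-GM (using $\sigma_s^2\ge \sigma_s^4/\sigma_s^2$ exploits the normalization $\sigma_s\ge1$). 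Replacing $f$ by $\bar f$ and then by $\hat f_t$ up to $\gamma$-error, the first half of this upper bound can be absorbed into $\tfrac14\sum_s(\hat f_t-f_b)^2(z_s)/\sigma_s^2$ on the left-hand side of the decomposition, with an extra discretization cost of order $\gamma^2 t$ from the cover mismatch.

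Finally I bound the residual discretization/ERM pieces. The tail $\gamma\sum_s|\epsilon_s|/\sigma_s^2 \le \gamma\sqrt{t\sum_s\epsilon_s^2}$ is controlled using Lemma \ref{lm:epsilon}, which gives $\sum_s\epsilon_s^2 \le 2t\eta^2 + 3\eta^2\ln(2/\delta)$ with probability $\ge 1-\delta/2$; the ERM slack $2\sqrt{t}\epsilon'\sqrt{L_t(f_b)}$ is handled by expanding $L_t(f_b) = \sum_s(\epsilon_s + (f_*-f_b)(z_s))^2/\sigma_s^2 \le 2\sum_s\epsilon_s^2 + 2\zeta^2$ (after Cauchy--Schwarz on $\sum_s|\zeta_s|\le\zeta$), which is precisely $C_1(t,\zeta)$. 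Grouping the cover parameter $\gamma$ and the ERM slack $\epsilon'$ together yields the combined factor $(\gamma+\epsilon')$ on the discretization term, producing the stated expression $10(\gamma+\epsilon')((\gamma+\epsilon')t + \sqrt{t\,C_1(t,\zeta)})$. A union bound over the two high-probability events gives total failure probability $\delta$, and the constants $10$ and $5$ come from explicit AM-GM trade-offs. The main obstacle is bookkeeping: making sure the $\tfrac14$ coefficient on the $(f-f_b)^2/\sigma_s^2$ term from the AM-GM step is small enough to absorb \emph{after} one replaces $f\in\cF_\gamma$ by $\hat f_t$, so that the residual $\gamma$-level terms collapse neatly into the single factor $(\gamma+\epsilon')$, and simultaneously tracking how the weighting $1/\sigma_s^2$ interacts with the sub-Gaussian variance proxy (using $\sigma_s\ge1$).
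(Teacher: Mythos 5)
The paper does not actually prove this lemma: its ``proof'' is a one-line pointer to Lemma G.4 of \citet{ye2022corruptionrobust}, so there is no in-paper argument to compare against line by line. That said, your proposal reconstructs what is essentially the standard (and, as far as one can tell from the statement and the way the lemma is invoked in Lemma \ref{lm:Confidence_Radius_mdp}, the intended) argument: square the approximate-ERM inequality against the competitor $f_b$, expand the weighted squared losses to isolate the quadratic term $\sum_s(\hat f_t-f_b)^2(z_s)/\sigma_s^2$ against a martingale noise cross-term and a corruption cross-term, keep the corruption term as is (which is exactly why the bound is stated with $\sum_s|\hat f_t(z_s)-f_b(z_s)|\zeta_s/\sigma_s^2$ left unevaluated on the right-hand side), discretize $\cF$ at scale $\gamma$ to union-bound the noise term, and absorb via AM--GM. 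Your identification of $C_1(t,\zeta)=2(\zeta^2+2t\eta^2+3\eta^2\ln(2/\delta))$ as exactly the high-probability bound on $L_t(f_b)$ (via $\sum_s\zeta_s^2\le\zeta^2$ and Lemma \ref{lm:epsilon}) is the right reading of where that constant comes from, and your constant accounting ($8$ and $4$ before discretization slack, versus $10$ and $5$ in the statement) is consistent.

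Two points deserve tightening. First, the step where you bound $\sum_s(f-f_b)(z_s)\epsilon_s/\sigma_s^2$ by $\eta\sqrt{2\ln(2N/\delta)\sum_s(f-f_b)^2(z_s)/\sigma_s^2}$ for each fixed $f$ in the cover is not a direct application of Azuma--Hoeffding, because the variance proxy on the right is itself random (it depends on the realized $z_1,\dots,z_t$ and the weights); you need a self-normalized martingale inequality (method of mixtures, or Freedman plus peeling over the value of the quadratic variation), which costs at most constants and logarithmic factors but should be named explicitly since the lemma's constants are tight enough that this matters for the bookkeeping you flag. Second, you repeatedly use $\sigma_s\ge1$ (to pass from $1/\sigma_s^4$ to $1/\sigma_s^2$ in the variance proxy, and to drop the weights in bounding $L_t(f_b)$ by $C_1$); this is guaranteed by Algorithm \ref{alg:wi} in the application but is not stated as a hypothesis of the lemma, so you should record it as an assumption. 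Also, your parenthetical ``using $\sigma_s^2\ge\sigma_s^4/\sigma_s^2$'' is vacuous as written; what you mean is $1/\sigma_s^4\le 1/\sigma_s^2$ because $\sigma_s\ge1$. None of these is a fatal gap, but they are the places where a referee of the full proof would push.
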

\begin{proof}
The proof can be seen in Lemma G.4 of \citet{ye2022corruptionrobust}.
\end{proof}

\begin{lemma}[Theorem 1.3 of \citet{tropp2012user}]\label{lm:Matrix Hoeffding Concentration}
For a finite sequence $\{X_i\}_{i\in[n]}$ of independent, random and self-adjoint matrices with dimension $d$, let $\{A_i\}_{i\in[n]}$ be a sequence of fixed self-adjoint matrices. If each random matrix satisfies
\$
\E X_i = 0 \quad X_i^2 \preceq A_i^2 \quad \text{almost~surely},
\$
then, for all $t\ge0$,
\$
\Pb\Big(\lambda_{\max}\big(\sum_{i=1}^n X_i\big)\ge t\Big) \le d\cdot e^{-t^2/(8\sigma^2)},
\$
where $\sigma^2=\|\sum_{i=1}^n A_i^2\|_\op$.
\end{lemma}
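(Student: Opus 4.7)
The plan is to follow the matrix Laplace transform framework due to Ahlswede–Winter / Tropp, which reduces tail bounds on $\lambda_{\max}$ of a sum of independent self-adjoint matrices to bounding a matrix moment generating function. First, I would invoke the matrix Markov inequality: for any $\theta>0$, and $Y=\sum_{i=1}^n X_i$,
\[
\Pb\bigl(\lambda_{\max}(Y)\ge t\bigr)\;\le\; e^{-\theta t}\,\E\,\mathrm{tr}\,e^{\theta Y},
\]
which follows from monotonicity of $x\mapsto e^{\theta x}$, the fact that $\lambda_{\max}(e^{\theta Y})=e^{\theta\lambda_{\max}(Y)}$ for self-adjoint $Y$, and $\lambda_{\max}(M)\le\mathrm{tr}(M)$ for positive semidefinite $M$.

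Next, because the $X_i$ do not commute, I cannot split $e^{\theta\sum_i X_i}$ as a product. Instead I would appeal to the Lieb concavity theorem, and its corollary (often stated as subadditivity of matrix cumulant generating functions): for independent self-adjoint $X_i$,
\[
\E\,\mathrm{tr}\,\exp\Bigl(\sum_{i=1}^n \theta X_i\Bigr)\;\le\;\mathrm{tr}\,\exp\Bigl(\sum_{i=1}^n \log \E\,e^{\theta X_i}\Bigr).
\]
The third and most delicate step is the matrix Hoeffding MGF bound: under the hypotheses $\E X_i=0$ and $X_i^2\preceq A_i^2$ almost surely, one shows
\[
\log\E\,e^{\theta X_i}\;\preceq\;\tfrac{\theta^2}{2}\,A_i^2
\]
(up to the constant that ultimately produces the factor $8$ in the exponent). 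The standard route is a symmetrization argument: if $X_i'$ is an independent copy of $X_i$ and $\varepsilon$ a Rademacher sign, then $\E e^{\theta X_i}\preceq\E\,\mathrm{cosh}(\theta(X_i-X_i'))\preceq\mathrm{cosh}(\theta A_i)\preceq e^{\theta^2 A_i^2/2}$, where the middle step uses the operator convexity of $x\mapsto\cosh(\theta x)$ on the spectrum and the last step uses the entrywise power-series bound $\cosh(s)\le e^{s^2/2}$ applied at the eigenvalues of $A_i$.

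Finally, chaining these three ingredients and using $\mathrm{tr}\,\exp(M)\le d\,\lambda_{\max}(\exp(M))=d\,e^{\lambda_{\max}(M)}$ gives
\[
\Pb\bigl(\lambda_{\max}(Y)\ge t\bigr)\;\le\; d\cdot e^{-\theta t}\cdot e^{\tfrac{\theta^2}{2}\,\bigl\|\sum_i A_i^2\bigr\|_{\op}}\;=\;d\cdot e^{-\theta t+\tfrac{\theta^2\sigma^2}{2}},
\]
and optimizing over $\theta>0$ (choosing $\theta=t/\sigma^2$, up to the constant absorbed by the symmetrization step) yields the stated bound $d\cdot e^{-t^2/(8\sigma^2)}$. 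The main obstacle is the MGF bound in step three — the scalar Hoeffding lemma does not transfer directly, and one must combine symmetrization with operator convexity to avoid non-commutativity issues; steps one, two, and four are routine once the MGF estimate is in hand.
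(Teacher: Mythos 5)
First, a point of reference: the paper contains no proof of this lemma. It is imported verbatim as Theorem 1.3 of Tropp (2012) and used as a black box in the proof of Lemma \ref{lm:minimum eigenvalue condition and condition for W and UW connections}, so the only meaningful comparison is with the proof in the cited source. Your outline reproduces that proof's architecture faithfully: the matrix Laplace transform bound $\Pb(\lambda_{\max}(Y)\ge t)\le e^{-\theta t}\,\E\,\mathrm{tr}\,e^{\theta Y}$, subadditivity of the matrix cumulant generating function via Lieb's concavity theorem, a Hoeffding-type bound on each matrix MGF, and optimization over $\theta$. Steps one, two, and four are sound as stated; for step two one should add that pushing the semidefinite bounds $\log\E e^{\theta X_i}\preceq g(\theta)A_i^2$ into the final estimate relies on monotonicity of $M\mapsto\mathrm{tr}\,e^{M}$ and of $\lambda_{\max}$ under the semidefinite order (the matrix exponential itself is not operator monotone), together with $\lambda_{\max}\big(\sum_i A_i^2\big)=\big\|\sum_i A_i^2\big\|_{\op}$.

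The gap is in step three, which you correctly identify as the delicate one but then paper over. The link $\E\cosh\big(\theta(X_i-X_i')\big)\preceq\cosh(\theta A_i)$ does not follow from ``operator convexity of $\cosh$'': $\cosh$ is neither operator convex nor operator monotone, and even granting $(X_i-X_i')^2\preceq 2X_i^2+2(X_i')^2\preceq 4A_i^2$, one cannot compare the even power series term by term, because $0\preceq B\preceq C$ does not imply $B^k\preceq C^k$ for $k\ge 2$. A semidefinite comparison between functions of two \emph{different} non-commuting matrices is exactly what the transfer rule cannot give you; the transfer rule only yields $\cosh(\theta Y)\preceq e^{\theta^2 Y^2/2}$ because both sides are functions of the \emph{same} matrix $Y$. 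The comparison with $A_i^2$ has to be deferred to a stage of the argument where only operator monotonicity of $\log$, or monotonicity of $\mathrm{tr}\exp$ and $\lambda_{\max}$, is invoked --- this is precisely where the real work in Tropp's Lemma sits, and also where the first link $\E e^{\theta X_i}\preceq \E e^{\theta(X_i-X_i')}$ needs justification beyond a naive matrix Jensen (the exponential is not operator convex either). Finally, the constants you wave away are not cosmetic: the correct bookkeeping through symmetrization gives $\log\E e^{\theta X_i}\preceq 2\theta^2 A_i^2$, not $\tfrac{\theta^2}{2}A_i^2$, so the optimizer is $\theta=t/(4\sigma^2)$ and the exponent is $-t^2/(8\sigma^2)$; the two factors of $2$ lost to symmetrization are the entire reason the theorem states an $8$ rather than a $2$. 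With step three repaired along these lines, your sketch coincides with the cited proof.
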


\end{document}